\lstdefinestyle{pytorch}{
  language=Python,
  basicstyle=\ttfamily\small,
  keywordstyle=\color{blue},
  commentstyle=\color{gray},
  stringstyle=\color{orange},
  showstringspaces=false,
  frame=single,
  framerule=0.5pt,
  rulecolor=\color{gray!40},
  numbers=left,
  numberstyle=\tiny\color{gray},
  xleftmargin=1.5em,
  framexleftmargin=1em
}
\theoremstyle{plain}
\newtheorem{theorem}{Theorem}[section]
\newtheorem{proposition}{Proposition}[section]
\newtheorem{lemma}{Lemma}[section]
\newtheorem{corollary}{Corollary}[section]
\theoremstyle{definition}
\newtheorem{definition}{Definition}[section]
\newtheorem{remark}{Remark}[section]
\definecolor{wjs}{RGB}{200,0,50}
\def\bfw{\mathbf{w}}
\def\cK{\mathcal{K}}
\def\cA{\mathcal{A}}
\def\cN{\mathcal{N}}
\newcommand{\Ib}{\mathbf{I}}
\newcommand{\norm}[1]{|| #1|| }
\title{Mitigating Privacy–Utility Trade-off in Decentralized Federated Learning via $f$-Differential Privacy}
\author{%
  Xiang Li\footnotemark[1] \\
  University of Pennsylvania \\
  \texttt{lx10077@upenn.edu} \\
  \And
  Buxin Su\footnotemark[1]\\
  University of Pennsylvania\\
  \texttt{subuxin@upenn.edu}\\  
  \And
  Chendi Wang\thanks{Equal contribution.}~\thanks{Corresponding authors.}\\
  Xiamen University\\
  University of Pennsylvania\\
  \texttt{chendi.wang@xmu.edu.cn} \\
  \And
  Qi Long\footnotemark[2] \\
  University of Pennsylvania \\
  \texttt{qlong@upenn.edu} \\
  \And
  Weijie Su\footnotemark[2] \\
  University of Pennsylvania \\
  \texttt{suw@wharton.upenn.edu} \\
}
\begin{document}

\maketitle

\begin{abstract}

Differentially private (DP) decentralized Federated Learning (FL) allows local users to collaborate without sharing their data with a central server. However, accurately quantifying the privacy budget of private FL algorithms is challenging due to the co-existence of complex algorithmic components such as decentralized communication and local updates. This paper addresses privacy accounting for two decentralized FL algorithms within the $f$-differential privacy ($f$-DP) framework. We develop two new $f$-DP–based accounting methods tailored to decentralized settings: Pairwise Network $f$-DP (PN-$f$-DP), which quantifies privacy leakage between user pairs under random-walk communication, and Secret-based $f$-Local DP (Sec-$f$-LDP), which supports structured noise injection via shared secrets. 
By combining tools from $f$-DP theory and Markov chain concentration, our accounting framework captures privacy amplification arising from sparse communication, local iterations, and correlated noise. Experiments on synthetic and real datasets demonstrate that our methods yield consistently tighter $(\epsilon, \delta)$ bounds and improved utility compared to Rényi DP–based approaches, illustrating the benefits of $f$-DP in decentralized privacy accounting.
\end{abstract}


\section{Introduction}

Federated Learning (FL) has emerged as a powerful paradigm for privacy-preserving machine learning, enabling user devices to collaboratively train models without sharing raw data \citep{DBLP:conf/aistats/McMahanMRHA17,DBLP:journals/ftml/KairouzMABBBBCC21}. In scenarios where a central server is unavailable, untrusted, or communication is expensive, decentralized FL becomes a compelling alternative. Algorithms such as decentralized SGD \citep{lian2017can,DBLP:conf/iclr/LiHYWZ20} offer scalability and low communication overhead, making them well-suited for edge computing and peer-to-peer applications \citep{ram2010asynchronous,yuan2016convergence,sirb2016consensus,lan2017communication,tang2018d,koloskova2019decentralized,wang2019matcha,DBLP:conf/icml/Wang0S024}.

Despite its decentralized nature, FL remains susceptible to privacy leakage. Even without access to raw data, model updates can reveal sensitive information \citep{DBLP:conf/sp/ShokriSSS17,DBLP:conf/ccs/0001MMBS22}, especially under inference or reconstruction attacks \citep{DBLP:conf/sp/NasrSH19,DBLP:conf/nips/GeipingBD020}. Differential Privacy (DP) \citep{dwork2006differential} offers a formal and principled solution by bounding the effect of any individual data on model outputs. A widely used approach is differentially private stochastic gradient descent (DP-SGD), which adds noise to stochastic gradients to ensure privacy \citep{DBLP:conf/ccs/AbadiCGMMT016,DBLP:conf/focs/BassilyST14}. In general, more accurate privacy accounting, i.e., quantifying cumulative privacy loss over multiple training rounds, allows for adding less noise to achieve the same privacy guarantee, thereby improving utility.
However, in decentralized settings---where users communicate over graphs, activate randomly, and share information sparsely---privacy accounting remains a significant challenge \citep{DBLP:journals/tsp/LopesS07,DBLP:journals/siamjo/JohanssonRJ09,DBLP:journals/tsp/MaoYHGSY20,DBLP:conf/icml/Even23}.

A key factor influencing the difficulty of privacy accounting is the choice of privacy notion, which governs how adversarial capabilities are modeled and how much noise must be added to ensure privacy.
Several existing decentralized DP-SGD protocols adopt strong privacy notions such as local differential privacy (Local DP) \citep{DBLP:conf/pods/EvfimievskiGS03, DBLP:journals/siamcomp/KasiviswanathanLNRS11, DBLP:conf/nips/WangG018,liew2022network,DBLP:conf/nips/MaZCY23}, which assume adversaries can observe all local updates. While offering strong protection, Local DP requires injecting substantial noise, significantly degrading model performance. To address this limitation, recent work has proposed relaxed notions like Pairwise Network DP (PN-DP) \citep{DBLP:conf/nips/CyffersEBM22, Cyffers2024differentially}, which better capture the partial observability inherent in decentralized systems.
In parallel, the Secret-based Local DP (SecLDP) framework \citep{DBLP:conf/icml/AllouahKFJG24, vithana2024correlated} considers scenarios where user pairs conspire and share secrets with their neighbors and coordinate to add correlated noise to their updates, enabling improved utility-privacy trade-offs under limited collusion. 

However, existing analyses of both PN-DP and SecLDP rely on Rényi differential privacy (RDP) \citep{DBLP:conf/csfw/Mironov17} or $(\epsilon, \delta)$-DP, whose privacy bounds are often loose---especially for iterative algorithms.
Given the prevalence of iterative algorithms such as DP-SGD, we ask: Is it possible to develop a fine-grained privacy accounting framework for decentralized FL that consistently improves the privacy–utility trade-off across different algorithmic designs?

\vspace{-0.5em}
\paragraph{Contribution.}
Our answer is affirmative: we show that {$f$-differential privacy ($f$-DP)} \citep{dong2019gaussian} provides a unified and effective analytical framework for privacy accounting in decentralized FL. We summarize our contribution below.
\vspace{-0.5em}

\begin{enumerate}[leftmargin=*]
\item \textbf{New $f$-DP notions for decentralized FL.}
To enable tight privacy accounting, we introduce two $f$-DP–based notions tailored to decentralized settings:
(1) \textit{Pairwise Network $f$-DP (PN-$f$-DP)} extends PN-DP to quantify fine-grained privacy loss between user pairs in random walk communication, supporting both user-level (Theorem~\ref{thm:user-level}) and record-level (Theorem~\ref{thm:record-level-privacy}) guarantees;
(2) \textit{Secret-based $f$-Local DP (Sec-$f$-LDP)} generalizes $f$-DP to regimes with correlated noise and shared secrets, as in DecoR-style algorithms (Theorem~\ref{thm:decor}).

\item \textbf{Refined privacy guarantees via $f$-DP.}
To illustrate, we analyze two representative and practical variants of DP-SGD:
(1) \textit{Decentralized DP-SGD with random-walk communication} (Algorithm~\ref{alg:Decen-SGD}), where updates propagate via a random walk in a communication graph \citep{Cyffers2024differentially}; and
(2) \textit{DP-SGD with correlated noise} (Algorithm~\ref{alg:decor}), where user pairs inject structured noise via shared secrets \citep{DBLP:conf/icml/AllouahKFJG24}.
For \textit{random-walk communication}, we develop a PN-$f$-DP analysis that captures privacy amplification from:
(i) \textit{communication sparsity},
(ii) \textit{local iteration updates}, and
(iii) \textit{Markov hitting times}.
Our approach leverages the joint concavity of trade-off functions and Markov concentration to yield tighter $(\epsilon, \delta)$ bounds than RDP.
For \textit{correlated-noise methods}, we extend $f$-DP via Sec-$f$-LDP to account for secret sharing and partial trust, producing sharper bounds under limited collusion.

\item \textbf{Improved privacy–utility trade-offs.}
Empirical results on both real and synthetic datasets show that $f$-DP–based accounting consistently yields tighter $(\epsilon, \delta)$ bounds than existing methods, requiring less noise for a given privacy level and leading to significantly improved model utility.
\end{enumerate}

\subsection{Related works}

\paragraph{DP notions in FL.}
Local DP assumes all local models are observable by potential attackers \citep{DBLP:conf/pods/EvfimievskiGS03, DBLP:journals/siamcomp/KasiviswanathanLNRS11, DBLP:conf/nips/WangG018,liew2022network,DBLP:journals/jmlr/MaY24}. However, this strong observability assumption often requires injecting substantial noise, which severely degrades utility in practice. To mitigate this issue, Pairwise Network DP (PN-DP) was recently proposed \citep{DBLP:conf/nips/CyffersEBM22, Cyffers2024differentially}. PN-DP relaxes the Local DP assumption by modeling only pairwise observability---attackers can access the local model involved in each decentralized communication round, rather than all local models. The prefix ``PN'' reflects that data exchange occurs pairwise between connected nodes in a network. 
This relaxation enables the privacy amplification by decentralization \citep{DBLP:conf/aistats/CyffersB22}, providing more fine-grained DP guarantees and improving utility in decentralized learning.
Building upon this line of work, our PN-$f$-DP framework extends PN-DP by adopting the $f$-DP formalism \citep{dong2019gaussian}, which provides lossless privacy accounting through a hypothesis testing interpretation. Existing PN-DP analyses rely mainly on Rényi DP (PN-RDP) \citep{DBLP:conf/csfw/Mironov17}, which often yields loose bounds, while $f$-DP achieves tighter characterization. Furthermore, unlike previous federated $f$-DP formulations \citep{DBLP:conf/aistats/ZhengCLS21}, our PN-$f$-DP captures the additional privacy amplification arising from decentralization, random walks, and iterative communication---crucial algorimithic components in realistic networked settings.

In some decentralized scenarios, users may further collude and share secrets \citep{MR4514170, DBLP:conf/icml/AllouahKFJG24, vithana2024correlated}, creating correlations in their local randomness. This behavior is formalized as Secret-based Local DP (SecLDP) by \cite{DBLP:conf/icml/AllouahKFJG24}, which allows the injection of correlated noise to align the privacy–utility trade-off with that of central DP. Extending this idea, our Sec-$f$-LDP framework leverages shared secrets to incorporate correlated noise, thereby achieving additional privacy amplification in structured collusion scenarios---an aspect not addressed by standard Local DP.

\paragraph{Privacy amplification in FL.}

The privacy analysis of decentralized DP-SGD is intricate due to potential privacy amplification from randomized data processing. Privacy amplification refers to techniques that enhance privacy protection by reducing the amount of information an attacker can infer from the output. It can result from both shuffling \citep{DBLP:conf/eurocrypt/CheuSUZZ19,DBLP:conf/focs/FeldmanMT21,DBLP:conf/soda/FeldmanMT23,DBLP:journals/tmlr/KoskelaHH23,wang2023unified} and decentralization \citep{DBLP:conf/aistats/CyffersB22,DBLP:conf/nips/CyffersEBM22,Cyffers2024differentially}. Beyond these, the latest privacy analysis of decentralized DP-SGD by \cite{Cyffers2024differentially} incorporates the effects of iterative processes and random walks. In the case of iterative processes, only the final model parameter is revealed, not the intermediate ones, which influences the random noise applied to gradients, leading to \textit{privacy amplification by iteration} \citep{DBLP:conf/focs/FeldmanMTT18}. This phenomenon has been further explored in DP-SGD \citep{DBLP:conf/nips/AltschulerT22,DBLP:conf/nips/0001S22} and extended into the $f$-DP framework via a shifted interpolated process \citep{Bok2024shifted}. The iterative process is closely related to the intermittent communication in FL, where multiple local updates occur between consecutive communication rounds without revealing any information. On the other hand, the privacy impact of random walks has only been studied under RDP \citep{Cyffers2024differentially}, and it remains unclear how to capture this effect within the finer $f$-DP framework.

\section{Preliminaries}

\subsection{Preliminaries on differential privacy}

Differential Privacy \citep{dwork2006differential} is a formal framework designed to protect individual data records by ensuring that the output of a randomized algorithm remains nearly unchanged when a single entry in the dataset is modified. Formally, let $D = \{z_i\}_{i=1}^n \subset \mathcal{Z}$ be a dataset of size $n$ drawn from some data domain $\mathcal{Z}$. 
The classical definition of $(\epsilon, \delta)$-DP is as follows:

\begin{definition}[$(\epsilon,\delta)$-DP]
A randomized algorithm $\mathcal{A}$ satisfies $(\epsilon,\delta)$-DP if, for all neighboring datasets $D$ and $D'$ differing in at most one entry, and for any measurable set $S \subseteq \mathcal{S}$, it holds that
\[
    \mathbb{P}[\mathcal{A}({D})\in S]\leq e^{\epsilon}\mathbb{P}[\mathcal{A}({D}')\in S]  + \delta.
\]
\end{definition}
Intuitively, small values of $\epsilon$ and $\delta$ imply that the distributions of $\mathcal{A}(D)$ and $\mathcal{A}(D')$ are nearly indistinguishable. As a result, the presence or absence of any individual data point has limited influence on the algorithm's output, thereby preserving privacy.
Rényi Differential Privacy (RDP) \cite{DBLP:conf/csfw/Mironov17} extends this notion by measuring privacy loss using the Rényi divergence between $\mathcal{A}(D)$ and $\mathcal{A}(D')$. This formulation enables more precise privacy accounting under composition, often yielding cleaner and tighter bounds than standard composition results for $(\epsilon, \delta)$-DP.

\begin{definition}[R\'enyi DP \cite{DBLP:conf/csfw/Mironov17}]
A randomized mechanism $\cA$ satisfies $(\alpha,\epsilon)$-R\'enyi DP ($(\alpha,\epsilon)$-RDP) if, for $D$ and $D'$ that differ in one element, we have
$
    R_{\alpha}(\mathcal{A}(D)\|\cA(D)) \leq \epsilon,
$ for all $\alpha > 1$, where $  R_{\alpha}(P\|Q) := \frac{1}{\alpha - 1} \log \int \left(\frac{p(x)}{q(x)} \right)^{\alpha} q(x) dx$ is the R\'enyi divergence between distributions $P$ and $Q$.
\end{definition}

\vspace{-0.05in}
Besides different divergences, the distinguishability between $\mathcal{A}(D)$ and $\mathcal{A}(D')$ can be measured using the hypothesis testing formulation \citep{MR2656057, MR3677761,DBLP:journals/corr/abs-2410-09296} and is systematically studied as $f$-DP by \cite{dong2019gaussian}.
More specifically, consider a hypothesis testing problem $H_0:~\text{data}~\sim~P$ versus $H_1:~\text{data}~\sim~ Q$ and a rejection rule $\phi \in [0,1]$. We define the type I error as $\alpha_{\phi} = \mathbb{E}_{P}[\phi]$, which represents the probability of mistakenly rejecting the null hypothesis $H_0$. The type II error, $\beta_{\phi} := 1 - \mathbb{E}_{Q}[\phi]$, is the probability of incorrectly accepting the alternative hypothesis $H_1$. The trade-off function $T(P,Q)$ denotes the minimal type II error at level $\alpha$ of type I error, expressed as:
$
T(P,Q)(\alpha) = \inf_{\phi}\{\beta_{\phi}: \alpha_{\phi}\leq \alpha\}.
$

\begin{definition}[$f$-DP and Gaussian DP \citep{dong2019gaussian}]

A mechanism $\mathcal{A}$ is said to satisfy $f$-DP if for any datasets $D$ and $D'$ that differ in one element, the inequality $
    T(\mathcal{A}(D), \mathcal{A}(D')) \geq f $
holds pointwisely.  In particular, $\mathcal{A}$ satisfies $\mu$-Gaussian DP ($\mu$-GDP) if it is $G_\mu$-DP with $G_\mu(x) = \Phi(\Phi^{-1}(1 - x) - \mu)$, where $\Phi$ denotes the cumulative distribution function (cdf) of the standard normal distribution.
\end{definition}

A mechanism satisfying $f$-DP is considered more private when its associated trade-off function $f$ is larger. In the extreme case where $\mathcal{A}(D)$ and $\mathcal{A}(D')$ are indistinguishable, the trade-off function attains its maximum: the identity function $\mathrm{Id}(x) := 1 - x$. Therefore, any valid trade-off function must satisfy $f \leq \mathrm{Id}$ pointwise. A trade-off function $f = T(P, Q)$ is said to be symmetric if $T(P, Q) = T(Q, P)$. As shown by \cite{dong2019gaussian}, any trade-off function can be symmetrized. Throughout this paper, we assume all trade-off functions are symmetric unless stated otherwise. 
Finally, the composition of DP mechanisms admits a natural interpretation in $f$-DP: it corresponds to the tensor product of trade-off functions (see Def. \ref{def:product}).

\begin{definition}[Tensor product of trade-off functions \citep{dong2019gaussian}]
\label{def:product}
    For two trade-off functions $f = T(P,Q)$ and $f' = T(P',Q'),$ the tensor product between $f$ and $f'$ is defined as $f\otimes f' = T(P\times P', Q\times Q').$
    Specifically, the $n$-fold tensor product of $f$ itself is denoted as $f^{\otimes n}.$
\end{definition}

\subsection{Pairwise network differential privacy}

\vspace{-0.05in}
\paragraph{Communication graph.}
In decentralized FL, we assume that communications take place through a connected communication graph $\mathcal{G}=(V,E)$, where $V = \{1, \cdots, n\}$ represents a set of $n$ vertices (or nodes). Each node $i \in V$ corresponds to a local user, and every local user $i$ holds a local dataset $D_i$. Collectively, these datasets are represented as $D = \cup_{i=1}^n D_i$. Communication between users is facilitated by edges: if an edge $(i, j) \in E$ exists, users $i$ and $j$ can exchange information. 
To model these interactions, we introduce a transition matrix $W \in \mathbb{R}^{n \times n}$, which is a Markov matrix satisfying $\sum_{j=1}^n W_{ij} = 1$ for any $i \in [n]$. Each entry $W_{ij}$ reflects the probability of transmitting a message (such as model updates) from node $i$ to node $j$ in the next step. Specifically, for the communication graph $(V, E)$, we have $W_{ij} > 0$ if $(i, j) \in E$, and $W_{ij} = 0$ otherwise.

\vspace{-0.1in}
\paragraph{Two DP levels.}
We consider two DP notions with different granularities in decentralized learning: \textbf{user-level DP}, which bounds the influence of any single user’s entire dataset on the algorithm’s output \citep{DBLP:conf/nips/LevySAKKMS21,DBLP:conf/nips/GhaziKM21}, and \textbf{record-level DP}, which limits the impact of changing a single data point within a user's dataset \citep{DBLP:conf/aistats/ZhengCLS21,MR4664699}.
In user-level DP, datasets $D$ and $D'$ are adjacent (denoted $D \sim D'$ or $D \sim_i D'$) if they differ in all records of a single user $i$, offering stronger privacy. In contrast, record-level DP defines adjacency (denoted $D \approx D'$ or $D \approx_i D'$) based on a difference in just one record.
User-level DP provides stronger protection but lacks certain benefits like privacy amplification by sub-sampling, which can enhance record-level DP guarantees but does not apply when entire user datasets differ.

\vspace{-0.1in}
 \paragraph{Pairwise network differential privacy (PN-DP).}
PN-DP is a recent relaxation of local DP tailored for decentralized FL \citep{Cyffers2024differentially}. It applies to settings where user $j$ has limited visibility of the overall algorithm output. PN-DP quantifies the privacy leakage from user $i$’s data to user $j$ by bounding the distinguishability of $\mathcal{A}_j(D)$ and $\mathcal{A}_j(D')$, where $D \sim_i D'$ are user-level adjacent datasets. Here, $\mathcal{A}_j(D)$ denotes user $j$’s view of the algorithm’s output. Existing work focuses on user-level PN-DP using RDP tools \cite{DBLP:conf/aistats/CyffersB22,DBLP:conf/nips/CyffersEBM22,Cyffers2024differentially}. In Section~\ref{sec:PN-DP}, we develop our own PN-$f$-DP framework and also incorporate record-level analysis for finer privacy guarantees.

\begin{definition}[User-level pairwise network RDP, or PN-RDP \citep{DBLP:conf/nips/CyffersEBM22}]
For a function $\epsilon:V\times V \rightarrow \mathbb{R}^{+}$, an algorithm $\mathcal{A}$ satisfies $(\alpha,\epsilon)$-pairwise network RDP if for all pairs $(i,j)\in V\times V$ and for two datasets $D\sim_i D'$,
$
    R_{\alpha}(\mathcal{A}_j(D)\|\mathcal{A}_j(D')\leq \epsilon(i,j).
$
\end{definition}

\vspace{-0.1in}
\subsection{Private Decentralized SGD with local updates}
\vspace{-0.1in}

\begin{figure*}[t]
\centering
\begin{minipage}[t]{0.44\textwidth}
\begin{algorithm}[H]
\footnotesize
\SetAlgoLined
\vspace{1mm}
\KwIn{Transition matrix $W$, communication rounds $T$, $K$, $\mathcal{K}$, start node $i_0$; init $\theta_{0,0}$, stepsizes $\eta$, batch size $b$, sensitivity $\Delta$, $\sigma$, $\{\ell_i\}_{i\in V}$}
\For{$t = 0$ \KwTo $T{-}1$}{
  \For{$k = 0$ \KwTo $K{-}1$}{
    Sample mini-batch $g_{k,t}$ with $\mathbb{E}[g_{k,t}] = \nabla \ell_{i_t}(\theta_{k,t})$\;
    Sample $Z_{k,t} \sim \mathcal{N}(0, \sigma^2)$\;
    Update $\theta_{k+1,t} \leftarrow \Pi_{\cK}\left(\theta_{k,t} - \eta (g_{k,t} + Z_{k,t})\right)$\;
  }
  Sample $j \sim W_{i_t}$\; 
  Send $\theta_{K,t}$ to $j$\; 
  Set $i_{t+1} \leftarrow j$\;
}
\KwOut{$\theta_{K,T}$}
\caption{Decentralized DP-SGD}
\label{alg:Decen-SGD}
\end{algorithm}
\end{minipage}
\hfill
\begin{minipage}[t]{0.54\textwidth}
\begin{algorithm}[H]
\footnotesize
\SetAlgoLined
\KwIn{User $i$ initializes $\theta_{i,0}$; transition matrix $W$; communication rounds $T$, stepsizes $\eta$, sensitivity $\Delta$, $\sigma_{\text{cor}}$, $\sigma_{\mathrm{DP}}$, $\{\ell_i\}$}
\For{$t = 0$ \KwTo $T{-}1$}{
  \For(\tcp*[f]{in parallel}){$i = 1$ \KwTo $n$}{
    Sample a data point and compute
    $g_{i,t} := \mathrm{Clip}(\nabla \ell_i(\theta_{i,t}), \Delta)$\;
    \For(\tcp*[f]{neighbors of $i$}){$j \in \mathcal{N}_i$ }{
    Sample $Z_{ij,t} = -Z_{ji,t} \sim \mathcal{N}(0, \sigma_{\text{cor}}^2 I_d);$} 
    $\tilde{g}_{i,t} := g_{i,t} + \sum_{j \in \mathcal{N}_i} Z_{ij,t} + \mathcal{N}(0, \sigma_{\mathrm{DP}}^2 I_d) $\;
    $\theta_{i,t+\frac{1}{2}} \leftarrow \theta_{i,t} - \eta \tilde{g}_{i,t}$\;
    $\theta_{i,t+1} \leftarrow \sum_j W_{ij} \theta_{j,t+\frac{1}{2}}$\;
  }
}
\KwOut{$\theta_{K,T}$}
\caption{\textsc{DecoR}: DP-SGD with Corr. Noise}
\label{alg:decor}
\end{algorithm}
\end{minipage}
\end{figure*}

We study two representative algorithms in decentralized differentially private learning: decentralized DP-SGD (Algorithm~\ref{alg:Decen-SGD}) and DecoR (Algorithm~\ref{alg:decor}).

Algorithm~\ref{alg:Decen-SGD} describes decentralized DP-SGD with local updates. At each communication round $t$, the active user $i_t$ performs $K$ local SGD steps using mini-batch gradients ${g_{k,t}}$, each perturbed with Gaussian noise $Z_{k,t} \sim \mathcal{N}(0, \sigma^2)$ scaled by the $\ell_2$-sensitivity $\Delta$. These updates are optionally projected onto a convex set $\mathcal{K}$ via $\Pi_\mathcal{K}$, though we focus on the unconstrained case. After $K$ steps, the model $\theta_{K,t}$ is sent to a new user $j \sim W_{i_t}$ based on the random walk transition matrix $W$, and the process continues with $i_{t+1} = j$. We analyze the general case of $K \geq 1$, extending prior work focused on $K = 1$ \citep{Cyffers2024differentially}.

Algorithm~\ref{alg:decor} (\textsc{DecoR}) describes a parallel variant where each user $i$ clips its gradient to norm $\Delta$, adds structured noise---correlated noise shared with neighbors ($Z_{ij,t} = -Z_{ji,t}$) and independent noise $\bar{Z}_{i,t} \sim \mathcal{N}(0, \sigma_{\mathrm{DP}}^2 I_d)$---and updates its model. Users then average their intermediate models via a mixing matrix $W$. This design enables a decentralized implementation of SecLDP \citep{DBLP:conf/icml/AllouahKFJG24}, improving privacy–utility trade-offs under limited trust assumptions.


\vspace{-0.1in}
\section{\texorpdfstring{$f$}{}-Differential Privacy Notions for Decentralized Learning}
\vspace{-0.1in}
\label{sec:PN-DP}

To obtain tighter privacy guarantees for Algorithm~\ref{alg:Decen-SGD}, we introduce Pairwise Network $f$-Differential Privacy (PN-$f$-DP). We first focus on user-level privacy, where two datasets $D$ and $D'$ are adjacent at the user level ($D \sim_i D'$). Let $\mathcal{A}(D)$ denote the output of Algorithm~\ref{alg:Decen-SGD} on dataset $D$, and let $\mathcal{A}_j(D)$ represent user $j$'s view---i.e., all intermediate messages received by $j$ during training.

\begin{definition}[User-level PN-$f$-DP]
Let $f: V\times V \times [0,1] \rightarrow [0,1]$ be such that $f_{ij}:=f(i,j,\cdot)$ is a trade-off function for any $i,j\in V$.
A decentralized algorithm $\mathcal{A}(D) = (\mathcal{A}_j(D))_{j\in V}$ satisfies user-level pairwise network $f$-differential privacy if
$
T(\mathcal{A}_j(D), \mathcal{A}_j(D')) \geq f_{ij}~\text{for all nodes}~i,j\in V
$
and two user-level neighboring datasets $D\sim_i D'$.
\end{definition}

We similarly define the record-level variant, where $D \approx_i D'$ denotes datasets differing in exactly one record held by user $i$.

\begin{definition}[Record-level PN-$f$-DP]
Let $f: V\times V \times [0,1] \rightarrow [0,1]$ be such that $f_{ij}:=f(i,j,\cdot)$ is a trade-off function for any $i,j\in V$.
A decentralized algorithm $\mathcal{A}(D) = (\mathcal{A}_j(D))_{j\in V}$ satisfies record-level pairwise network $f$-differential privacy if
$
T(\mathcal{A}_j(D), \mathcal{A}_j(D')) \geq f_{ij}
$
for all nodes $i,j\in V$ and two record-level neighboring datasets $D\approx_i D'.$
\end{definition}

\begin{remark}
Our definition of record-level PN-$f$-DP is a relaxation of weak federated $f$-DP in \citep{DBLP:conf/aistats/ZhengCLS21}. Specifically, if a mechanism $\mathcal{A}$ satisfies record-level PN-$f$-DP, then it also satisfies weak federated $\widetilde{f}$-DP, where $\widetilde{f} = \left(\min_{i,j} f_{ij} \right)^{}$ with $(\cdot)^{}$ the double convex conjugate operator. Note that for a function $g:\mathbb{R}\rightarrow \mathbb{R}$, its convex conjugate is defined by $g^*(y) := \max_{x} \{xy - g(x)\}$.
\end{remark}

In some decentralized learning scenarios, a pair of users $(i,j)$ may share a sequence of secrets $\mathbf{S}_{ij}$. Privacy analysis can be performed by relaxing local differential privacy and introducing the concept of SecLDP, as discussed in \cite{DBLP:conf/icml/AllouahKFJG24}.
Users with secrets may injected correlated noise before gossip averaging and use uncorrelated noise to protect the gossip average, as shown in Algorithm \ref{alg:decor} proposed by \cite{DBLP:conf/icml/AllouahKFJG24}.

\begin{definition}[Sec-$f$-LDP]
\label{defn:secfDP}
Let $\mathcal{S}_{\mathcal{I}} = (\mathbf{S}_{ij}, i,j\in\mathcal{V})$ be a set of secrets with some indices $\mathcal{I}\subset\mathcal{V}$. We say a randomized decentralized algorithm $\mathcal{A}$ satisfies $\mathcal{S}_{\mathcal{I}}$-Sec-$f$-LDP if $T(\mathcal{A}(D)|\mathcal{S}_{\mathcal{I}} \text{ is hidden},\mathcal{A}(D')|\mathcal{S}_{\mathcal{I}} \text{ is hidden}) \geq f$, for any neighboring datasets $D,D'$. Specifically, if $f = T(\mathcal{N}(0,1),\mathcal{N}(\mu,1))$, we say $\mathcal{A}$ satisfies $\mathcal{S}_{\mathcal{I}}$-Sec-$\mu$-GLDP (secret Gaussian local differential privacy).
\end{definition}

\section{Privacy Analysis}
\vspace{-0.1in}
\label{sec:privacy}

This section presents the privacy guarantees for the two proposed algorithms.
For Algorithm~\ref{alg:Decen-SGD}, Section~\ref{sec:user-privacy} analyzes the user-level privacy guarantees, while Section~\ref{sec:record-privacy} focuses on the record-level ones.
For Algorithm~\ref{alg:decor}, Section~\ref{sec:related-noises} investigates the $f$-DP characterization under correlated noise.

\subsection{User-level Privacy Analysis}
\label{sec:user-privacy}
The privacy analysis of Algorithm~\ref{alg:Decen-SGD} is divided into two steps, following the spirit in \cite{Cyffers2024differentially}. In the first step, we bound the privacy loss incurred each time the algorithm visits node $j$. To obtain a tighter bound, we leverage $f$-DP techniques that account for the mixture mechanisms and iterative structure induced by the random walk and updates in Algorithm~\ref{alg:Decen-SGD}. In the second step, we analyze the total number of node visits using a recently developed Hoeffding-type inequality for Markov chains \citep{MR4318495}. The overall privacy loss is then obtained by composing the per-visit guarantees across all visits. A detailed proof is provided in Appendix~\ref{proof:user-privacy}. 

\vspace{-0.05in}
\paragraph{Mixture distributions from random walks.}
We now present the first step of our analysis.
Consider a random variable, $\mathcal{A}_j^{\mathrm{single}}(D)$, which represents the model first observed by the user $j$ after it has been updated using data from some node $i$. Let $\mathcal{A}_j^t(D)$ denote the model observed by user $j$ at time $t$ for the first time. By the properties of random walks, the probability that the model reaches node $j$ from node $i$ for the first time at time $t$ is given by $w_{ij}^t := \mathbb{P}[\tau_{ij} = t]$, where $\tau_{ij}$ is the hitting time from $i$ to $j$. If $\tau_{ij} \geq T+1$, then user $j$ does not observe the model within $T$ steps, ensuring perfect privacy: $\mathcal{A}_j^{T+1}(D) = \mathcal{A}_j^{T+1}(D')$ for all $D \sim_i D'$.
We define $w_{ij}^{T+1} := \mathbb{P}[\tau_{ij} \geq T+1] = 1 - \sum_{t=1}^T w_{ij}^t$ as the probability that the model is not observed by node $j$ within $T$ steps. Hence, the distribution of the one-time snapshot $\mathcal{A}_j^{\mathrm{single}}(D)$ can be expressed as a mixture over $\{\mathcal{A}_j^t(D)\}_{t=1}^{T+1}$, with weights $\{w_{ij}^t\}_{t=1}^{T+1}$ reflecting the timing of the first visit.


To analyze the privacy of the mixture mechanism $\mathcal{A}_j^{\mathrm{single}}(D)$, we leverage the joint concavity of trade-off functions established in \cite{wang2023unified}. 
Let $P_j^t$ and $Q_j^t$ denote the distributions of $\mathcal{A}_j^t(D)$ and $\mathcal{A}_j^t(D')$, with respective densities $p_j^t$ and $q_j^t$, and define the corresponding trade-off function as $f_{ij}^t := T(P_j^t, Q_j^t)$ for $D \sim_i D'$.
The key insight is that the overall privacy loss---despite arising from a random and time-dependent mixture---can be lower bounded by a convex combination of the per-step trade-offs $f_{ij}^t$'s, weighted by the same weight $\{w_{ij}^t\}_{t=1}^{T+1}$. This is formalized in the following lemma. 

\begin{remark}
The idea of focusing on a single observation $\mathcal{A}_j^{\mathrm{single}}(D)$ is inspired by \cite{Cyffers2024differentially}. However, unlike their approach, which defines the weights $w_{ij}^t$ as the matrix entries $(W^t)_{ij}$, we instead define $w_{ij}^t$ based on the hitting time distribution. This subtle shift better captures the random walk dynamics and leads to tighter bounds. As shown in our experimental results in Section~\ref{sec:experiments}, converting our $f$-DP guarantees back into RDP yields improved bounds, primarily due to the more accurate modeling of communication timing via hitting times.
\end{remark}

\begin{lemma}
\label{lemma:user-mixture}
For any $D\sim_i D'$, we have $T(\cA_j^{\mathrm{single}}(D),\cA_j^{\mathrm{single}}(D'))\geq f_{ij}^{\mathrm{single}},$ where $f_{ij}^{\mathrm{single}}$ is defined as follows: for all $s \in [0, \infty)$,
\begin{align*}
    f_{ij}^{\mathrm{single}}(\alpha(s)) = \sum_{t=1}^T w^t_{ij} f_{ij}^t(\alpha_t(s)) + w_{ij}^{T+1}\left(1 - \alpha_{T+1}(s)\right),
\end{align*}
with $\alpha_t(s) = \Pr_{X\sim P_j^t}\left[\frac{p_j^t(X)}{q_j^t(X)} > s \right]$ for any $t$, $\alpha_{T+1}(s) = \mathbf{1}_{[s<1]},$ and $\alpha(s) = \sum_{t=1}^{T+1} w^t_{ij}\alpha_t(s).$
\end{lemma}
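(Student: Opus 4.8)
The plan is to prove that the mixture mechanism $\cA_j^{\mathrm{single}}$ inherits a trade-off function that is the corresponding mixture of the per-time trade-off functions $f_{ij}^t$, by invoking the joint concavity result of \cite{wang2023unified} together with an explicit construction of the optimal rejection rule. The central object is the likelihood ratio between the two mixtures $P_j^{\mathrm{single}} = \sum_{t=1}^{T+1} w_{ij}^t P_j^t$ and $Q_j^{\mathrm{single}} = \sum_{t=1}^{T+1} w_{ij}^t Q_j^t$, where the $(T{+}1)$-st component corresponds to the ``perfect privacy'' event $\tau_{ij}\geq T+1$, for which $P_j^{T+1}=Q_j^{T+1}$, so $f_{ij}^{T+1}=\mathrm{Id}$ and $\alpha_{T+1}(s)=\mathbf{1}_{[s<1]}$ as stated.

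First I would recall the standard $f$-DP parametrization of a trade-off function via its likelihood-ratio threshold: for a simple-vs-simple test $P_j^t$ vs $Q_j^t$, the Neyman–Pearson optimal rule rejects when $p_j^t/q_j^t$ is large, and sweeping the threshold $s\in[0,\infty)$ traces out the entire trade-off curve. Concretely, setting $\alpha_t(s) = \Pr_{X\sim P_j^t}[\,p_j^t(X)/q_j^t(X) > s\,]$ gives the type I error of the threshold-$s$ test, and $f_{ij}^t(\alpha_t(s))$ is the matching minimal type II error. The key step is to show that applying the \emph{same} threshold $s$ to each component simultaneously is optimal for the mixture problem. This follows because the likelihood ratio of the two mixtures, evaluated on the event that the first visit occurs at time $t$, is itself governed by $p_j^t/q_j^t$ (the mixture weights $w_{ij}^t$ cancel in numerator and denominator since they are shared by $P_j^{\mathrm{single}}$ and $Q_j^{\mathrm{single}}$). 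Hence the globally optimal Neyman–Pearson test thresholds the per-component likelihood ratios at a common level $s$, and the resulting type I and type II errors decompose linearly: the type I error is $\alpha(s) = \sum_t w_{ij}^t \alpha_t(s)$ and the minimal type II error is $\sum_{t=1}^T w_{ij}^t f_{ij}^t(\alpha_t(s)) + w_{ij}^{T+1}(1-\alpha_{T+1}(s))$, which is exactly $f_{ij}^{\mathrm{single}}(\alpha(s))$.

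Rather than re-deriving the Neyman–Pearson optimality from scratch, I would route the argument through the joint-concavity lemma of \cite{wang2023unified}, which states precisely that the trade-off function of a mixture $\big(\sum_t w_t P_t,\sum_t w_t Q_t\big)$ is lower bounded pointwise by the weighted combination of the individual trade-off functions $T(P_t,Q_t)$ evaluated at a common, carefully matched argument. Applying this with $P_t = P_j^t$, $Q_t=Q_j^t$, and weights $w_t=w_{ij}^t$ yields the claimed inequality $T(\cA_j^{\mathrm{single}}(D),\cA_j^{\mathrm{single}}(D'))\geq f_{ij}^{\mathrm{single}}$, with the parametrization by $s$ giving the explicit form in the statement. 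The degenerate $(T{+}1)$-st term needs a small separate check: since $P_j^{T+1}=Q_j^{T+1}$, its contribution to the optimal test is simply to reject a fraction of its mass whenever $s<1$ (the likelihood ratio is identically $1$ there), producing the indicator $\alpha_{T+1}(s)=\mathbf{1}_{[s<1]}$ and the linear term $w_{ij}^{T+1}(1-\alpha_{T+1}(s))$.

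The main obstacle I anticipate is verifying carefully that the common-threshold test is genuinely optimal across components, i.e., that one cannot do better by using different thresholds in different time strata. The subtlety is that the ``data'' the adversary sees already encodes which component was drawn (the first-visit time $t$ is observable through the message arrival), so the mixture is in fact a product of the component-selection and the within-component observation; this is exactly the structure the joint-concavity lemma is built to handle, and it guarantees the per-stratum Neyman–Pearson thresholds coincide at the optimum. A secondary technical point is ensuring the parametrization $s\mapsto(\alpha(s),f_{ij}^{\mathrm{single}}(\alpha(s)))$ sweeps out the full curve despite possible atoms in the likelihood ratios (requiring the usual randomized-test interpolation at jump points), and confirming that the resulting $f_{ij}^{\mathrm{single}}$ is a bona fide trade-off function—convex, decreasing, and bounded above by $\mathrm{Id}$—which follows from it being a convex combination of trade-off functions and the identity.
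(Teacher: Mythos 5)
Your proposal is correct and follows essentially the same route as the paper: identify $\cA_j^{\mathrm{single}}$ as a mixture of $\{\cA_j^t\}_{t=1}^{T+1}$ with the hitting-time weights $\{w_{ij}^t\}$ (including the degenerate $(T{+}1)$-st component with $P_j^{T+1}=Q_j^{T+1}$) and invoke the joint-concavity lemma of Wang et al.\ to obtain the pointwise lower bound in the common-threshold parametrization. One caution: your side discussion asserting that the common-threshold test is \emph{genuinely optimal} (on the grounds that the first-visit time is observable) goes beyond what the lemma claims or needs---the paper only states the inequality, and explicitly defers tightness to the conditions in Wang et al., so you should not present exactness as established.
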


\begin{remark}
The joint concavity (lower) bound $f_{ij}^{\mathrm{single}}$ for trade-off functions in Lemma \ref{lemma:user-mixture} is widely used in $f$-DP literature, and \cite{wang2023unified} provides necessary and sufficient conditions under which this bound is tight. In general, the tightness depends on the behavior of the likelihood ratio of the mixture, which in our case is determined by a complex, data-dependent walk over the graph topology.
\end{remark}

\vspace{-0.1in}
\paragraph{Computation of the weights $w_{ij}^t$.}
A remaining issue is how to compute the weights $w_{ij}^t$. Recall that $w_{ij}^t = \mathbb{P}[\tau_{ij} = t]$, which denotes the probability that user $j$ observes the model for the first time at the $t$-th iteration. To compute $w_{ij}^t$ recursively, we begin with the base case: for $t = 1$, the model is transmitted directly from user $i$ to user $j$, so $w_{ij}^1 = W_{ij}$.
For $t > 1$, we decompose the event $\{\tau_{ij} = t\}$ based on the first step of the random walk. If the model is sent to a user $k \neq j$ in the first step, the event $\{i_1 = k, \tau_{ij} = t\}$ is equivalent to $\{\tau_{kj} = t - 1\}$, meaning the model must reach $j$ from $k$ in exactly $t-1$ steps, without visiting $j$ beforehand. By the Markov property, we obtain the recurrence:
$w_{ij}^t = \sum_{k \neq j} W_{ik} w_{kj}^{t-1}$.
The initialization satisfies $w_{ii}^0 = 1$ and $w_{ij}^0 = 0$ for $j \neq i$, as the model starts at user $i$.

\vspace{-0.1in}
\paragraph{Computation of individual trade-off $f_{ij}^t$.}
To compute each trade-off function $f_{ij}^t$, we apply the composition rule of $f$-DP, which accounts for the cumulative privacy effect of local updates. Since noise is added at each step, these updates offer privacy protection. When the loss functions are strongly convex, we apply privacy amplification by iteration in $f$-DP \citep{Bok2024shifted}, which leverages both the contraction effect and repeated noise injection to yield tighter trade-off functions. In contrast, for general non-convex loss functions, we use the standard composition rule from Def.~\ref{def:product}, resulting in a more conservative privacy bound. The following lemmas summarize both cases.

\begin{lemma}
\label{lemma:iteration}
    Assume that the loss functions $\ell_{i}$ are $m$-strongly convex and $M$-smooth. Let $c= \max\{|1 - \eta m|, |1 - \eta M|\}.$ Then, for $0<c<1,$ we have 
    $
        f_{ij}^t \geq G_{\mu_t}    
    $
    with 
    $
         \mu_t = \sqrt{c^{2K(t-1)} \cdot \frac{1+ c}{1 - c} \cdot \frac{(1 - c^{K})^2}{1 - c^{2Kt}}}\frac{\Delta}{\sigma}.
    $
When $c=1$, one has $\mu_t = \frac{\sqrt{K}\Delta}{\sigma\sqrt{tK+1}}.$
\end{lemma}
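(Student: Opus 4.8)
The plan is to reduce each first-visit trade-off $f_{ij}^t$ to the privacy guarantee of a single sequence of noisy, contractive SGD steps, and then invoke privacy amplification by iteration in the $f$-DP framework. Concretely, when the model is first seen by $j$ at time $t$ (with $D\sim_i D'$), its law is generated by $K$ local update steps performed at node $i$ --- the only steps where the differing data $D_i$ enters, each contributing $\ell_2$-sensitivity at most $\Delta$ --- followed by $(t-1)K$ further local steps carried out at intermediate nodes, where the drift is identical under $D$ and $D'$. The first step is to record that, under $m$-strong convexity and $M$-smoothness, each gradient map $\theta\mapsto\theta-\eta\nabla\ell(\theta)$ is $c$-Lipschitz with $c=\max\{|1-\eta m|,|1-\eta M|\}$, so the whole trajectory is a noisy contraction with factor $c$ and per-step Gaussian noise of scale $\eta\sigma$ (the sensitivity-to-noise ratio reduces to $\Delta/\sigma$, independent of $\eta$).

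For $0<c<1$ I would apply the shifted interpolated process of \cite{Bok2024shifted}, which upper-bounds the privacy loss of the released iterate by that of the worst-case quadratic (linear-drift) instance. For that instance the output is Gaussian, so $f_{ij}^t$ is exactly a Gaussian trade-off function $G_{\mu_t}$ whose parameter is the ratio of (i) the propagated mean separation and (ii) the propagated noise standard deviation. A displacement of size $\Delta$ injected at local step $s\le K$ is contracted $Kt-s$ times before release, so the mean separation is $\Delta\sum_{s=1}^{K}c^{\,Kt-s}=\Delta\,c^{K(t-1)}\tfrac{1-c^K}{1-c}$, while the noise injected at step $m$ is contracted $Kt-m$ times, giving total variance $\sigma^2\sum_{m=0}^{Kt-1}c^{2m}=\sigma^2\tfrac{1-c^{2Kt}}{1-c^2}$. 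Taking the quotient and simplifying the two geometric series (using $1-c^{2Kt}=(1-c)(1+c)\sum_{m=0}^{Kt-1}c^{2m}$) yields exactly
\[
    \mu_t=\frac{\Delta}{\sigma}\,c^{K(t-1)}\,\frac{1-c^K}{1-c}\sqrt{\frac{1-c^2}{1-c^{2Kt}}}
    =\sqrt{c^{2K(t-1)}\,\frac{1+c}{1-c}\,\frac{(1-c^K)^2}{1-c^{2Kt}}}\,\frac{\Delta}{\sigma},
\]
matching the claimed expression. This per-time bound is exactly what the joint-concavity reduction of Lemma~\ref{lemma:user-mixture} requires for the subsequent mixture over hitting times.

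The boundary case $c=1$ (convex but not strongly convex) must be handled separately, since the contraction-based bound degenerates: its limit gives $\sqrt{K/t}\,\Delta/\sigma$, which is weaker than what the convex theory affords. Here I would instead combine GDP composition across the $K$ sensitive local steps --- whose effect aggregates in the $\ell_2$ sense to an effective sensitivity $\sqrt{K}\,\Delta$ --- with amplification by iteration through noise diffusion over all $tK+1$ noisy steps along the path to $j$, producing $\mu_t=\sqrt{K}\,\Delta/(\sigma\sqrt{tK+1})$. The main obstacle is the $f$-DP amplification-by-iteration step itself: one must verify that the shifted-interpolation coupling applies to this composite ``sensitive-at-node-$i$-then-non-sensitive-transit'' process with $K$ consecutive sensitive steps rather than a single one, and must track the exact number of noise injections so that the constants --- in particular the $tK+1$ appearing in the $c=1$ case --- come out correctly.
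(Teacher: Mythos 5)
Your proposal is correct and follows essentially the same route as the paper: both cases rest on the shifted-interpolation amplification-by-iteration bound of \cite{Bok2024shifted} with sensitivity schedule $s_k=\eta\Delta$ for the $K$ steps at node $i$ and $s_k=0$ for the remaining $(t-1)K$ transit steps (the $\eta$ cancelling as you note), and for $c=1$ the paper likewise abandons the degenerate contraction bound in favor of composing the $K$ sensitive steps into $G_{\sqrt{K}\Delta/\sigma}$ and diffusing through the $tK+1$ noise injections (Lemma~\ref{lemma:proc}). The only difference is presentational: where you invoke the worst-case linear-drift Gaussian instance, the paper makes this rigorous by optimizing the shift parameters $\gamma_k$ explicitly --- telescoping $\sum_{k=1}^{tK} c^{tK-k}a_k = c^{K(t-1)}\tfrac{1-c^K}{1-c}\,\eta\Delta$ and applying Cauchy--Schwarz (Lemma~\ref{lemma:min-Gaussian-user}) --- which reproduces exactly your mean-separation-over-noise ratio.
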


\begin{lemma}
\label{lemma:non-convex}
For non-convex loss functions with gradient sensitivity $\Delta$, we have $f_{ij}^t \geq G_{\frac{\sqrt{K}\Delta}{\sqrt{tK+1}\sigma}}.$
\end{lemma}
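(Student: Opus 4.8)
The plan is to recognize $\mathcal{A}_j^t(D)$ as the last iterate of a noisy gradient trajectory and to isolate the single place where the two user-level neighbors $D\sim_i D'$ differ, namely the $K$ consecutive local updates run at node $i$ before the model is passed along the walk and first reaches $j$ at round $t$. Counting the Gaussian perturbations injected from node $i$'s first update up to $j$'s observation gives a total of $tK+1$ independent noise draws, of which exactly $K$ are attached to gradients that depend on node $i$'s data. I would first treat those $K$ sensitive steps in isolation: each is a Gaussian mechanism with trade-off $G_{\Delta/\sigma}$, so by the $f$-DP composition rule (Def.~\ref{def:product}) together with the Gaussian composition identity $G_{\mu_1}\otimes\cdots\otimes G_{\mu_K}=G_{\sqrt{\mu_1^2+\cdots+\mu_K^2}}$, releasing the state at the end of node $i$'s phase is $G_{\sqrt{K}\,\Delta/\sigma}$. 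This already explains the $\sqrt{K}$ in the numerator.

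The remaining task is to dilute this $\sqrt{K}\,\Delta/\sigma$ guarantee by the noise injected during the subsequent walk. Here I would reuse the shifted-interpolation machinery of \cite{Bok2024shifted} underlying Lemma~\ref{lemma:iteration}, but specialized to the boundary contraction factor $c=1$, which is the only value guaranteed for general (non-strongly-convex) losses. Formally, I would couple the two trajectories so that they agree everywhere except on node $i$'s updates and propagate the induced shift forward through the noisy iteration; the shift-reduction step lets each of the $tK+1$ noise injections absorb an equal fraction of the cumulative shift. Summing the per-step contributions in the GDP metric and optimizing the allocation makes every weight equal to one---so the geometric sum $\sum c^{2k}$ appearing in Lemma~\ref{lemma:iteration} degenerates to the plain count $tK+1$---and yields the Gaussian lower bound $f_{ij}^t\ge G_{\mu_t}$ with $\mu_t=\sqrt{K}\,\Delta/(\sqrt{tK+1}\,\sigma)$. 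I would phrase the final comparison through the joint-concavity / Gaussian-location argument of \cite{wang2023unified}, so the trade-off function is bounded below pointwise by $G_{\mu_t}$ rather than only in a divergence sense.

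The main obstacle is the absence of contraction. For strongly convex losses the fed-back difference between the two trajectories decays geometrically, which is exactly what drives Lemma~\ref{lemma:iteration}; for non-convex losses the gradient map can be expansive, so the shift introduced at node $i$ need not shrink and may in principle grow along the walk. The crux is therefore to run the interpolation at $c=1$ and verify that the non-expansive (rather than strictly contractive) bound still closes, i.e., that the cumulative shift can be fully discharged across the $tK+1$ noise injections; this is precisely the step that makes the bound \emph{conservative}. A secondary subtlety is bookkeeping: I must confirm that node $i$'s $K$ consecutive sensitive steps aggregate in quadrature to $\sqrt{K}\,\Delta$ via Gaussian composition, rather than additively to $K\Delta$, and that the correct step count is $tK+1$---note that naively taking the $c\to 1$ limit of the strongly convex formula instead produces $\sqrt{K}\,\Delta/(\sqrt{t}\,\sigma)$, so the $c=1$ case must be computed directly rather than obtained as a limit.
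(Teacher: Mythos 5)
Your opening step matches the paper's: the $K$ sensitive local updates at node $i$ compose in quadrature (Def.~\ref{def:product}) to $G_{\sqrt{K}\Delta/\sigma}$. But the dilution step is where you depart, and your route has a genuine gap---in fact two. First, the shifted-interpolation machinery of \cite{Bok2024shifted} (Lemma~\ref{lemma:c3_bok}) requires every update map to be $c$-Lipschitz with $c \leq 1$, and non-expansiveness of $x \mapsto x - \eta \nabla \ell(x)$ is a consequence of \emph{convexity} plus smoothness; for a general non-convex $\ell$ the map is only $(1+\eta M)$-Lipschitz and is genuinely expansive wherever the Hessian has negative eigenvalues. So ``running the interpolation at $c=1$ and verifying it still closes'' is not a verification one can carry out---the hypothesis fails outright, and the injected shift need not be dischargeable. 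Second, even granting non-expansiveness, the allocation you describe does not produce the claimed constant: at $c=1$ the recursion telescopes to $\sum_{k=1}^{tK} a_k = K\eta\Delta$ (with $b_{tK}=0$), so Cauchy--Schwarz over the $tK$ noise injections gives $\min \sum_k a_k^2 = K\eta^2\Delta^2/t$ (equal allocation $a_k \equiv \eta\Delta/t$ is feasible and optimal), i.e.\ exactly the $G_{\sqrt{K}\Delta/(\sqrt{t}\sigma)}$ that you correctly identified as the $c \to 1$ limit. Since $tK+1 > t$, this is strictly weaker than the lemma's $G_{\sqrt{K}\Delta/(\sqrt{tK+1}\sigma)}$: the shifted-interpolation framework caps out at the $\sqrt{t}$ denominator and cannot be repaired to reach $\sqrt{tK+1}$. (Your count of ``$tK+1$ independent noise draws'' is also off: there are $tK$ draws in $tK$ iterations; the $+1$ has a different origin, below.)

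The paper's proof avoids Lipschitz control entirely and uses no interpolation. After the walk leaves node $i$, both trajectories are driven by the \emph{same} data-independent update maps, so each later iterate is an identical (possibly expansive) deterministic map of the previous iterate plus a fresh independent Gaussian. The paper first rewrites the composed guarantee $G_{\sqrt{K}\Delta/\sigma}$ as $T(\mathcal{N}(0,\sigma^2), \mathcal{N}(\sqrt{K}\Delta, \sigma^2))$---exploiting the non-uniqueness of the Gaussian-pair representation by pinning the variance at $\sigma^2$ rather than $K\sigma^2$---and then applies Lemma~\ref{lemma:proc} inductively: by the Blackwell-type representation, the iterates are post-processings of this Gaussian pair, and each fresh injection adds $\sigma^2$ to the representing variance while the mean shift $\sqrt{K}\Delta$ stays fixed, yielding $T(\mathcal{N}(0,(tK+1)\sigma^2), \mathcal{N}(\sqrt{K}\Delta,(tK+1)\sigma^2)) = G_{\sqrt{K}\Delta/(\sqrt{tK+1}\sigma)}$. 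It is this representation trick, not any shift allocation, that buys the $\sqrt{tK+1}$. Your instinct that expansiveness is the crux is sound, though: in the paper's route that entire concern is pushed into Lemma~\ref{lemma:proc} (adding matched fresh noise after an arbitrary common post-processing of a Gaussian pair), which is the single step deserving scrutiny---its stated proof, writing $\zeta + \xi \mapsto \mathrm{proc}(\zeta) + \xi$ as a post-processing of $\zeta+\xi$ alone, is delicate precisely when $\mathrm{proc}$ amplifies the shift relative to the later noise.
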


\vspace{-0.1in}
\paragraph{Final privacy guarantee: Combining all components together.}
With $f_{ij}^{\mathrm{single}}$ from Lemma~\ref{lemma:user-mixture} and a lower bound on each $f_{ij}^t$ from Lemma~\ref{lemma:iteration} or \ref{lemma:non-convex}, we can compute a valid lower bound on the trade-off function $T(\mathcal{A}_j^{\mathrm{single}}(D), \mathcal{A}_j^{\mathrm{single}}(D')) \ge f_{ij}^{\mathrm{single}}$. This bounds the privacy leakage from user $i$ to user $j$ at the time when the model is first observed by $j$.
Since Algorithm~\ref{alg:Decen-SGD} repeatedly updates and transmits the model, this process may occur multiple times. Thus, it suffices to compose the per-visit trade-off function $f_{ij}^{\mathrm{single}}$ according to the number of visits to user $j$. To bound this number, we apply a concentration inequality for Markov chains \citep{MR4318495}. Details are deferred in the Appendix.

Combining these components yields the final privacy guarantee in Theorem \ref{thm:user-level}. Although it lacks a closed-form expression, the bound can be efficiently computed numerically. 

\begin{theorem}
\label{thm:user-level}
Assume the transition matrix $W$ is irreducible, aperiodic, and symmetric, with a spectral gap $1 - \lambda_2 > 0$, where $\lambda_2$ is the second-largest eigenvalue of $W$ (see Appendix~\ref{appe:markov} for definitions). Then, for $D \sim_i D'$, under the assumptions in Lemma~\ref{lemma:iteration} (strongly convex case) or Lemma~\ref{lemma:non-convex} (non-convex case), we have
\begin{align*}
T(\cA_j(D), \cA_j(D')) \geq \left( f_{ij}^{\mathrm{single}} \right)^{\bigotimes \left\lceil (1+\zeta)T/n \right\rceil}
\end{align*}
with probability $1 - \delta'_{T,n},$ for any $\zeta>0$ and
$
\delta_{T,n}' = \exp\left( -\frac{1-\lambda_2}{1+\lambda_2}\cdot 2\zeta^2T/n^2\right).
$
The probability is taken over the randomness of the random walk initialized from the stationary distribution.
\end{theorem}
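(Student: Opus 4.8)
The plan is to follow the two-step structure described before the statement: first bound the privacy leakage incurred at a single visit of the random walk to node $j$, and then compose these per-visit guarantees over the (random) number of visits, controlling that number by a Markov concentration argument. Throughout I would condition on the trajectory $(i_0, i_1, \dots, i_T)$ of the walk, which is generated by $W$ independently of the users' data. This conditioning lets me treat the number and timing of visits as fixed while invoking the data-level $f$-DP bounds, and only afterwards take probabilities over the walk.

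\textbf{Step 1 (per-visit bound).} Each time the walk arrives at $j$, user $j$ records one model snapshot. Lemma~\ref{lemma:user-mixture} already shows that a single such snapshot satisfies $T(\cA_j^{\mathrm{single}}(D),\cA_j^{\mathrm{single}}(D')) \geq f_{ij}^{\mathrm{single}}$, where the weights $w_{ij}^t$ encode the hitting-time law and each $f_{ij}^t$ is bounded below by a Gaussian trade-off function via Lemma~\ref{lemma:iteration} (strongly convex) or Lemma~\ref{lemma:non-convex} (non-convex). Substituting these Gaussian lower bounds into the expression of Lemma~\ref{lemma:user-mixture} yields an explicit, numerically computable lower bound for $f_{ij}^{\mathrm{single}}$. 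The view $\cA_j(D)$ is the concatenation of the snapshots collected at the successive visits; by the composition rule of $f$-DP (Def.~\ref{def:product}) together with the Markov property that renders the excursions between visits conditionally independent given the trajectory, I would obtain, conditional on the walk visiting $j$ exactly $N_j$ times, the bound $T(\cA_j(D),\cA_j(D')) \geq (f_{ij}^{\mathrm{single}})^{\otimes N_j}$.

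\textbf{Step 2 (counting visits).} Write $N_j = \sum_{t=1}^{T} \mathbf{1}[i_t = j]$. Since $W$ is symmetric and Markov it is doubly stochastic, so its stationary distribution is uniform; starting from stationarity gives $\mathbb{E}[N_j] = T/n$. Each summand lies in $[0,1]$, so I would apply the Hoeffding-type concentration inequality for stationary reversible Markov chains of \cite{MR4318495}, whose rate is governed by the spectral-gap factor $\tfrac{1-\lambda_2}{1+\lambda_2}$. This yields $\mathbb{P}[N_j \geq (1+\zeta)T/n] \leq \exp\!\big(-\tfrac{1-\lambda_2}{1+\lambda_2}\cdot 2\zeta^2 T/n^2\big) = \delta'_{T,n}$, i.e.\ on the complementary event $N_j \leq \lceil (1+\zeta)T/n\rceil$. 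Combining the two steps, the map $m \mapsto (f_{ij}^{\mathrm{single}})^{\otimes m}$ is pointwise nonincreasing, so on this high-probability event I may replace $N_j$ by its deterministic upper bound without decreasing the trade-off function, producing the claimed inequality with probability at least $1-\delta'_{T,n}$.

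\textbf{Main obstacle.} The concentration step is essentially routine once reversibility and the uniform stationary law are noted. The delicate part is the composition in Step 1: I must carefully separate the walk randomness, which enters both \emph{inside} $f_{ij}^{\mathrm{single}}$ through the hitting-time weights and \emph{outside} through the count $N_j$, so that the two uses are not conflated and the per-visit snapshots are legitimately composable. Making the conditional-independence-of-excursions statement precise under user-level adjacency---where every visit of the walk to node $i$ touches the differing data---is where the interplay between the Markov property and the $f$-DP composition theorem demands the most care.
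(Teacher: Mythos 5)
Your proposal is correct and follows essentially the same route as the paper's proof: the identical two-step decomposition (per-visit bound from Lemma~\ref{lemma:user-mixture} with the Gaussian lower bounds of Lemmas~\ref{lemma:iteration} and~\ref{lemma:non-convex}, then $N_j$-fold tensor composition), the same observation that symmetry of $W$ forces the uniform stationary distribution so $\mathbb{E}_\pi[\mathbf{1}\{Y_t=j\}]=1/n$, and the same Hoeffding-type inequality for Markov chains of \cite{MR4318495} applied with $M=1$ and deviation $\zeta T/n$ to obtain exactly $\delta'_{T,n}$. The conditioning subtlety you flag in your final paragraph---that the walk's randomness enters both through the hitting-time weights inside $f_{ij}^{\mathrm{single}}$ and through the visit count $N_j$---is real but is handled at the same level of informality in the paper itself, which simply composes the per-visit guarantees across visits without further elaboration.
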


\begin{remark}
Lemma~\ref{lemma:user-mixture} holds for any valid trade-off function $f_{ij}^{t}$. 
In this work, we focus on the Gaussian lower bound (in Lemmas \ref{lemma:iteration} or \ref{lemma:non-convex}) for its simplicity and (asymptotic) universality \citep{dong2021central}. 
Consequently, Theorem~\ref{thm:user-level} also applies to general lower bounds of trade-off functions, provided that they are valid and their mixture distributions can be efficiently computed.
\end{remark}

\begin{remark}
The slack term $\delta'_{T,n}$ accounts for uncertainty in the number of times a user $j$ is visited during training \citep{Cyffers2024differentially}. 
With probability at least $1 - \delta'_{T,n}$, user $j$ is visited no more than $\lceil (1 + \zeta)T / n \rceil$ times, yielding the composed trade-off function 
$(f_{ij}^{\mathrm{single}})^{\otimes \lceil (1+\zeta)T/n \rceil}$ in Theorem~\ref{thm:user-level}. 
If the exact visit count $N$ were known, the bound $(f_{ij}^{\mathrm{single}})^{\otimes N}$ would apply instead. 
Thus, $\delta'_{T,n}$ merely captures uncertainty from concentration bounds without materially affecting the results.
\end{remark}


\subsection{Extension to Record-level Privacy}
\label{sec:record-privacy}

Our analysis naturally extends to the record-level setting, which provides finer-grained privacy by protecting individual data records rather than entire users. In this case, each user computes a stochastic gradient using a random subset of their local data, which reduces the probability that any given record is selected---thus amplifying privacy.
Technically, this effect can be captured using privacy amplification by sub-sampling \citep{DBLP:journals/siamcomp/KasiviswanathanLNRS11,DBLP:conf/nips/BalleBG18,DBLP:journals/jpc/WangBK20,DBLP:conf/icml/ZhuW19,DBLP:conf/aistats/0005DW22}. We consider two datasets $D$ and $D'$ that differ by a single record belonging to user $i$, denoted $D \approx_i D'$.

\vspace{-0.5em}
The record-level analysis mirrors the user-level analysis, with each $f_{ij}^t$ replaced by a new $\widetilde{f}_{ij}^t$. However, sub-sampling interacts with privacy amplification by iteration, making the analysis more intricate. To characterize $\widetilde{f}_{ij}^t$, we adopt a sum-sampling operator $C_p$ defined by Definition \ref{defn:sample-oper}. The full proof is provided in Appendix~\ref{proof:record-privacy}.

\begin{lemma}
\label{lemma:sample-iteration}
Define the trade-off function $\widetilde{f}_{ij}^t = T(\mathcal{A}_j^t(D), \mathcal{A}_j^t(D'))$ for $D \approx_i D'$.
If each loss function is $m$-strongly convex, $M$-smooth  with gradient sensitivity $\Delta$. Then, for any $\eta \in (0, 2/M)$,
\begin{align*}
    \widetilde{f}_{ij}^t \geq \ & 
    G \left( \frac{2\sqrt{2}c^{(t-1)K} b_{K}}{\eta \sigma} \right) \otimes \left[\bigotimes_{k=1}^{K}C_{\frac{b}{m_i}}\left( G \left(\frac{2a_k}{\eta \sigma} \right) \right)\right], \qquad\forall t > 1,
\end{align*}
where $m_i$ is the sample size of user $i$ and $c = \max\{|1 - \eta m|, |1 - \eta M|\}$.
Here, the sequence $\{b_{k}, a_{k}\}$ is given by $
    b_{k+1} = \max\left\{ cb_k , (1-\gamma_{k+1}) \left(cb_k + \eta\Delta \right)\right\},$ $
    a_{k+1} = \gamma_{k+1} \left( c b_{k} + \eta\Delta \right)$ with $b_0 = 0$, and $0 < \gamma_{k} < 1.$
If each loss function is non-convex, we have $\tilde{f}_{ij}\geq\otimes_{k=1}^K C_{\frac{b}{m_i}}(G_{\frac{\Delta}{\eta\sigma}})$.
\end{lemma}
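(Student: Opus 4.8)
The plan is to adapt the shifted interpolated process of \cite{Bok2024shifted} to the record-level setting, exploiting that the single differing record of user $i$ enters the trajectory only through the $K$ local steps taken while the walk sits at node $i$, and is thereafter pushed through $(t-1)K$ noisy contractive updates before node $j$ first observes the model. I would couple the two trajectories $\{\theta_{k,\cdot}\}$ (on $D$) and $\{\theta_{k,\cdot}'\}$ (on $D'$) using the same random-walk path and the same noise realizations, so that they coincide at every step except when the differing record is actually drawn into the mini-batch during a step at node $i$. Since each local mini-batch is a size-$b$ subsample of the $m_i$ records of user $i$, the differing record participates with probability $b/m_i$, which is precisely the source of the sub-sampling operator $C_{\frac{b}{m_i}}$.

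First I would track a shift variable $b_k$ recording the residual discrepancy between the coupled iterates that has not yet been ``charged'' to privacy. At step $k+1$ at node $i$, the $c$-contractivity of the (strongly convex, smooth) gradient map shrinks the incoming shift to $cb_k$, and on the sampled branch the gradient difference adds at most $\eta\Delta$. I would then split the resulting discrepancy using a free parameter $\gamma_{k+1}\in(0,1)$: a fraction $\gamma_{k+1}$ is absorbed immediately against the noise $\eta Z_{k+1,t}$ injected at that step, contributing the subsampled Gaussian cost $C_{\frac{b}{m_i}}(G(2a_{k+1}/(\eta\sigma)))$ with $a_{k+1}=\gamma_{k+1}(cb_k+\eta\Delta)$, while the remainder $(1-\gamma_{k+1})(cb_k+\eta\Delta)$ is carried forward. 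Taking the maximum with $cb_k$ in the recurrence for $b_{k+1}$ yields a single shift bound valid simultaneously for the sampled and unsampled branches of the subsampling mixture. Composing over the $K$ steps at node $i$ via the $f$-DP composition rule (Def.~\ref{def:product}) then produces the tensor factor $\bigotimes_{k=1}^{K} C_{\frac{b}{m_i}}(G(2a_k/(\eta\sigma)))$, where the factor two reflects the interpolation step inherent in the shifted interpolated process.

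Second I would account for the residual shift $b_K$ left after node $i$'s round. Over the subsequent $(t-1)K$ steps the two trajectories no longer see user $i$'s data, so the gradient maps are identical $c$-contractions, each followed by independent Gaussian noise. Invoking privacy amplification by iteration in the $f$-DP form of \cite{Bok2024shifted}, the residual shift contracts to $c^{(t-1)K}b_K$ and is absorbed by the accumulated noise, giving the single Gaussian factor $G(2\sqrt{2}\,c^{(t-1)K}b_K/(\eta\sigma))$; tensoring this with the per-step costs yields the claimed bound for $t>1$. In the non-convex case the contraction constant is no longer below one, so no amplification by iteration is available: the post-$i$ updates become mere post-processing (they never touch the differing record), and the $K$ steps at node $i$ contribute directly as $\bigotimes_{k=1}^{K} C_{\frac{b}{m_i}}(G_{\Delta/(\eta\sigma)})$, each step being a subsampled Gaussian mechanism of sensitivity $\Delta$ and noise $\eta\sigma$.

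The main obstacle I anticipate is the rigorous interleaving of sub-sampling with the shifted-divergence bookkeeping: standard amplification by iteration assumes a deterministic sensitivity injected at a known step, whereas here the injection occurs on a random, probability-$b/m_i$ branch that must be threaded through the shift recursion so that the per-step trade-off functions still compose as a genuine tensor product. Controlling the mixture so that the single carried shift $b_{k+1}=\max\{cb_k,(1-\gamma_{k+1})(cb_k+\eta\Delta)\}$ dominates both branches while charging only the subsampled cost $C_{\frac{b}{m_i}}(G(\cdot))$ at each step is the delicate point; verifying that the resulting decomposition is a valid lower bound on $T(\mathcal{A}_j^t(D),\mathcal{A}_j^t(D'))$ requires the joint concavity and post-processing properties of trade-off functions together with the interpolation argument, and the optimization over the $\gamma_k$'s is what ultimately fixes the constants.
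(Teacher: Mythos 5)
Your proposal follows essentially the same route as the paper: both adapt the shifted interpolated process of \cite{Bok2024shifted} with Bernoulli-subsampled shifts (their Lemmas C.12 and C.14), set $s_k=\eta\Delta$, $p_k=b/m_i$ for the $K$ steps at node $i$ and $s_k=0$, $p_k=0$ thereafter so the residual shift contracts to $c^{(t-1)K}b_K$, and handle the non-convex case by composing $K$ subsampled Gaussian mechanisms and treating the subsequent updates as post-processing. The only cosmetic difference is your attribution of the factor $2$ in $G(2a_k/(\eta\sigma))$ to the interpolation step, whereas it actually arises from the two-sided bound $\|W-W'\|\leq 2s$ in the subsampled shift-reduction lemma; this does not affect correctness.
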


We obtain the record-level privacy guarantee by replacing $f_{ij}^t$ in Lemma \ref{lemma:user-mixture} and Theorem \ref{thm:user-level} with $\widetilde{f}_{ij}^t$, with details can be found in Appendix \ref{proof:record-privacy}.
Moreover, our result can be extended to the non-convex case without considering privacy amplification by iteration, as detailed in Lemma \ref{lemma:non-convex_record}.

\subsection{Secret \texorpdfstring{$f$-DP}{f-DP} with Related Noise}
\label{sec:related-noises}
Our $f$-DP framework can be naturally extended to settings where users share secrets to coordinate noise addition, as proposed in Algorithm \ref{alg:decor}. We refer to this setting as Sec-$f$-LDP, formally defined in Definition~\ref{defn:secfDP}. 
To formalize this, we consider honest-but-curious collusion at level $q$ \cite{DBLP:conf/icml/AllouahKFJG24}, where any coalition of up to $q$ users may pool their information, including the shared secrets they have access to. This formulation models a practical threat scenario where partial trust exists across the network: users behave according to protocol but may attempt to extract additional information through collusion.
We analyze privacy for Algorithm~\ref{alg:decor}, which incorporates both independent Gaussian noise and pairwise correlated noise via shared secrets. The following theorem provides a quantitative guarantee under the Sec-$f$-LDP framework. The proof is given in Appendix \ref{appe:related noise}.

\begin{theorem}
\label{thm:decor}
    Algorithm \ref{alg:decor} against
honest-but-curious users colluding at level $q$ satisfies $(\mu,\mathcal{S})$-SecGDP with 
$$
\mu = \Delta \cdot \sqrt{\frac{1}{(n-q)\sigma^2_{\mathrm{DP}}} + \frac{1 - \frac{1}{n-q}}{\sigma^2_{\mathrm{DP}}+ \lambda_2(\mathbf{L})\sigma^2_{\mathrm{cor}}}},
$$
where, $\sigma_{\mathrm{DP}}^2$ and $\sigma_{\mathrm{cor}}^2$ are the variance of the independent and dependent noise correspondingly and $\mathbf{L}$ is Laplacian matrix of the graph and $\lambda_2$ is the second largest eigen-value of a matrix.
\end{theorem}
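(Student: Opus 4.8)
The plan is to exploit the fact that every operation in Algorithm~\ref{alg:decor} that touches the sensitive gradient is linear, so that the colluding coalition's view is a linear--Gaussian transform of the quantity we wish to protect. First I would fix a target honest record held by user $i^\ast$; by clipping, changing this record perturbs $g_{i^\ast,t}$ by a vector of norm at most $\Delta$, so the sensitivity lives in the single coordinate block $e_{i^\ast}\otimes\delta$ with $\|\delta\|\le\Delta$. Because the per-round update $\theta_{i,t+1}=\sum_j W_{ij}(\theta_{j,t}-\eta\tilde g_{j,t})$ is affine in the noisy gradients, the entire transcript observed by any honest-but-curious coalition is an affine post-processing of the stacked noisy gradients. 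Since Gaussian DP is closed under post-processing, and under the hypothesis-testing definition a mean shift $v$ against Gaussian noise of covariance $\Sigma$ yields exactly $G_\mu$-DP with $\mu=\sqrt{v^\top\Sigma^{-1}v}$, it suffices to identify the effective noise covariance $\Sigma$ in gradient space and the direction of $v$, after marginalizing the hidden secrets and subtracting everything the coalition already knows.

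Second, I would compute this effective covariance. Stacking the correlated terms $w_i:=\sum_{j\in\mathcal{N}_i}Z_{ij}$ over users and using the antisymmetry $Z_{ij}=-Z_{ji}$ together with independence across edges gives $\mathrm{Cov}(w)=\sigma_{\mathrm{cor}}^2\,\mathbf{L}$, the scaled graph Laplacian: the diagonal equals $\deg(i)\,\sigma_{\mathrm{cor}}^2$ and the $(i,k)$ entry is $-\sigma_{\mathrm{cor}}^2$ exactly when $(i,k)\in E$. Adding the independent term $\bar Z_i\sim\cN(0,\sigma_{\mathrm{DP}}^2 I_d)$ yields, in each coordinate of $\mathbb{R}^d$, the scalar covariance $\Sigma=\sigma_{\mathrm{DP}}^2 I+\sigma_{\mathrm{cor}}^2\mathbf{L}$. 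For a coalition $C$ of size $q$ that shares its secrets, every $Z_{ij}$ incident to $C$ is known and can be subtracted; what remains to protect user $i^\ast$ is the noise supported on the $n-q$ honest coordinates, with covariance $\Sigma_H=\sigma_{\mathrm{DP}}^2 I+\sigma_{\mathrm{cor}}^2\mathbf{L}_H$, where $\mathbf{L}_H$ is the Laplacian of the honest induced subgraph. The sensitivity direction restricted to this honest block is $e_{i^\ast}$, so $\mu=\Delta\sqrt{e_{i^\ast}^\top\Sigma_H^{-1}e_{i^\ast}}$.

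Third, I would diagonalize. The Laplacian has the all-ones eigenvector $\mathbf{1}_H$ with eigenvalue $0$ and an orthonormal basis of eigenvectors whose eigenvalues are bounded below by the algebraic connectivity $\lambda_2$. Decomposing $e_{i^\ast}=\tfrac{1}{n-q}\mathbf{1}_H+e_{i^\ast}^{\perp}$ gives consensus squared-weight $\tfrac{1}{n-q}$ and orthogonal squared-weight $1-\tfrac{1}{n-q}$. Along $\mathbf{1}_H$ the correlated noise vanishes ($\mathbf{L}_H\mathbf{1}_H=0$), so $\Sigma_H$ acts as $\sigma_{\mathrm{DP}}^2$; along every orthogonal eigenvector it acts as at least $\sigma_{\mathrm{DP}}^2+\lambda_2\sigma_{\mathrm{cor}}^2$. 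Hence
\begin{align*}
e_{i^\ast}^\top\Sigma_H^{-1}e_{i^\ast}\;\le\;\frac{1/(n-q)}{\sigma_{\mathrm{DP}}^2}+\frac{1-1/(n-q)}{\sigma_{\mathrm{DP}}^2+\lambda_2(\mathbf{L})\,\sigma_{\mathrm{cor}}^2},
\end{align*}
which, after multiplying by $\Delta^2$ and taking the square root, is exactly the claimed $\mu$; maximizing over all coalitions of size $q$ and over $i^\ast$ then yields the uniform SecGDP guarantee.

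The main obstacle is the collusion bookkeeping together with the spectral constant. Two points need care: (i) justifying that marginalizing the hidden secrets is equivalent to projecting onto the $(n-q)$-dimensional honest block, so that the consensus weight becomes $1/(n-q)$ rather than $1/n$, and that no coalition of size $q$ does strictly better than this worst case; and (ii) controlling the orthogonal eigenvalues of the \emph{induced} honest Laplacian $\mathbf{L}_H$ by $\lambda_2(\mathbf{L})$, which is where eigenvalue interlacing (or a connectivity assumption on the honest subgraph) enters, since deleting $q$ vertices can in principle reduce algebraic connectivity. A secondary issue is the multi-round gossip: because the transcript applies powers of $W$ to the injected noise, I must verify that this linear post-processing does not shrink the effective noise in the direction $e_{i^\ast}$ below what $\Sigma_H$ provides, so that the single-round Mahalanobis bound indeed controls the full-trajectory guarantee.
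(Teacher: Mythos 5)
Your proposal follows essentially the same route as the paper's proof: both reduce the coalition's view to a Gaussian mean-shift of sensitivity $\Delta$ against effective covariance $\boldsymbol{\Sigma} = \sigma_{\mathrm{DP}}^2 \mathbf{I}_{n-q} + \sigma_{\mathrm{cor}}^2\mathbf{L}$ on the honest block, and bound the Mahalanobis norm $\Delta^2\, \mathbf{e}^{\top}\boldsymbol{\Sigma}^{-1}\mathbf{e}$ by splitting $\mathbf{e}$ into its consensus component (squared weight $\tfrac{1}{n-q}$, which sees only $\sigma_{\mathrm{DP}}^2$ since $\boldsymbol{\Sigma}^{-1}\mathbf{1} = \sigma_{\mathrm{DP}}^{-2}\mathbf{1}$) and its orthogonal component (squared weight $1 - \tfrac{1}{n-q}$, which sees at least $\sigma_{\mathrm{DP}}^2 + \lambda_2(\mathbf{L})\sigma_{\mathrm{cor}}^2$), exactly matching the paper's spectral-decomposition step. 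The caveats you flag at the end --- deriving per-round Gaussianity of the coalition's view, the induced honest-subgraph Laplacian versus $\lambda_2(\mathbf{L})$ of the full graph, and multi-round composition --- are points the paper's proof handles by assumption rather than argument (it posits that each round is a Gaussian mechanism with the stated covariance, citing the same convention in the DecoR analysis of Allouah et al.), so your blind reconstruction is, if anything, more explicit about where the real work lies.
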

\begin{remark}
In Theorem~\ref{thm:decor}, ``honest-but-curious collusion at level $q$'' refers to a setting where the adversary can access all data and secrets except those of $q$ users. This follows the adversarial model adopted in \cite{DBLP:conf/icml/AllouahKFJG24} and is standard in secret-sharing–based privacy mechanisms. See Appendix~\ref{sec:details-sec-DP} for further details.
\end{remark}

\vspace{-0.05in}
\section{Experiments}
\vspace{-0.05in}
\label{sec:experiments}

In this section, we present numerical experiments to demonstrate the performance of the proposed privacy accounting methods on both synthetic and real datasets. 
The codes are available at \url{https://github.com/lx10077/PN-f-DP}.
The goal is to examine how replacing previous RDP-based accounting methods with our $f$-DP-based approach affects the privacy–utility trade-off under different network settings. We first compare PN-$f$-DP with PN-RDP on synthetic graphs, then evaluate their performance on two private classification tasks, and finally analyze the improved trade-off achieved with correlated noises. Additional results on other graph structures and parameter settings are provided in Appendix~\ref{appe:experiments} due to space limitations.

\begin{figure*}[t!]
  \centering
    \subfigure[Hypercube graph.]{%
    \includegraphics[height=3cm]{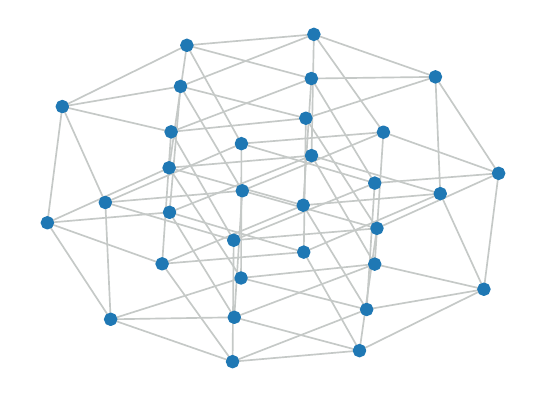}
    \label{fig:Hypercube}
  }
  \hspace{-0.5cm}
  \subfigure[Pairwise privacy budget $\epsilon$ with $\delta = 10^{-5}$.\label{fig:compare-hypercube-syn-graph}]{%
    \includegraphics[height=3cm]{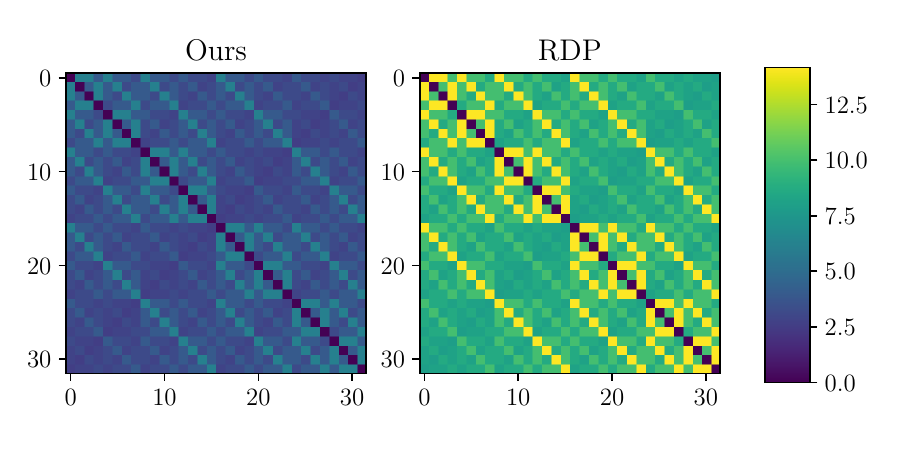}
  }
  \hspace{-0.5cm}
  \subfigure[$(\epsilon,\delta)$-curves. \label{fig:compare-budget}]{%
    \includegraphics[height=3cm]{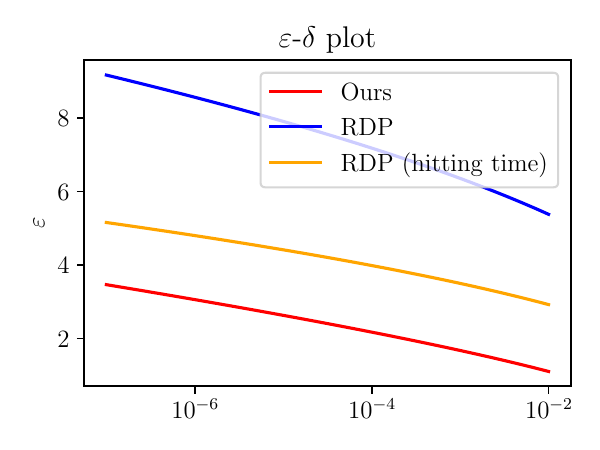}
  }
  \vspace{-0.5em}
  \caption{
  Comparisons by converting PN-$f$-DP and PN-RDP to PN-$(\epsilon, \delta)$-DP on Hypercube.
  See more experimental results in Appendix \ref{appe:RDP}.
  }
  \label{fig:DP}
  \vspace{-1em}
\end{figure*}

\vspace{-0.1in}
\paragraph{Comparison with PN-RDP on synthetic graphs.}
Following the experiment setup in \citep{Cyffers2024differentially}, we generate several synthetic graphs and report the privacy parameter $\epsilon$ when we convert PN-$f$-DP or PN-RDP to $(\epsilon, \delta)$-DP for a given value of $\delta$. 
For fair comparison, we fix the total iterations $T$, local updates $K = 1$, and noise variance $\sigma^2 = 1$.
We set $T =\Theta(\frac{\log n}{\lambda_2})$, which is the number of steps required for the random walk to converge to a minimal precision level.
We use the numerical composition method in \citep{Gopi2021numerical,DBLP:conf/aistats/KoskelaJH20} to convert PN-$f$-DP to $(\epsilon, \delta)$-DP. When numerically converting PN-RDP to $(\epsilon, \delta)$-DP, we adopt the method in \citep{Bok2024shifted}, which computes the minimum $\epsilon$ among the four existing conversion methods \citep{DBLP:conf/tcc/BunS16,DBLP:conf/csfw/Mironov17,DBLP:conf/aistats/BalleBGHS20,DBLP:journals/jsait/AsoodehLCKS21}. 

We report results on a hypercube graph with $n = 32$ nodes in Figure~\ref{fig:DP}, and include similar results on two other synthetic graphs and one real graph in Appendix~\ref{appe:experiments}. Figure~\ref{fig:Hypercube} provides a visualization of the hypercube graph.
Figure~\ref{fig:compare-hypercube-syn-graph} shows the pairwise privacy budgets $\epsilon$ (for $\delta = 10^{-5}$) derived by converting PN-$f$-DP and PN-RDP to $(\epsilon, \delta)$-DP. Across nearly all node pairs, the values from PN-$f$-DP are noticeably smaller—appearing darker—than those from PN-RDP, indicating stronger privacy guarantees.
In Figure~\ref{fig:compare-budget}, we further examine the privacy leakage from user 1 to user $n$ as a function of $\delta$.  
In this plot, we also evaluate a refined RDP baseline that uses the exact hitting time, as opposed to the upper bound in \cite{Cyffers2024differentially}. Even with this tighter baseline, our PN-$f$-DP method consistently yields the smallest $\epsilon$, demonstrating its effectiveness in achieving tighter privacy guarantees.

\vspace{-0.1in}
\paragraph{Better performance on private classification.}
\begin{wraptable}{r}{0.45\linewidth}
\vspace{-\baselineskip} 
\centering
\begin{adjustbox}{max width=\linewidth}
\begin{tabular}{lllll}
\toprule
Graph &  $\epsilon$ & RDP & RDP (HT) & Ours \\
\midrule
Complete   & 10 & 0.867 & 0.884 & \textbf{0.891} \\
Complete   & 8  & 0.854 & 0.876 & \textbf{0.885} \\
Complete   &  5  & 0.813 & 0.852 & \textbf{0.869} \\
Expander & 10 & 0.797 & 0.823 & \textbf{0.854} \\
Expander &  8  & 0.763 & 0.797 & \textbf{0.840} \\
Expander & 5  & 0.647 & 0.706 & \textbf{0.792} \\
\bottomrule
\end{tabular}
\end{adjustbox}
\caption{Test accuracy on MNIST classification with $\delta=10^{-5}$. Bold denotes the best.}
\label{tab:wraptable}
\vspace{-0.17in}
\end{wraptable}
We then evaluate the impact of our tighter privacy accounting on downstream performance of DP-SGD through two classification tasks. The first is a \textbf{logistic regression} model trained on a binarized version of the UCI Housing dataset,\footnote{Available at \url{https://www.openml.org/search?type=data&sort=runs&id=823/}.} and the second is an \textbf{MNIST image classification} \citep{lecun2010mnist} task using a simple convolutional neural network. 
To meet a target $(\epsilon, \delta)$-DP guarantee, we compute the required gradient noise variance using different privacy accounting methods.
We follow the experimental setup of \cite{Cyffers2024differentially}, including standardizing feature vectors, normalizing data points, and splitting each dataset randomly into 80\% training and 20\% test sets. The training data is then distributed across $n = 2^8$ users, each holding 64 local samples, according to an expander graph topology. 
We compare three approaches: decentralized DP-GD, local DP-GD, and our random-walk-based DP-GD under different noise variances. 
We focus on the privacy loss from user 1 to user 2, fixing $(\epsilon, \delta) = (10, 10^{-5})$, and record both objective values and test accuracy every 100 iterations. Results are presented in Figure~\ref{fig:private-classification}.
As seen, the random-walk-based method with $f$-DP consistently achieves the best performance. This improvement is due to the tighter noise calibration enabled by $f$-DP, which allows us to meet the same $(\epsilon, \delta)$-DP guarantee with lower noise variance—thus enhancing the privacy-utility trade-off.
Table~\ref{tab:wraptable} presents the test accuracy results for the MNIST classification task under varying graph structures and privacy budgets $\epsilon$. As shown, our method consistently yields higher accuracy across settings.
The same pattern holds for logistic regression when increasing the number of users to $n = 2^{11}$ or reducing the privacy budget to $\epsilon = 5$ or $3$. These additional results are reported in Appendix~\ref{appe:classification}.

\begin{figure*}[t!]
  \centering
   \subfigure[Expander graph.]{\includegraphics[height=3.3cm]{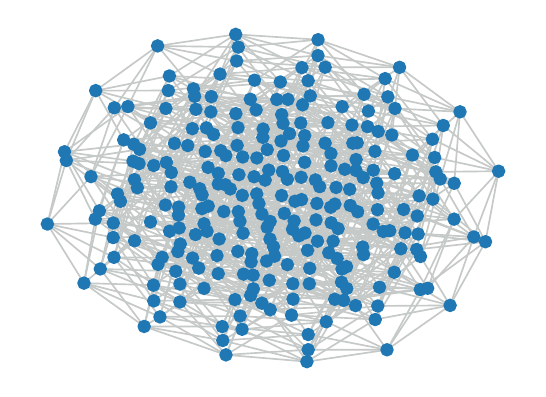} 
  }
  \hfill
  \subfigure[Objective function.]{\includegraphics[height=3.3cm]{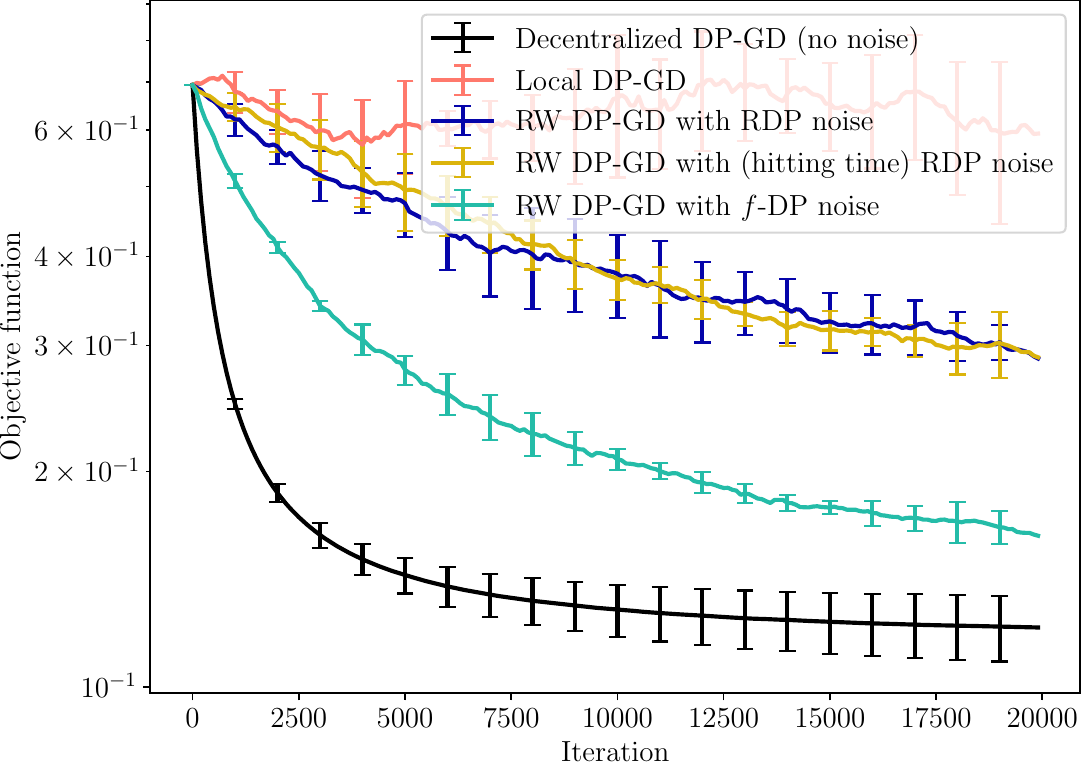} 
  }
  \hfill
  \subfigure[Testing accuracy.]{\includegraphics[height=3.3cm]{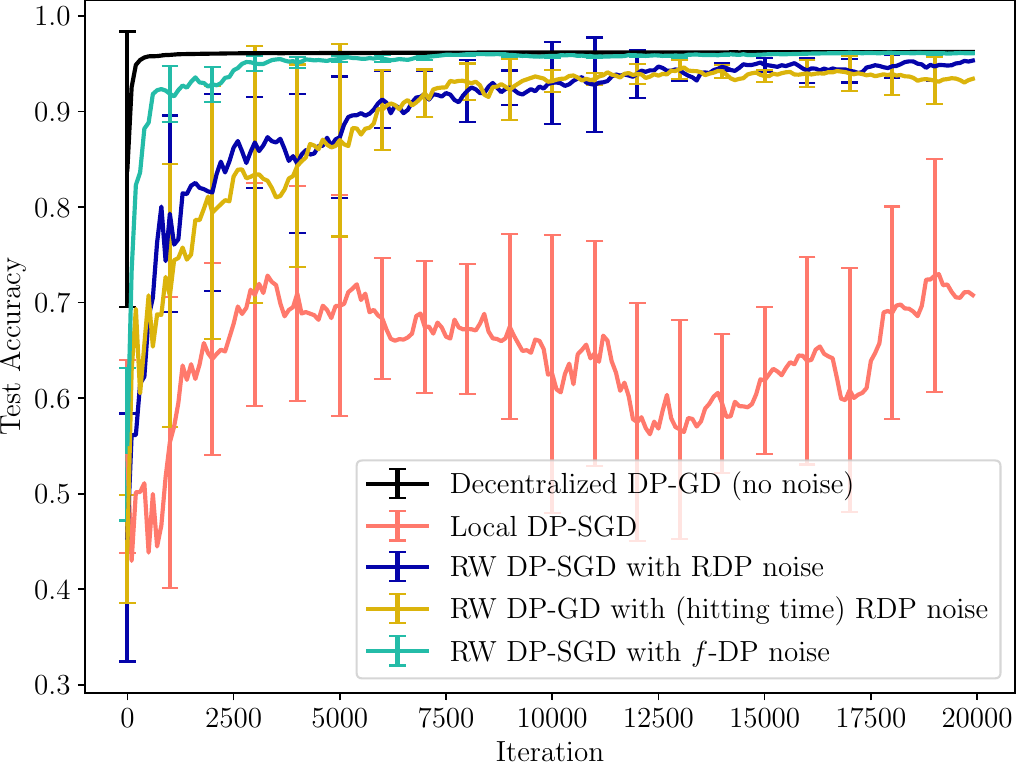}}
  \\
  \vspace{-0.05in}
     \subfigure[Complete graph.]{\includegraphics[height=3.3cm]{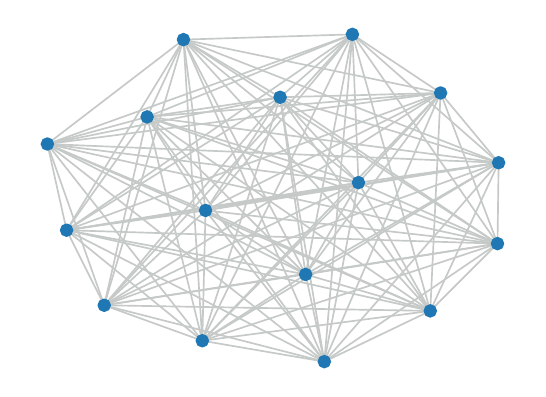} 
  }
  \hfill
    \subfigure[Objective function.]{\includegraphics[height=3.3cm]{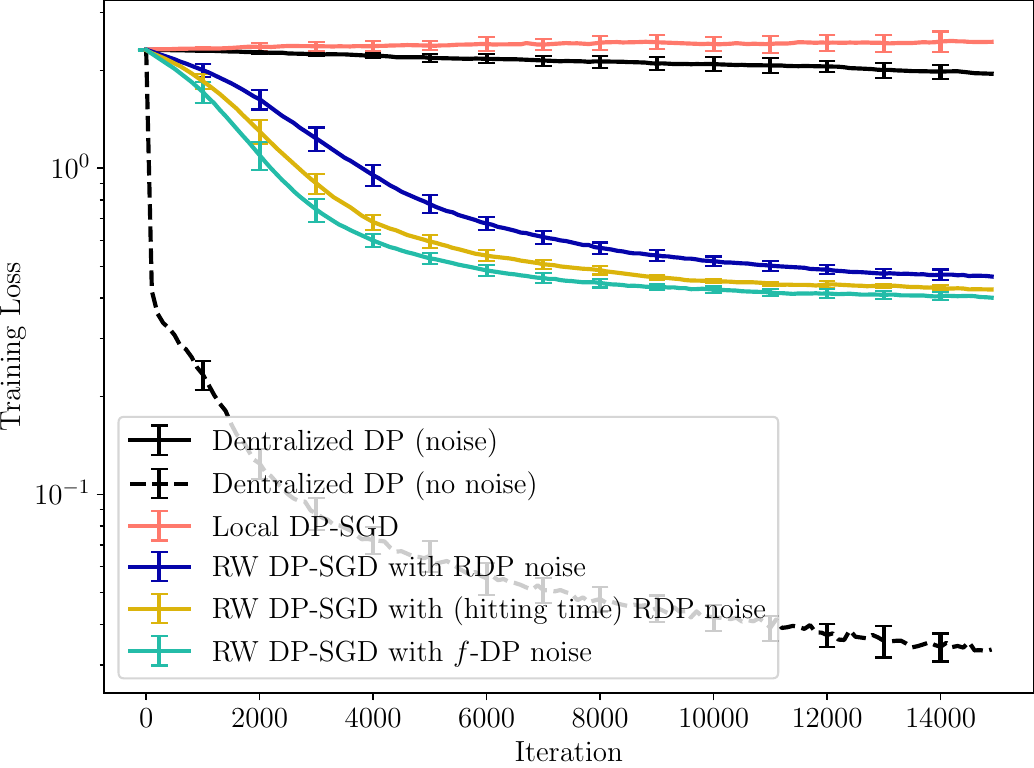} 
  }
  \hfill
  \subfigure[Testing accuracy.]{\includegraphics[height=3.3cm]{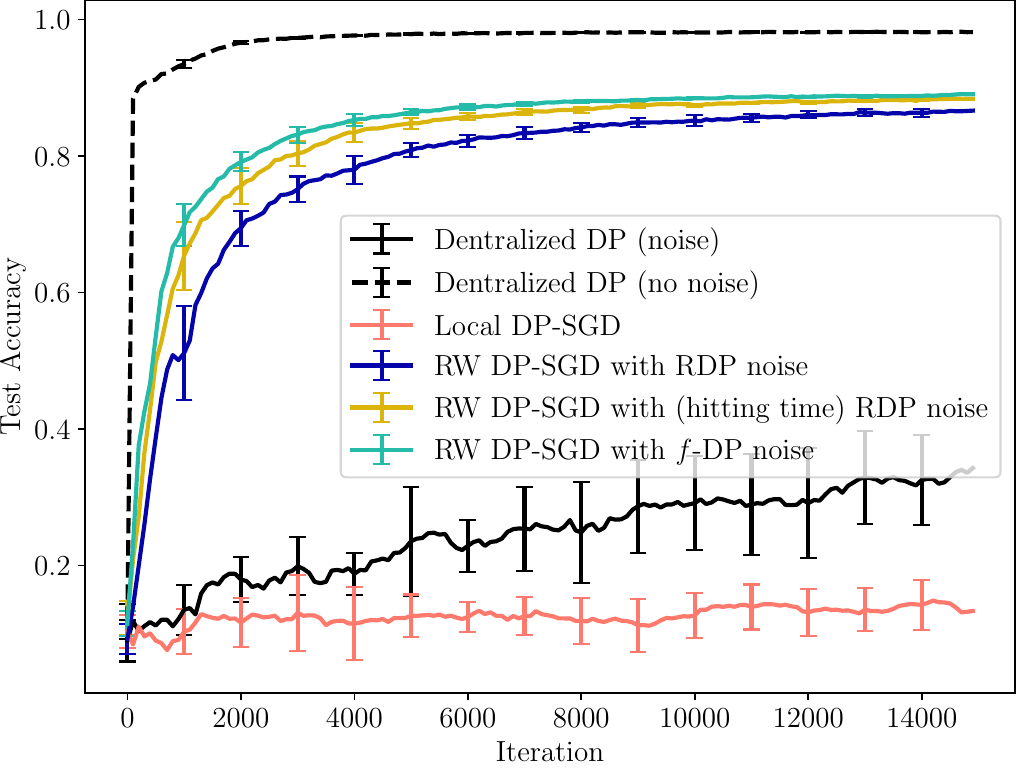}}
  \vspace{-0.1in}
\caption{\textbf{Top:} Performance of private logistic regression on the Housing dataset. \textbf{Bottom:} Performance of private MNIST classification using a neural network.}
 \label{fig:private-classification} 
\vspace{-0.2in}
\end{figure*}

\begin{figure}[htbp]
  \centering
  \includegraphics[height=4.2cm]{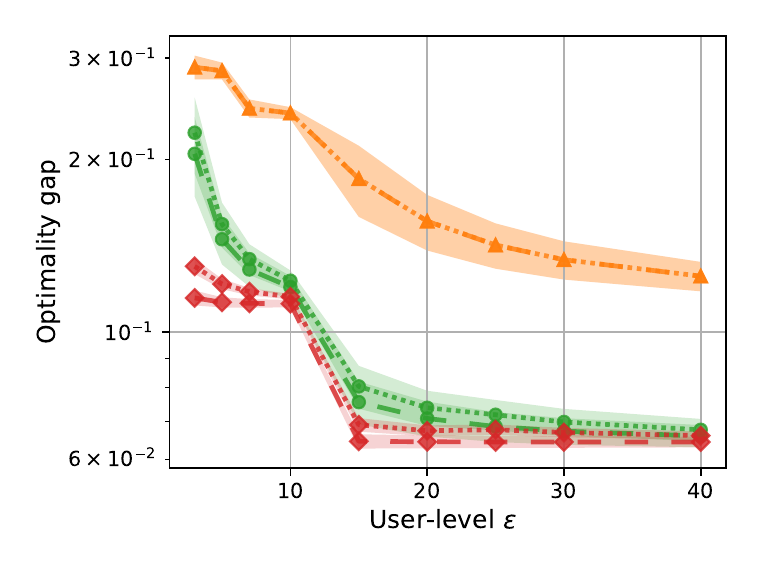}
  \hspace{-0.1in}
  \includegraphics[height=4.2cm]{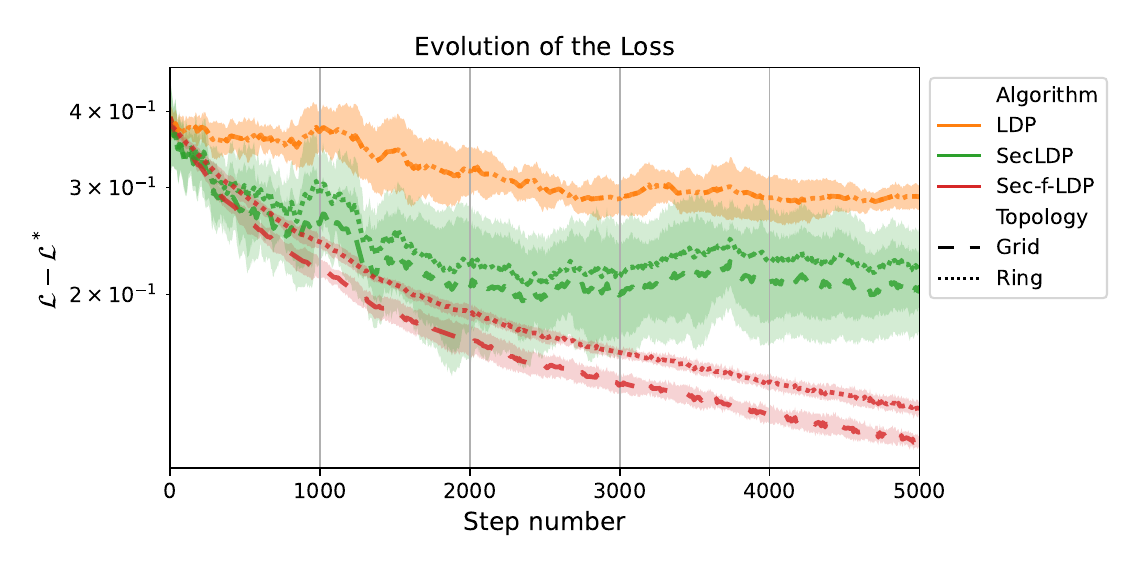}
  \vspace{-0.2in}
  \caption{\textbf{Left}: Privacy-utility trade-offs for our $f$-DP, DECOR, and LDP when converted to the same $(\epsilon, 10^{-5})$-DP. 
\textbf{Right}: Optimality gap vs. iteration steps for Alg.~\ref{alg:decor} using different privacy accounting methods with $\epsilon = 3$. 
Similar results for other values of $\epsilon \in \{5, 7, 10\}$ are provided in Appendix \ref{appen:correlated}.
  }
  \label{fig:DECOR}
    \vspace{-0.1in}
\end{figure}

\vspace{-0.1in}
\paragraph{Better performance on related noises.}
Finally, we demonstrate how our $f$-DP method improves the privacy-utility trade-off for decentralized FL with related noises.
 Our experimental setup follows \cite{DBLP:conf/icml/AllouahKFJG24}.
We consider $n = 16$ users on two standard network topologies with increasing connectivity: a ring and a 2D torus (grid). Each experiment is repeated with four random seeds for reproducibility. We compare against two baselines: SecLDP \citep{DBLP:conf/icml/AllouahKFJG24} and local DP.
Figure~\ref{fig:DECOR} shows the results for logistic regression on the a9a dataset \citep{chang2011libsvm}. 
As seen, our method (red curves) consistently achieves a better privacy-utility trade-off across different values of the privacy budget $\epsilon$ and both network topologies.

\vspace{-0.1in}
\section{Conclusions}
\vspace{-0.1in}

We introduced two new differential privacy notions for decentralized federated learning: PN-$f$-DP and Sec-$f$-LDP, extending the $f$-DP framework to settings with independent and correlated noise, respectively. These enable tighter and composable privacy analysis for various decentralized protocols. Using PN-$f$-DP, we derived user- and record-level guarantees for decentralized DP-SGD with random-walk communication, while Sec-$f$-LDP was applied to correlated-noise algorithms such as DecoR. Experiments on synthetic graphs and the UCI housing dataset show that $f$-DP–based accounting consistently achieves a stronger privacy–utility trade-off than RDP-based methods. Our framework readily generalizes to dynamic or time-varying graphs by updating the transition matrices $W^t$ each round and applies to other decentralized FL algorithms---such as gossip-based, asynchronous, or push-sum protocols---by defining suitable $W^t$ and update rules. It also accommodates non-uniform or partial participation, and known participation probabilities can be incorporated into the weighting scheme for additional privacy amplification. Extending these ideas and establishing information-theoretic lower bounds remain promising directions for future work.

\vspace{-1em}
\section*{Acknowledgments}
This work was supported in part by NIH grants U01CA274576 and R01EB036016, NSF grant DMS-2310679, a Meta Faculty Research Award, and Wharton AI for Business. The content is solely the responsibility of the authors and does not necessarily represent the official views of the NIH.

\bibliographystyle{abbrv}
\bibliography{arxiv/References}

\newpage
\appendix
\onecolumn
\onecolumn

\section{Preliminaries on Markov Chains}
\label{appe:markov}
Here, we summarize several properties of Markov chains that will be utilized in our proof, as detailed in standard Markov chain textbooks, such as \cite{MR3726904}.

\begin{definition}[Irreducible]
    A Markov chain with transition matrix $W$ is called irreducible if there exists $t>0$ such that $W^t_{ij}>0$ for any $i,j\in[n].$
\end{definition}

For a probability distribution $\pi$ supported on $[n]$ with pmf $\{\pi_i\}_{i=1}^n$ (i.e., $\pi_i\geq 0$ and $\sum_{i=1}^n\pi_i = 1$) and a transition matrix $W$, we define $\pi(W)$ as
\begin{align*}
    (\pi(W))_j = \sum_{i=1}^n \pi_iW_{ij}, \qquad \hbox{for any } j \in [n].
\end{align*}

\begin{definition}[Stationary distribution]
    For a Markov chain with transition matrix $W$, we say $\pi$ is a stationary distribution of $W$ is $\pi(W) = W.$
\end{definition}

\begin{proposition}[Existence of a stationary distribution]
    For an irreducible Markov chain with transition matrix $W$, there is a probability $\pi$ such that $\pi(W) = W$, i.e., $\pi$ is a stationary distribution of $W.$
\end{proposition}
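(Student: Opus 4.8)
The plan is to recast the stationarity condition $\pi(W)=\pi$ as an eigenvector problem and then extract a genuine probability vector from the eigenspace of $W$ associated with the eigenvalue $1$. The starting observation is that, because $W$ is a Markov (row-stochastic) matrix, the all-ones vector $\mathbf{1}=(1,\dots,1)^\top$ satisfies $W\mathbf{1}=\mathbf{1}$; hence $1$ is an eigenvalue of $W$, and therefore also of $W^\top$. Consequently there exists a nonzero row vector $v$ with $v(W)=v$, i.e.\ $vW=v$. The entire difficulty is then to show that such a $v$ can be chosen with nonnegative entries, so that after $\ell_1$-normalization $\pi := v/\sum_i v_i$ becomes a valid distribution rather than merely an abstract eigenvector.

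The key steps I would carry out are as follows. First, I would verify that the spectral radius of $W$ equals $1$: every eigenvalue $\lambda$ obeys $|\lambda|\le \rho(W)\le \|W\|_\infty = 1$ (the $\infty$-norm of a row-stochastic matrix is its maximal row sum, namely $1$), while $\lambda=1$ is attained through the right eigenvector $\mathbf{1}$, so $\rho(W)=1$. Second, I would invoke the Perron–Frobenius theorem for irreducible nonnegative matrices: since $W\ge 0$ and $W$ is irreducible, the Perron root, which coincides with $\rho(W)=1$, is a simple eigenvalue whose associated left eigenvector $v$ can be taken strictly positive, $v_i>0$ for all $i$. Third, setting $\pi := v/\sum_{i} v_i$ produces a vector with $\pi_i>0$, $\sum_i \pi_i=1$, and $\pi(W)=\pi$, which is exactly the asserted stationary distribution.

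The main obstacle is the positivity issue in the second step: the existence of an eigenvalue-$1$ left eigenvector is immediate from pure linear algebra, but guaranteeing that it has a definite sign is precisely what irreducibility buys through Perron–Frobenius. I would also record a shorter route that establishes mere existence without Perron–Frobenius: the map $\mathcal{T}(\pi)=\pi(W)$ sends the probability simplex $\Delta_n=\{\pi\ge 0:\sum_i\pi_i=1\}$ continuously into itself, since $(\pi(W))_j\ge 0$ and $\sum_j(\pi(W))_j=\sum_i \pi_i\sum_j W_{ij}=\sum_i\pi_i=1$, so Brouwer's fixed-point theorem applied to the compact convex set $\Delta_n$ yields a fixed point $\pi=\mathcal{T}(\pi)$. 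This alternative does not even require irreducibility, but the Perron–Frobenius argument has the advantage of simultaneously delivering strict positivity and uniqueness of $\pi$, which are the properties actually exploited later, for instance when the random walk is initialized from the stationary distribution in Theorem~\ref{thm:user-level}.
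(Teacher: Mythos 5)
Your proof is correct, and you have also silently repaired a typo in the statement: the stationarity condition should of course read $\pi(W)=\pi$, not $\pi(W)=W$ (the same slip appears in the paper's definition of a stationary distribution), and your reading is the intended one. The paper itself supplies no proof of this proposition---it is stated as background and deferred to standard references such as \cite{MR3726904}---so the fair comparison is with the textbook argument, which is probabilistic rather than spectral: one fixes a state $z$, defines $\tilde{\pi}(y)$ as the expected number of visits to $y$ before the first return time $\tau_z^+$ to $z$, verifies $\tilde{\pi}(W)=\tilde{\pi}$ by a one-step decomposition of the visit counts, and normalizes by $\mathbb{E}_z[\tau_z^+]<\infty$, which requires irreducibility (and finiteness of the state space) to guarantee finite return times. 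Your Perron--Frobenius route is genuinely different and buys more in one stroke: since $\rho(W)\le\|W\|_\infty=1$ and $W\mathbf{1}=\mathbf{1}$ give $\rho(W)=1$, irreducibility makes $1$ a simple eigenvalue with a strictly positive left eigenvector, so you get existence, uniqueness, and strict positivity of $\pi$ simultaneously---exactly the properties the paper later leans on when it initializes the random walk from the (uniform) stationary distribution of a symmetric $W$ in Theorem~\ref{thm:user-level} and in the spectral-gap Hoeffding bound. The probabilistic construction, by contrast, yields the explicit identity $\pi(y)=\tilde{\pi}(y)/\mathbb{E}_z[\tau_z^+]$ (Kac's formula), which connects stationary mass to return times, a connection that is conceptually close to the hitting-time weights $w_{ij}^t$ used elsewhere in the paper but is not needed for the proposition itself. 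Your Brouwer fixed-point alternative is correctly argued and is the most economical path to bare existence (irreducibility not needed), at the cost of giving neither uniqueness nor positivity. One minor observation: the paper's definition of irreducibility asks for a single $t$ with $W^t_{ij}>0$ for all $i,j$, which is in fact primitivity; under either that reading or the standard per-pair one, $W$ is irreducible in the Perron--Frobenius sense, so your invocation of the theorem is valid.
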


Let $\mathcal{T}(i) = \{t \geq 1 : (W^t)_{ii} > 0 \}$ be the set of all times when the probability of the chain starting from node $i$ returns to node $i$. The period of node $i$ is defined as the greatest common divisor of $\mathcal{T}(i)$.

\begin{definition}
We say a Markov chain is aperiodic if the period of every node $i$ is 1.
\end{definition}

\section{Useful Properties of \texorpdfstring{$f$}{}-DP }

\subsection{Joint Concavity of Trade-Off Functions}
As the random walk implies a mixture distribution, we first provide $f$-DP guarantees for mixture mechanisms using the joint concavity of trade-off functions introduced by \cite{wang2023unified}.

Let $\{ P_i \}_{i=1}^m$ and $\{Q_i\}_{i=1}^m$ be two sequences of probability distributions and let $p_i$ and $q_i$ be the pdf of $P_i$ and $Q_i$, correspondingly.
A mixture distribution $P_\bfw$ that is a mixture of $\{ P_i \}_{i=1}^m$ with weights $\bfw = \{w_i\}_{i=1}^m$ has pdf $p_\bfw = \sum_{i=1}^m w_i p_i.$
Similarly, let $Q_\bfw$ denote the mixture of $\{ Q_i \}_{i=1}^m$ with the same weights $\bfw = \{w_i\}_{i=1}^m$, i.e., $Q_\bfw$ has a pdf given by $q_\bfw = \sum_{i=1}^m w_i q_i$.

\begin{lemma}[Joint concavity of trade-off functions\cite{wang2023unified}] \label{lemma:mixture}
For two mixture distributions $P_\bfw$ and $Q_\bfw$, it holds
\begin{align*}
    T(P_\bfw,Q_\bfw)(\alpha(t,c)) \geq \sum_{i=1}^m w_i T(P_i,Q_i)(\alpha_i(t,c)),
\end{align*}
where $\alpha_i(t,c) = \mathbb{P}_{X\sim P_i}\left[ \frac{q_i}{p_i}(X) > t\right] + c \mathbb{P}_{X\sim P_i}\left[ \frac{q_i}{p_i}(X) =t\right]$ is the type I error for testing $P_i \hbox{ v.s. } Q_i$ using the likelihood ratio test  and $\alpha (t,c) = \sum_{i=1}^m w_i \alpha_i(t,c).$
\end{lemma}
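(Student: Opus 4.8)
The plan is to reduce the mixture testing problem to an augmented testing problem that carries the mixture component index as side information, and then to strip away that side information by post-processing. Concretely, I would introduce a latent index $I$ distributed according to the weights $\bfw$, and define two distributions on the pair $(I,X)$: under $\tilde{P}$, draw $I \sim \bfw$ and $X \mid I = i \sim P_i$; under $\tilde{Q}$, draw $I \sim \bfw$ and $X \mid I = i \sim Q_i$. Both carry the same marginal $\bfw$ on $I$, so the likelihood ratio of the augmented problem is
\[
\frac{d\tilde{Q}}{d\tilde{P}}(i,x) = \frac{w_i\, q_i(x)}{w_i\, p_i(x)} = \frac{q_i(x)}{p_i(x)},
\]
which is exactly the per-component likelihood ratio appearing in the statement. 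This identity is the linchpin: the mixture's own likelihood ratio $q_\bfw/p_\bfw$ is awkward, but the augmented ratio factorizes componentwise.

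First I would analyze the likelihood-ratio test (LRT) for the augmented problem at threshold $t$ with boundary randomization $c$: reject $H_0$ when $\frac{q_I}{p_I}(X) > t$, and reject with probability $c$ on the boundary event $\frac{q_I}{p_I}(X) = t$. Conditioning on $I=i$ and averaging against $\bfw$, the type I error is $\sum_i w_i \alpha_i(t,c) = \alpha(t,c)$ and the type II error is $\sum_i w_i \beta_i(t,c)$, where $\beta_i(t,c)$ denotes the type II error of the per-component LRT. By the Neyman--Pearson lemma applied to the augmented problem, this LRT is optimal at its own type I level, and per-component optimality gives $\beta_i(t,c) = T(P_i,Q_i)(\alpha_i(t,c))$. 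Hence
\[
T(\tilde{P}, \tilde{Q})(\alpha(t,c)) = \sum_{i=1}^m w_i\, \beta_i(t,c) = \sum_{i=1}^m w_i\, T(P_i,Q_i)(\alpha_i(t,c)).
\]
Obtaining equality here (rather than merely an inequality) is essential, since the subsequent chaining requires a lower bound on $T(\tilde P,\tilde Q)$; this is precisely where Neyman--Pearson optimality of the augmented LRT must be invoked.

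Next I would observe that the original mixture problem is the image of the augmented problem under the deterministic post-processing $(i,x)\mapsto x$ that discards the index, which sends $\tilde{P}\mapsto P_\bfw$ and $\tilde{Q}\mapsto Q_\bfw$. Invoking the data-processing inequality for trade-off functions \citep{dong2019gaussian}, post-processing can only make two distributions harder to distinguish, so $T(P_\bfw,Q_\bfw) \geq T(\tilde{P},\tilde{Q})$ pointwise on $[0,1]$. Evaluating at $\alpha(t,c)$ and combining with the previous display yields $T(P_\bfw,Q_\bfw)(\alpha(t,c)) \geq \sum_i w_i\, T(P_i,Q_i)(\alpha_i(t,c))$, as claimed.

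The main obstacle I anticipate is the bookkeeping in the augmented type I / type II error computation, specifically handling the boundary event $\{q_I/p_I = t\}$ together with the randomization parameter $c$ so that the conditional errors collapse exactly to $\alpha(t,c)$ and $\sum_i w_i \beta_i(t,c)$, and making sure Neyman--Pearson is applied in the direction that produces equality rather than the reversed inequality. Once the augmented problem is correctly set up and its trade-off point identified, the data-processing step is essentially immediate, since marginalizing out $I$ is a legitimate randomized post-processing.
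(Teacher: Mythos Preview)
The paper does not supply its own proof of this lemma; it is quoted as a result from \cite{wang2023unified}. Your argument is correct and is in fact the mechanism the paper itself relies on elsewhere: in the proof of Theorem~\ref{thm:prv} the paper introduces the same augmented variables $\xi_I\mid I{=}i\sim P_i$ and $\zeta_I\mid I{=}i\sim Q_i$, asserts (citing \cite{wang2023unified}) that the right-hand side of Lemma~\ref{lemma:mixture} equals $T(\xi_I,\zeta_I)$, and that $T(P_\bfw,Q_\bfw)\geq T(\xi_I,\zeta_I)$---which is exactly your post-processing step $(i,x)\mapsto x$. So your proposal reconstructs precisely the intended proof.
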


According to \cite{wang2023unified}, Lemma \ref{lemma:mixture} implies the joint convexity of $F$-divergences, including the scaled exponentiation of the R\'enyi divergence and the hockey-stick divergence.

\subsection{Privacy Amplification by Iteration}
Consider the DP-SGD algorithm defined iteratively as 
\begin{align*}
    \theta_{k+1} = \Pi_{\mathcal{K}}\left[\theta_k - \eta\left(g_k(\theta_k) + \mathcal{N}(0,\sigma^2) \right) \right]
\end{align*}
with some initialization $\theta_0$, some closed and convex set $\mathcal{K}$, and some gradient map $g_k.$
The effect of iteration on the Gaussian noise has been studied by \cite{DBLP:conf/focs/FeldmanMTT18}, which is termed as privacy amplification by iteration.
This is further investigated by \cite{DBLP:conf/nips/AltschulerT22,DBLP:conf/nips/0001S22} which shows that the privacy budget of DP-SGD may converge as the iteration goes.
Here we introduce recent $f$-DP result for privacy amplification by iteration \cite{Bok2024shifted} which will be adopted in our privacy analysis.

For a map $\phi$, we say $\phi$ is a contraction $\phi$ is $c$-Lipschitz for some $0<c<1$.
When $c=1$, we say $\phi$ is non-expansive.
The following conclusion is a well-known result to guarantee that the gradient map $\theta \mapsto \theta + g_k(\theta)$ is non-expansive.

\begin{lemma}
Consider a loss function $\ell$ that is convex and $M$-smooth. Then, the gradient descent update
$\phi(x) = x - \eta \nabla f(x)$
is non-expansive for each $\eta  \in [0, 2/M]$. If $f$ is additionally $m$-strongly convex and $\eta \in (0, 2/M)$, then $\phi$ is $c$-Lipschitz with $c = \max\{|1 - \eta m|, |1 - \eta M|\} < 1$.
\end{lemma}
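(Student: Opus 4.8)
The plan is to reduce both claims to monotonicity/co-coercivity estimates for the gradient and then expand the squared displacement. Throughout I write $f$ for the loss $\ell$, and set $u := x-y$, $v := \nabla f(x)-\nabla f(y)$, so that $\phi(x)-\phi(y) = u - \eta v$ and
\[
\|\phi(x)-\phi(y)\|^2 = \|u\|^2 - 2\eta\langle u,v\rangle + \eta^2\|v\|^2.
\]
Everything hinges on lower-bounding $\langle u,v\rangle$. For the first claim I would invoke the Baillon--Haddad co-coercivity of the gradient: for convex, $M$-smooth $f$ one has $\langle u,v\rangle \ge \frac{1}{M}\|v\|^2$. Substituting gives $\|\phi(x)-\phi(y)\|^2 \le \|u\|^2 - \eta(\tfrac{2}{M}-\eta)\|v\|^2$, and since $\eta\in[0,2/M]$ makes the coefficient $\eta(\tfrac{2}{M}-\eta)\ge 0$, the correction term is non-positive and non-expansiveness follows immediately.

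For the second claim the first step is to upgrade co-coercivity to a two-term estimate that also exploits strong convexity. I would set $g := f - \tfrac{m}{2}\|\cdot\|^2$, which is convex (by $m$-strong convexity of $f$) and $(M-m)$-smooth, and apply co-coercivity to $g$. Expanding $\nabla g(x)-\nabla g(y) = v - m u$ and clearing the denominator $m+M>0$ yields the standard combined inequality
\begin{equation}
\langle u,v\rangle \ \ge\ \frac{mM}{m+M}\,\|u\|^2 + \frac{1}{m+M}\,\|v\|^2. \tag{$\star$}
\end{equation}
Plugging $(\star)$ into the expansion above produces
\[
\|\phi(x)-\phi(y)\|^2 \ \le\ \Big(1-\tfrac{2\eta mM}{m+M}\Big)\|u\|^2 + \Big(\eta^2 - \tfrac{2\eta}{m+M}\Big)\|v\|^2,
\]
which isolates the role of the step size through the sign of the second coefficient.

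The remaining and most delicate step is to recover the \emph{sharp} contraction factor $c=\max\{|1-\eta m|,|1-\eta M|\}$ across the whole range $\eta\in(0,2/M)$, not merely $\eta\le 2/(m+M)$. The key extra ingredient is the two-sided gradient bound $m\|u\|\le\|v\|\le M\|u\|$, which follows from strong convexity and smoothness respectively. When $\eta\le \tfrac{2}{m+M}$ the coefficient of $\|v\|^2$ is non-positive, so I substitute $\|v\|^2\ge m^2\|u\|^2$; the cross terms combine via $-\tfrac{2\eta mM}{m+M}-\tfrac{2\eta m^2}{m+M}=-2\eta m$, and the bound collapses to $(1-\eta m)^2\|u\|^2$. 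When $\eta> \tfrac{2}{m+M}$ the coefficient is positive, so I instead substitute $\|v\|^2\le M^2\|u\|^2$, and the analogous cancellation gives $(1-\eta M)^2\|u\|^2$. In both branches the bound is dominated by $c^2\|u\|^2$, proving $c$-Lipschitzness; finally $\eta\in(0,2/M)$ forces $0<\eta m<\eta M<2$, whence $|1-\eta m|<1$ and $|1-\eta M|<1$, so $c<1$. I expect this case split---and verifying that each branch simplifies exactly to a perfect square---to be the main obstacle, since a single naive application of $(\star)$ (i.e. substituting it and stopping) only covers $\eta\le 2/(m+M)$ and misses the claimed step-size regime.
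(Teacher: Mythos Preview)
Your proof is correct. Both halves are handled cleanly: the Baillon--Haddad co-coercivity estimate gives non-expansiveness on $[0,2/M]$ exactly as you say, and for the contraction claim your case split at $\eta=\tfrac{2}{m+M}$, combined with the two-sided gradient bound $m\|u\|\le\|v\|\le M\|u\|$, indeed collapses each branch to the advertised perfect square $(1-\eta m)^2$ or $(1-\eta M)^2$. It is also worth noting (you implicitly use this) that the case split lines up precisely with which of $|1-\eta m|$ or $|1-\eta M|$ realizes the maximum, so in each branch your bound is exactly $c^2\|u\|^2$ and not merely dominated by it.

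As for comparison with the paper: there is nothing to compare. The paper states this lemma as ``a well-known result'' and does not supply a proof; it is quoted only to justify that the gradient-descent map is a contraction before invoking the CNI machinery from \cite{Bok2024shifted}. Your argument is the standard textbook derivation (via the combined smoothness/strong-convexity inequality, sometimes attributed to Nesterov or Polyak), and it fully substantiates the claim.
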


The iteration of DP-SGD relates to the contractive noisy iteration.

\begin{definition}[Contractive noisy iterations (CNI)] 
The contractive noisy iterations ${\rm CNI}(\theta_0, \{\phi_{k}\}_{k \in [t]}, \{\xi_k\}_{k \in [t]}, \cK)$ corresponding to a sequence of contractive functions $\{\phi_k\}_{k \in [t]}$, a sequence of noise distributions $\{\xi_k\}_{k \in [t]}$, and a closed and convex set $\cK$, is the stochastic process
\begin{equation}\label{eq:cni}
\theta_{k+1} = \Pi_{\cK}(\phi_{k+1}(\theta_k) + Z_{k+1}),
\end{equation}
where $Z_{k+1} \sim \xi_{k+1}$ is independent of $(\theta_0, \dots, \theta_k)$. 
\end{definition}

For CNI, one can consider the privacy amplification by iteration. Here, we apply the $f$-DP guarantee in the following lemma from \cite{Bok2024shifted}.

\begin{lemma}[Privacy amplification by iteration, Lemma C.3 in \cite{Bok2024shifted}]
\label{lemma:c3_bok}
    Let $\theta_{t}$ and $\theta_{t}'$ respectively be the output of ${\rm CNI}(\theta_0, \{\phi_{k}\}_{k \in [t]}, \{\cN(0, \sigma^2 \Ib_p)\}_{k \in [t]}, \cK)$ and ${\rm CNI}(\theta_0, \{\phi_{k}^{'}\}_{k \in [t]}, \{\cN(0, \sigma^2 \Ib_p)\}_{k \in [t]}, \cK)$ such that each $\phi_{k}, \phi_{k}'$ is $c-$Lipschitz and $\|\phi_{k} - \phi_{k}^{'}\|_{\infty} \leq s_{k}$ for all $k \in [t]$. Then for any intermediate time $\tau$ and shift parameters $\gamma_{\tau + 1}, \cdots, \gamma_{t} \in [0,1]$ with $\gamma_{t} = 1$, 
    \begin{align*}
        T(\theta_{t}, \theta_{t}') \geq G\left( \frac{1}{\sigma} \sqrt{\sum_{k = \tau+1}^{t} a_{k}^2} \right),
    \end{align*}
    where $a_{k+1} = \gamma_{k+1} (c b_{k} + s_{k+1})$, $b_{k+1} = (1 - \gamma_{k+1}) (c b_{k} + s_{k+1})$, and $\|\theta_{\tau} - \theta_{\tau}'\| \leq b_{\tau}$. 
\end{lemma}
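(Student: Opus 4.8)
The plan is to prove the bound by forward induction on the iteration index, tracking a \emph{shifted} trade-off function that records how much indistinguishability survives when one argument is allowed to be moved by a bounded amount. For laws $P,Q$ on $\mathbb{R}^p$ and a budget $z\ge 0$, I would define
\[
T^{[z]}(P,Q):=\sup\big\{\,T(\tilde P,Q):W_\infty(P,\tilde P)\le z\,\big\},
\]
where $W_\infty(P,\tilde P)$ is the smallest $R$ admitting a coupling with distance $\le R$ almost surely. Note $T^{[0]}=T$, and $T^{[z]}$ is nondecreasing in $z$ (more room to move $P$ toward $Q$ can only raise the trade-off). The inductive invariant I would maintain is
\[
T^{[b_k]}(\theta_k,\theta_k')\ \ge\ G(\mu_k),\qquad \mu_k:=\tfrac{1}{\sigma}\sqrt{\textstyle\sum_{j=\tau+1}^{k}a_j^2},\quad k=\tau,\dots,t.
\]
The base case $k=\tau$ holds because $\|\theta_\tau-\theta_\tau'\|\le b_\tau$ gives $W_\infty(\theta_\tau,\theta_\tau')\le b_\tau$, so choosing $\tilde P=\theta_\tau'$ in the supremum yields $T^{[b_\tau]}(\theta_\tau,\theta_\tau')=\mathrm{Id}=G(0)$. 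At $k=t$, since $\gamma_t=1$ forces $b_t=0$, the invariant specializes to $T(\theta_t,\theta_t')\ge G(\mu_t)$, which is exactly the claim.

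The inductive step rests on two ingredient lemmas that I would isolate. First, a \textbf{contraction lemma}: because each $\phi_{k+1}$ is $c$-Lipschitz and $\|\phi_{k+1}-\phi_{k+1}'\|_\infty\le s_{k+1}$, the discrepancy between $\phi_{k+1}(\theta_k)$ and $\phi_{k+1}'(\theta_k')$ decomposes additively—the contraction shrinks an input shift of $b_k$ to $cb_k$, while the map perturbation contributes at most $s_{k+1}$ via the identity coupling—so by the data-processing inequality $T^{[cb_k+s_{k+1}]}(\phi_{k+1}(\theta_k),\phi_{k+1}'(\theta_k'))\ge T^{[b_k]}(\theta_k,\theta_k')$. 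Second, a \textbf{shift-reduction lemma}: convolving both arguments with $\cN(0,\sigma^2\Ib_p)$ converts part of the shift budget into a Gaussian-DP factor, $T^{[b]}\!\big(P*\cN(0,\sigma^2\Ib_p),\,Q*\cN(0,\sigma^2\Ib_p)\big)\ge T^{[a+b]}(P,Q)\otimes G(a/\sigma)$. Applying the contraction lemma, then the shift-reduction lemma with the $\gamma_{k+1}$-split $cb_k+s_{k+1}=a_{k+1}+b_{k+1}$, and finally the fact that the projection $\Pi_{\cK}$ onto a convex set is $1$-Lipschitz (so post-composing it preserves $W_\infty$ shifts and only raises the shifted trade-off by data processing), I obtain
\[
T^{[b_{k+1}]}(\theta_{k+1},\theta_{k+1}')\ \ge\ T^{[cb_k+s_{k+1}]}(\phi_{k+1}(\theta_k),\phi_{k+1}'(\theta_k'))\otimes G(a_{k+1}/\sigma)\ \ge\ G(\mu_k)\otimes G(a_{k+1}/\sigma).
\]
The Gaussian-DP composition identity $G(\mu_1)\otimes G(\mu_2)=G(\sqrt{\mu_1^2+\mu_2^2})$ from \cite{dong2019gaussian} then upgrades the right-hand side to $G(\mu_{k+1})$, closing the induction.

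The hard part will be the shift-reduction lemma, which is the only place the Gaussian structure of the noise is used and which must be stated for the shifted functional $T^{[\cdot]}$ rather than for ordinary trade-off functions. To prove it I would take the optimal coupling realizing $W_\infty(P,P_\star)\le a+b$ for the $\tilde P=P_\star$ attaining $T^{[a+b]}(P,Q)$, split it through an intermediate law $P''$ with $W_\infty(P,P'')\le a$ and $W_\infty(P'',P_\star)\le b$, and observe that $P*\cN$ and $P''*\cN$ are mixtures of Gaussians whose means are coupled to within $a$; the joint concavity of trade-off functions (Lemma~\ref{lemma:mixture}) then lower-bounds their trade-off by that of a single pair of Gaussians with mean gap $a$, namely $G(a/\sigma)$, while the residual budget $b$ on $P''$ is retained as the deferred shift and handled by the $\otimes$ factorization. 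The subtleties I expect to handle carefully are the measurability of the couplings, the compatibility of the tensor-product factorization with the supremum defining $T^{[\cdot]}$, the additivity of the sensitivity $s_{k+1}$ into the $W_\infty$ budget via the triangle inequality, and the verification that convolution depends only on the marginal laws so that the independence of $Z_{k+1},Z_{k+1}'$ is immaterial.
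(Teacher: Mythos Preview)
This lemma is quoted from \cite{Bok2024shifted} and not proved in the present paper, so there is no in-paper argument to compare against; I assess your proposal on its own terms and against Bok et al.'s original technique. Your overall plan---define the shifted trade-off functional $T^{[z]}$, maintain the invariant $T^{[b_k]}(\theta_k,\theta_k')\ge G(\mu_k)$ by forward induction, and close at $k=t$ via $\gamma_t=1\Rightarrow b_t=0$---is sound and is the natural $f$-DP port of the shifted-R\'enyi method of Feldman et al.\ (2018). Bok et al.\ instead construct an explicit shifted interpolated process $\tilde\theta_k$ satisfying $\|\theta_k-\tilde\theta_k\|\le b_k$ and track $T(\tilde\theta_k,\theta_k')$ directly; your $T^{[z]}$ abstracts the same idea, and both routes yield the same bound.

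There is, however, a genuine slip in your sketch of the shift-reduction lemma. You split so that $W_\infty(P,P'')\le a$ and $W_\infty(P'',P_\star)\le b$ and then argue $T(P*\cN,P''*\cN)\ge G(a/\sigma)$, calling the remaining $b$ the deferred shift. But the deferred shift in $T^{[b]}(P*\cN,Q*\cN)$ must be a $W_\infty$-ball of radius $b$ around $P*\cN$, whereas your $P''*\cN$ lies within $a$ (not $b$) of $P*\cN$; moreover a bound on $T(P*\cN,P''*\cN)$ cannot be chained to anything via $\otimes$, since the tensor product is composition of independent mechanisms, not a triangle inequality for trade-off functions. The fix is to reverse the split---take $W_\infty(P,P'')\le b$ (deferred) and $W_\infty(P'',P_\star)\le a$ (to be absorbed)---set $\tilde R=P''*\cN$, and bound $T(P''*\cN,Q*\cN)$ by adaptive composition: release $(X_\star,X''+Z)$ versus $(Y,Y+Z')$; the first coordinate has trade-off $T(P_\star,Q)$, and conditionally on the first coordinate equal to $u$ the second compares a Gaussian with mean in $B(u,a)$ against $\cN(u,\sigma^2I)$, hence has trade-off at least $G(a/\sigma)$; then compose and post-process by dropping the first coordinate. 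With this correction your induction goes through exactly as you outlined.
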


\subsection{Additional Useful Properties}

\paragraph{Post-processing property of $f$-DP \cite{dong2019gaussian}.}
An important property of DP is its resistance to data-independent post-processing. For $f$-DP, this post-processing property is described in the following proposition.

\begin{proposition} \label{prop:post_processing}
    For any probability distributions $P, Q$ and any data-independent post-processing $\text{Proc}$, it holds
\[T(\text{Proc}(P), \text{Proc}(Q)) \geq T(P, Q)\,.
\]
\end{proposition}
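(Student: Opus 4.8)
The plan is to exploit the hypothesis-testing definition of the trade-off function directly: any rejection rule for the post-processed pair $(\text{Proc}(P),\text{Proc}(Q))$ can be pulled back to a rejection rule on the original pair $(P,Q)$ with identical type I and type II errors, so the infimum defining $T(P,Q)$ ranges over a set that is at least as rich, forcing it to be no larger. First I would model the data-independent post-processing as a Markov kernel $K$, so that $\text{Proc}(P)$ is the pushforward of $P$ through $K$ (and $\text{Proc}(Q)$ the pushforward of $Q$); when $\text{Proc}$ is deterministic, each $K(x,\cdot)$ is a point mass and the argument reduces to a change of variables.

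Next, fix any level $\alpha\in[0,1]$ and any test $\phi$ on the output space with type I error $\alpha_\phi=\mathbb{E}_{\text{Proc}(P)}[\phi]\le\alpha$. I would define the pulled-back test $\psi(x):=\int \phi(y)\,K(x,\mathrm{d}y)$, a measurable $[0,1]$-valued function on the original space. By the definition of the pushforward distribution, $\mathbb{E}_P[\psi]=\mathbb{E}_{\text{Proc}(P)}[\phi]=\alpha_\phi\le\alpha$ and likewise $\mathbb{E}_Q[\psi]=\mathbb{E}_{\text{Proc}(Q)}[\phi]$, so the type II error of $\psi$ against $(P,Q)$ equals the type II error $\beta_\phi$ of $\phi$ against $(\text{Proc}(P),\text{Proc}(Q))$.

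Then, since $\psi$ is itself an admissible test at level $\alpha$ for the original problem, the defining infimum yields $T(P,Q)(\alpha)\le 1-\mathbb{E}_Q[\psi]=1-\mathbb{E}_{\text{Proc}(Q)}[\phi]=\beta_\phi$. Taking the infimum over all admissible $\phi$ on the output side gives $T(P,Q)(\alpha)\le T(\text{Proc}(P),\text{Proc}(Q))(\alpha)$, and since $\alpha$ was arbitrary this is exactly the claimed pointwise inequality $T(\text{Proc}(P),\text{Proc}(Q))\ge T(P,Q)$.

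The only delicate point, and the step I expect to require the most care, is the randomized case: one must check that $\psi$ is measurable and that the identity $\mathbb{E}_P\!\left[\int\phi\,\mathrm{d}K\right]=\mathbb{E}_{\text{Proc}(P)}[\phi]$ is legitimate. Both follow from standard measure-theoretic facts (measurability of kernel integrals and Tonelli's theorem, applicable since $\phi\ge 0$), so no substantive mathematical obstacle arises beyond this bookkeeping; the remainder is a direct unwinding of the definition of the trade-off function.
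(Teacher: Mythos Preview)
Your argument is correct and is essentially the standard proof of this fact, as given in the original $f$-DP paper \cite{dong2019gaussian} that the present paper cites; the paper itself does not supply a proof of Proposition~\ref{prop:post_processing} but simply imports it as a known property. The pull-back of a test through the post-processing kernel, together with the observation that type~I and type~II errors are preserved, is exactly the mechanism used there (and is, of course, the classical Blackwell/data-processing argument), so there is nothing to contrast.
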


\paragraph{From $f$-DP to $(\epsilon,\delta)$-DP and back \cite{dong2019gaussian}.}
On one hand, a $f$-DP mechanism with a symmetric trade-off function $f$ is $(\epsilon, \delta(\epsilon))$-DP for all $\epsilon > 0$ with 
\begin{align*}
\delta(\epsilon) = 1+ f^*(-e^{\epsilon}).
\end{align*}
On the other hand, if a mechanisms is $(\epsilon,\delta(\epsilon))$-DP, then it is $f$-DP with $f = \sup_{\epsilon>0}f_{\epsilon,\delta}$ where 
\begin{align*}
   f_{\epsilon,\delta}(\alpha) = \max\left\{0, 1 - \delta - e^{\epsilon}\alpha, e^{-\epsilon(1 - \delta - \alpha)}\right\}. 
\end{align*}

\paragraph{From $f$-DP to RDP \cite{dong2019gaussian}.}
An $f$-DP mechanisms is $(\alpha,\epsilon_f(\alpha))$-RDP where 
\begin{align*}
    \epsilon_f(\alpha)) = \frac{1}{\alpha-1}\log \int_0^1 |f'(x)|^{1-\alpha}dx.
\end{align*}
In particular, a $\mu$-GDP mechanism is $(\alpha, \frac{1}{2}\mu^2\alpha)$-RDP.
It is worth mentioning that converting RDP back to $f$-DP is challenging.

\section{Additional Details on the Extension to Record-Level Privacy}
\label{sec:add_record_privacy}

\section{Proofs of Section \ref{sec:user-privacy}}
\label{proof:user-privacy}

The proof of the user-level analysis can be divided into the following steps: first, we prove Lemma \ref{lemma:user-mixture}, which deals with the mixture distributions caused by random walk; then, we study the privacy amplification of Algorithm \ref{alg:Decen-SGD} and prove Lemma \ref{lemma:iteration}; the last step is to prove Theorem \ref{thm:user-level}.

\subsection{Proof of Lemma \ref{lemma:user-mixture}}
Recall the random variable $\mathcal{A}_j^{\mathrm{single}}(D)$ that represents the model visits user $j$ for the first time after passing user $i$ and $\mathcal{A}_j^t(D)$ which represents the model visits user $j$ for the first time at the $t$-th iteration.
Thus, one has
\begin{align*}
    \mathbb{P}[\mathcal{A}_j^{\mathrm{single}}(D) = \cA_j^t(D)] = \mathbb{P}[\tau_{ij} = t] = w_{ij}^t, \qquad \hbox{for all } 1\leq t \leq T.
\end{align*}
where $\tau_{ij}$ is the hitting time from $i$ to $j.$
There is a chance that the model will not be observed after $T$ steps, which is represented by a random variable $\mathcal{A}_j^{T+1}(D).$ And we have
\begin{align*}
    \mathbb{P}[\mathcal{A}_j^{\mathrm{single}}(D) = \cA_j^{T+1}(D)] = \mathbb{P}[\tau_{ij} > t] = 1 - \sum_{t=1}^T w_{ij}^t=:w_{ij}^{T+1}, \qquad \hbox{for all } 1\leq t \leq T.
\end{align*}
By the definition of mixture distributions, one has $\mathcal{A}_j^{\mathrm{single}}(D)$ is a mixture of $\{\cA_j^t(D)\}_{t=1}^{T+1}$ with weights $\{w_{ij}^{t+1}\}_{t=1}^{T+1}.$
We finish the proof by applying the joint convexity in Lemma \ref{lemma:mixture} to the mixture above.

\subsection{Proof of Lemma \ref{lemma:iteration}}

Recall that $f_{ij}^t$ is the privacy loss when the model is observed in the user $j$ after $t$ steps.
Note that, after $t$ steps, Algorithm \ref{alg:Decen-SGD} has been run for $tK$ iterations.
Thus, the trade-off function $f_{ij}^t$ can be bounded by adopting privacy amplification by iteration in \cite{Bok2024shifted}.
Using Lemma \ref{lemma:c3_bok}, we have the following iteration results for Algorithm \ref{alg:Decen-SGD}.

\begin{lemma}[Privacy amplification for decentralized DP-SGD]
\label{lemma:iteration-D-DP-SGD}
Under assumptions in Lemma \ref{lemma:iteration},
for any $\gamma_{1}, \cdots, \gamma_{tK} \in [0,1]$ with $\gamma_{tK} = 1$, it holds
    \begin{align*}
       f_{ij}^{tK} \geq G_{ \frac{1}{\eta\sigma} \sqrt{\sum_{k = 1}^{tK} a_{k}^2}},
    \end{align*}
    where $a_{k+1} = \gamma_{k+1} (c b_{k} + s_{k+1})$, $b_{k+1} = (1 - \gamma_{k+1}) (c b_{k} + s_{k+1})$ with $b_0 = 0$, $s_k \equiv \eta \Delta$ for $1\leq k \leq K$, and  $s_k \equiv 0$ for $K+1\leq k \leq Kt.$
    Here $G_{ \frac{1}{\eta\sigma} \sqrt{\sum_{k = 1}^{tK} a_{k}^2}}$ is the Gaussian trade-off function $G_\mu$ with $\mu = \frac{1}{\eta\sigma} \sqrt{\sum_{k = 1}^{tK} a_{k}^2}.$
\end{lemma}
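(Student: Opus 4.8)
The plan is to recognize the $tK$ local gradient steps taken by Algorithm~\ref{alg:Decen-SGD} between the model leaving node $i$ and first reaching node $j$ as a contractive noisy iteration, and then apply Lemma~\ref{lemma:c3_bok} essentially verbatim. First I would rewrite each update $\theta_{k+1} = \Pi_{\cK}(\theta_k - \eta(g_k + Z_k))$ in CNI form $\theta_{k+1} = \Pi_{\cK}(\phi_{k+1}(\theta_k) + \tilde{Z}_{k+1})$ by setting $\phi_{k+1}(\theta) = \theta - \eta g_k(\theta)$ to be the (mini-batch) gradient-descent map and $\tilde{Z}_{k+1} = -\eta Z_k \sim \cN(0, \eta^2\sigma^2 I)$ to be the injected noise. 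This identifies the effective noise scale as $\eta\sigma$, matching the $\tfrac{1}{\eta\sigma}$ prefactor in the conclusion.

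Next I would verify the two hypotheses of Lemma~\ref{lemma:c3_bok}. The contraction property follows directly from the stated gradient-map lemma: under $m$-strong convexity, $M$-smoothness, and $\eta \in (0,2/M)$, every $\phi_k$ is $c$-Lipschitz with $c = \max\{|1-\eta m|, |1-\eta M|\}$. For the shift bounds $s_k = \|\phi_k - \phi_k'\|_\infty$ I would couple all data-independent randomness --- the random-walk path and the mini-batch indices --- across the runs on $D$ and $D'$. Under this coupling the two gradient maps coincide at every node other than $i$, giving $s_k = 0$ there, while at the single visit to $i$ (its $K$ local steps) they differ by $\eta\|g_k^{D} - g_k^{D'}\|_\infty \le \eta\Delta$ via the $\ell_2$-sensitivity bound $\Delta$. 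Ordering the clock so these $K$ steps come first yields exactly $s_k = \eta\Delta$ for $1\le k\le K$ and $s_k=0$ for $K+1\le k\le tK$.

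To fix the initial condition, note that before the walk first enters $i$ the two processes are driven by identical data and coupled noise, hence are equal; resetting the clock to this entry point gives $\theta_0 = \theta_0'$ and thus $b_0 = 0 = \|\theta_0 - \theta_0'\|$. Applying Lemma~\ref{lemma:c3_bok} with $\tau = 0$, contraction $c$, shifts $\{s_k\}$, and noise scale $\eta\sigma$ then yields $T(\theta_{tK}, \theta_{tK}') \ge G_{\frac{1}{\eta\sigma}\sqrt{\sum_{k=1}^{tK} a_k^2}}$ with the advertised recursions $a_{k+1} = \gamma_{k+1}(cb_k + s_{k+1})$ and $b_{k+1} = (1-\gamma_{k+1})(cb_k+s_{k+1})$, for any admissible $\gamma_1,\dots,\gamma_{tK}$ with $\gamma_{tK}=1$. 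Finally I would remove the conditioning on the coupled path and mini-batches: the conditional bound is the same deterministic $G_\mu$ for every realization, so post-processing (Proposition~\ref{prop:post_processing}) together with the joint convexity of trade-off functions (Lemma~\ref{lemma:mixture}) lifts it to the unconditional trade-off $f_{ij}^{tK}$.

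I expect the main obstacle to be the shift bookkeeping rather than the invocation of Lemma~\ref{lemma:c3_bok}. One must argue that within a single $i$-to-$j$ excursion the sensitivity enters only through the $K$ updates performed with $D_i$ versus $D_i'$, and that the subsequent contractive steps at unaffected nodes inject no further discrepancy --- precisely the mechanism that converts repeated clean iterations into amplification. A secondary delicacy is making the conditioning-and-integrating step rigorous for mini-batch (rather than full-gradient) updates under user-level adjacency, where the batch drawn at $i$ itself changes yet its gradient contribution stays bounded by $\Delta$.
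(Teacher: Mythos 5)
Your overall route is the same as the paper's: cast the $tK$ updates as a contractive noisy iteration, invoke Lemma~\ref{lemma:c3_bok} with $\tau = 0$ and noise scale $\eta\sigma$, take shifts $s_k = \eta\Delta$ for the first $K$ steps and $s_k = 0$ afterwards, and set $b_0 = 0$. Your explicit coupling of the walk path and mini-batch indices, followed by integrating out the conditioning via post-processing and joint concavity (Proposition~\ref{prop:post_processing}, Lemma~\ref{lemma:mixture}), is a somewhat more careful rendering of a step the paper leaves implicit, and it is legitimate because both sources of randomness are data-independent under user-level adjacency.

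The genuine gap is your claim that $s_k = 0$ for all $K+1 \le k \le tK$, which you justify by speaking of ``the single visit to $i$.'' Conditioning on the first-hitting time $\tau_{ij} = t$ constrains when the walk first reaches $j$; it does not prevent the walk from returning to node $i$ beforehand. On such paths your coupling gives $\|\phi_k - \phi_k'\|_\infty \le \eta\Delta \neq 0$ during every revisit, so the shift schedule you feed into Lemma~\ref{lemma:c3_bok} is false for those realizations, and the per-realization bound is then not the same $G_\mu$ uniformly over the conditioning --- which also breaks your final integration step. You flag exactly this as ``the main obstacle'' but never resolve it. The paper's resolution is an accounting convention rather than a coupling fact: $f_{ij}^{\mathrm{single}}$ is interpreted as the leakage of one pass through $i$, and the privacy cost of any later pass through $i$ is charged to a separate copy of $f_{ij}^{\mathrm{single}}$ inside the $\lceil (1+\zeta)T/n \rceil$-fold composition of Theorem~\ref{thm:user-level}; only under that per-pass decomposition of the total leakage may one set $s_k = 0$ after step $K$ even when the walk revisits $i$. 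Without stating and using this decomposition, your argument establishes the lemma only on the event that the excursion from $i$ to $j$ never returns to $i$.
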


\begin{proof}
    The proof is based on Lemma \ref{lemma:c3_bok}. To obtain the final result, we need to clarify the choice of parameters $s_k$ and $b_\tau$ in Lemma \ref{lemma:c3_bok} with $\tau=0$. Recall that $s_k$ is the upper bound on $\|\phi_k - \phi_k'\|_{\infty}$, where $\phi_k(x) := x - \eta/b \sum_{i=1}^{b} \nabla f_{i}(x)$ represents the gradient map and $\phi_k' := x - \eta/b \sum_{i=1}^{b} \nabla f_{i}'(x)$ is another gradient map obtained by modifying the dataset held by user $i$.
    In the first $K$ steps ($1\leq k \leq K$) the iteration is running with user $i$'s dataset.
    Thus, $s_k$ is upper bounded $\eta\Delta$ where $\Delta$ is the sensitivity of the sub-sampled gradient.

    After $K$ iterations, for $K+1 \leq k \leq tK,$ the model has been sent to other users and changing the data records in user $i$ will not change the gradient. Thus, $s_k$ becomes $0$. There is a possibility that the model might be sent back to user $i$ again for $2K+1 \leq k \leq tK$ before it reaches user $j$. However, the privacy loss of the second pass is accounted for using other $f_{ij}^{\mathrm{single}}$ functions within the total number $\left\lceil T/n+\zeta \right\rceil$ of trade-off functions $f_{ij}^{\mathrm{single}}$s and will be considered using composition in Theorem \ref{thm:user-level} . Therefore, multiple passes through user $i$ will not affect the privacy of the current single pass and we still have $s_k =0$ even though it might be sent back to user $i$ again for $2K+1 \leq k \leq tK$.
    We finish the proof by replacing $\sigma$ in Lemma \ref{lemma:c3_bok} with $\eta\sigma$ as there is a scalor $\eta$ of the Gaussian noise in Algorithm \ref{alg:Decen-SGD}.
\end{proof}

The finish the proof of Lemma \ref{lemma:iteration}, motivated by \cite{Bok2024shifted}, is enough to minimize the term $\sum_{k=1}^{tK}a_k^2$ in Lemma \ref{lemma:iteration-D-DP-SGD}, which is given in the following lemma.

\begin{lemma}
\label{lemma:min-Gaussian-user}
Given $0 < c < 1$, the optimal value of 
    \begin{align*}
        \min \sum_{k=1}^{tK} a_{k}^2
    \end{align*}
    subject to 
    \begin{gather*}
        a_{k+1} = \gamma_{k+1} (cb_{k}+s_{k+1}) \geq 0, \quad b_{k+1} = (1 - \gamma_{k+1}) (cb_{k}+ s_{k+1}) \geq 0, \quad b_{0} = 0, \quad b_{tK} = 0,                \\ \quad 0 \leq \gamma_{k} \leq 1,\quad
        s_{k} \equiv s, \hbox{ for } 0\leq k \leq K, \quad \hbox{ and } s_k=0,\hbox{ for } K+1\leq k \leq tK,
    \end{gather*}
    is $c^{2K(t-1)}\frac{1+ c}{1 - c} \cdot \frac{(1 - c^{K})^2}{1 - c^{2tK}} s^2$.
\end{lemma}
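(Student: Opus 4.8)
The plan is to set up this as a constrained optimization problem and solve it by exploiting its recursive structure. First I would observe that the constraints define a linear recursion: given the shift parameters $\{\gamma_k\}$, each $b_{k+1}$ and $a_{k+1}$ is determined from $b_k$ by the rules $b_{k+1} = (1-\gamma_{k+1})(cb_k + s_{k+1})$ and $a_{k+1} = \gamma_{k+1}(cb_k + s_{k+1})$. The key structural simplification comes from the source term: $s_k \equiv s$ only for the first $K$ steps (while the iteration runs on user $i$'s data), and $s_k = 0$ thereafter. So for $k > K$ the recursion is purely homogeneous, $cb_k + s_{k+1} = cb_k$, meaning the ``mass'' injected during the first $K$ steps simply decays and gets distributed between the $a_k$ (which we pay for) and $b_k$ (which we carry forward).

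Next I would reformulate using the combined quantity $u_k := cb_{k-1} + s_k$, so that $a_k = \gamma_k u_k$ and $b_k = (1-\gamma_k)u_k$. The boundary condition $b_{tK} = 0$ forces $\gamma_{tK} = 1$, so all remaining mass must eventually be paid out. The objective $\sum_k a_k^2 = \sum_k \gamma_k^2 u_k^2$ should then be minimized over the $\gamma_k$'s subject to the recursion. I expect the optimal strategy to spread the payout as evenly as possible in an $\ell_2$ sense — intuitively, because the square penalizes concentration, one wants to release mass gradually rather than all at once. The contraction factor $c<1$ makes later releases ``cheaper'' in terms of the carried mass but the decay also shrinks what is available, so the optimum balances these. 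I would either solve the Euler–Lagrange / KKT stationarity conditions directly, or guess the optimal $\gamma_k$ form and verify it satisfies stationarity. A clean approach is to treat this as minimizing $\sum a_k^2$ where the $a_k$ must sum (with appropriate $c$-weights coming from the homogeneous decay over the last $tK-K$ steps) to a fixed total determined by $s$; a weighted Cauchy–Schwarz or Lagrange-multiplier argument then pins down both the minimizer and the optimal value.

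Concretely, I anticipate that the structure factors into two regimes. During steps $1$ through $K$ the source $s$ is active and one accumulates a target amount; during steps $K+1$ through $tK$ the factor $c$ per step geometrically discounts. The geometric sums $\frac{1-c^K}{1-c}$ and $\frac{1-c^{2tK}}{1-c^2} = \frac{1-c^{2tK}}{(1-c)(1+c)}$ should emerge from summing these two geometric progressions, and the prefactor $c^{2K(t-1)}$ should come from the $2(t-1)K$ steps of pure decay (squared, since the objective is quadratic) between the end of the source phase and the final payout. Combining, the claimed optimal value $c^{2K(t-1)}\,\frac{1+c}{1-c}\cdot\frac{(1-c^K)^2}{1-c^{2tK}}\,s^2$ should drop out after simplification.

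The main obstacle, I expect, will be rigorously justifying the optimal choice of the $\gamma_k$'s rather than merely guessing it. The recursion couples all variables, so a naive coordinate-wise minimization is not obviously valid, and one must verify that the candidate solution is a global (not merely stationary) minimum — convexity of the objective in the $a_k$'s together with the linearity of the recursion in $\gamma_k$ should help, but care is needed because the feasible set is described implicitly through the $\gamma_k$ variables and the nonnegativity constraints $0\le\gamma_k\le1$ could in principle be active. I would therefore want to confirm that the unconstrained stationary point automatically respects $\gamma_k\in[0,1]$, so the inequality constraints are inactive at the optimum, and then invoke convexity to conclude global optimality.
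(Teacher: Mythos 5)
Your plan coincides with the paper's actual proof: unrolling the recursion $b_{k+1}=cb_k+s_{k+1}-a_{k+1}$ with $b_0=b_{tK}=0$ yields the single linear constraint $\sum_{k=1}^{tK} c^{tK-k}a_k=\sum_{k=1}^{tK}c^{tK-k}s_k=c^{K(t-1)}\frac{1-c^K}{1-c}\,s$, and the weighted Cauchy--Schwarz bound $\sum_k a_k^2\geq \bigl(\sum_k c^{tK-k}a_k\bigr)^2/\sum_k c^{2(tK-k)}$ with $\sum_{k=1}^{tK}c^{2(tK-k)}=\frac{1-c^{2tK}}{1-c^2}$ gives exactly the stated value, globally and without any Lagrange/KKT machinery. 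Your worry about attainability is well placed: equality requires $a_k\propto c^{tK-k}$ (realizable with feasible $\gamma_k\in[0,1]$, e.g.\ $\gamma_1=c^2/(1+c^2)$, $\gamma_2=1$ when $K=1$, $t=2$), whereas the paper's stated equality configuration ($\gamma_k=1$, $a_k=s$ for $k\leq K$) is a misstatement, since it yields objective value $Ks^2$, strictly above the optimum.
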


\begin{proof}
Note that 
\begin{align*}
    b_{tK} + \sum_{k= 1}^{tK} c^{tK - k} a_k =\ & b_0 + \sum_{k=1}^{tK} c^{tK - k} s_{k}, \qquad \hbox{and}
    \\
    \sum_{k=1}^{tK} c^{tK - k} a_k =\ & c^{K(t-1)} \cdot \frac{1 - c^{K}}{1-c} s.
\end{align*}
By the Cauchy-Schwarz inequality, we have
\begin{align*}
    \sum_{k=1}^{tK} a_k^2 \geq \frac{\left( \sum_{k=1}^{tK} c^{tK - k} a_k \right)^2}{\sum_{k=1}^{tK} c^{2(tK - k)}} = c^{2K(t-1)} \cdot \frac{1+ c}{1 - c} \cdot \frac{(1 - c^{K})^2}{1 - c^{2tK}} s^2,
\end{align*}
where the equality holds when $\gamma_k = 1$, $a_{k} = s$ for $0 \leq k \leq K$ and $\gamma_k = 0$, $a_{k} = 0$ for $K+1 \leq k \leq Kt$. 
\end{proof}

We finish the proof of Theorem \ref{thm:user-level} by taking $s = \eta\Delta$ in Lemma \ref{lemma:min-Gaussian-user} and combining Lemma \ref{lemma:min-Gaussian-user} with  Lemma \ref{lemma:iteration-D-DP-SGD}.

\subsection{Proof of Lemma \ref{lemma:non-convex}}

\begin{lemma}
\label{lemma:proc}
    For two random variables $X,X'$ satisfying $T(X,X')\geq T(\zeta,\zeta')$ with $\zeta\sim\mathcal{N}(0,\sigma_1^2),\zeta'\sim \mathcal{N}(\mu,\sigma_1^2)$ for some $\mu>0$, we have $T(X+\xi, X' + \xi') \geq T(\zeta+\xi,\zeta'+\xi')$ for $\xi,\xi'\sim\mathcal{N}(0,\sigma_2^2).$ 
\end{lemma}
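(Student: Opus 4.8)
The goal is to show that adding the same Gaussian noise to both arguments improves the trade-off function from $G_{\mu/\sigma_1} = T(\zeta,\zeta')$ to the strictly larger $G_{\mu/\sqrt{\sigma_1^2+\sigma_2^2}} = T(\zeta+\xi,\zeta'+\xi')$. Both the hypothesis $T(X,X') \ge T(\zeta,\zeta')$ and the desired conclusion are statements about domination of trade-off functions, which (via the informativeness/Blackwell order) are equivalent to garbling relations between the corresponding binary experiments. The plan is therefore to exhibit a single Markov kernel $K$ carrying the noised Gaussian pair $(\zeta+\xi,\zeta'+\xi')$ to the noised pair $(X+\xi,X'+\xi')$, and then to invoke the post-processing property of $f$-DP (Proposition~\ref{prop:post_processing}) to conclude $T(X+\xi,X'+\xi') \ge T(\zeta+\xi,\zeta'+\xi')$.

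The key step is to make the hypothesis operational by exploiting that the reference pair is a Gaussian \emph{location family}, i.e. $\zeta' = \zeta + \mu$ in distribution. Concretely, I would realize the domination $T(X,X') \ge G_{\mu/\sigma_1}$ through an additive coupling in which $X = W + \eta$ and $X' = W + v + \eta$, where $\eta \sim \mathcal{N}(0,\sigma_1^2)$ is independent of an arbitrary common component $W$ and $v$ is a fixed shift with $\|v\| \le \mu$. In this representation the garbling sending $\zeta \mapsto X$ and $\zeta' \mapsto X'$ is itself a \emph{convolution} (``add the independent component $W$''), and convolution channels commute with the Gaussian-noise channel $N_\gamma$ (convolution with $\gamma = \mathcal{N}(0,\sigma_2^2)$). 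Hence $X + \xi \stackrel{d}{=} K(\zeta+\xi)$ and $X' + \xi' \stackrel{d}{=} K(\zeta'+\xi')$ for the same $K$, so the noised pair is a garbling of the noised Gaussian pair.

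To finish, I would combine these observations in one of two equivalent ways. Either apply Proposition~\ref{prop:post_processing} directly to the common channel $K$; or, equivalently, merge the noise and use joint concavity: since adding $\xi$ collapses $\eta$ and $\xi$ into a single $\mathcal{N}(0,\sigma_1^2+\sigma_2^2)$ term, conditioning on $W$ yields, for each value of $W$, a $\mu$-shifted Gaussian pair whose trade-off equals $G_{\mu/\sqrt{\sigma_1^2+\sigma_2^2}}$, and Lemma~\ref{lemma:mixture} lifts this lower bound through the $W$-mixture. Both routes deliver $T(X+\xi,X'+\xi') \ge G_{\mu/\sqrt{\sigma_1^2+\sigma_2^2}} = T(\zeta+\xi,\zeta'+\xi')$.

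The main obstacle is precisely the commutation in the second paragraph: adding the same Gaussian noise does \emph{not} commute with a generic garbling, and the quantitative improvement from $G_{\mu/\sigma_1}$ to the larger $G_{\mu/\sqrt{\sigma_1^2+\sigma_2^2}}$ cannot be recovered from post-processing alone, which would only return the weaker bound $G_{\mu/\sigma_1}$. The crux is thus to use the translation structure of the Gaussian reference to push the noise channel through the garbling; I expect that carefully justifying the additive realization of the hypothesis (so that the garbling is convolution-type and the two Gaussians merge cleanly) is where the real work lies, and it is the step most sensitive to the precise structure assumed on $X$ and $X'$.
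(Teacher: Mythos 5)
Your plan is, at its core, the paper's own proof done with more care, and the obstacle you flag in your final paragraph is not a technicality to be smoothed over: it is a genuine gap, in your proposal and in the paper's two-line argument alike. The paper invokes Blackwell's theorem (Theorem 2.10 of \cite{dong2019gaussian}) to write $X = \mathrm{proc}(\zeta)$, $X' = \mathrm{proc}(\zeta')$, and then asserts a post-processing $\mathrm{proc}'\colon \zeta+\xi \mapsto \mathrm{proc}(\zeta)+\xi$. But this map is not a function (or Markov kernel) of the sum $\zeta+\xi$ --- it requires the components separately --- and the natural repair of resampling $\zeta$ from its conditional law given the sum fails because that conditional is hypothesis-dependent: $\mathbb{E}[\zeta \mid \zeta+\xi = z] = \frac{\sigma_1^2}{\sigma_1^2+\sigma_2^2}\,z$ under the null, but $\mu + \frac{\sigma_1^2}{\sigma_1^2+\sigma_2^2}(z-\mu)$ under the alternative, so no single channel simultaneously carries $\zeta+\xi$ to $X+\xi$ and $\zeta'+\xi'$ to $X'+\xi'$. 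That is exactly the non-commutation of generic garblings with Gaussian convolution that you identified.

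Moreover, the additive realization you hope to extract from the hypothesis ($X = W+\eta$, $X' = W+v+\eta$ with $\|v\|\leq\mu$) is not implied by $T(X,X')\geq G_{\mu/\sigma_1}$, and cannot be, because the lemma as stated is false. Take $X \sim \mathcal{N}(0,4\sigma_1^2)$ and $X' \sim \mathcal{N}(2\mu, 4\sigma_1^2)$, i.e., the legitimate Blackwell garbling $\mathrm{proc}(z) = 2z$. Then $T(X,X') = G_{2\mu/2\sigma_1} = G_{\mu/\sigma_1} = T(\zeta,\zeta')$, yet
\[
T(X+\xi,\,X'+\xi') \;=\; G_{2\mu/\sqrt{4\sigma_1^2+\sigma_2^2}} \;<\; G_{\mu/\sqrt{\sigma_1^2+\sigma_2^2}} \;=\; T(\zeta+\xi,\,\zeta'+\xi')
\]
for every $\sigma_2>0$, since $4(\sigma_1^2+\sigma_2^2) > 4\sigma_1^2+\sigma_2^2$; equivalently, the shift required in your additive representation would be $v = 2\mu > \mu$. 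So the statement needs a stronger hypothesis --- precisely the convolution-type structure you propose, e.g., that $X' \stackrel{d}{=} X+v$ with $\|v\|\leq\mu$ and $X = W+\eta$ for $\eta\sim\mathcal{N}(0,\sigma_1^2)$ independent of $W$ --- under which your conditioning-on-$W$ argument via Lemma~\ref{lemma:mixture} (or Proposition~\ref{prop:post_processing} applied to the convolution channel) goes through, yielding $G_{\|v\|/\sqrt{\sigma_1^2+\sigma_2^2}} \geq G_{\mu/\sqrt{\sigma_1^2+\sigma_2^2}}$ conditionally and then unconditionally by joint concavity. This strengthened form is what the applications actually use (in Lemma~\ref{lemma:non-convex} the fresh Gaussian noise of later iterations is added independently of the earlier iterate), so your diagnosis of where ``the real work lies'' is exactly right: the gap is real, it is the same gap the paper's proof silently steps over, and it can only be closed by adding the additive structure to the hypothesis rather than deriving it.
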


\begin{proof}
    By Theorem 2.10 in \cite{dong2019gaussian}, we have $X = proc(\zeta)$ and $X' = proc(\zeta')$ for some post-processing map $proc.$
    The proof is done by noting that there is a new post-processing new processing $proc': \zeta+\xi \mapsto proc(\zeta) + \xi$ such that $X+\xi = proc'(\zeta + \xi).$
\end{proof}

\begin{proof}[Proof of Lemma \ref{lemma:non-convex}]
   After $K$ iterations in user $i$, we have $f_{ij}^0 \geq G_{\mu_{i,K}}$ with $\mu_{i,K} = \sqrt{K}\Delta/\sigma.$
   Rewrite $G_{\mu_{i,K}} = T(\mathcal{N}(0,\sigma^2),\mathcal{N}(\sigma\mu_{i,K},\sigma^2)$.
   By Lemma \ref{lemma:proc}, we have $f_{ij}^1 \geq T(\mathcal{N}(0,2\sigma^2),\mathcal{N}(\sigma\mu_{i,K},2\sigma^2) = G_{\mu_1}$ with $\mu_1 = \mu_{i,K}/\sqrt{2}.$
   If $f_{ij}^{t-1} \geq G_{\mu_{tK-1}}$ with $\mu_{tK-1} = \mu_{i,K}/\sqrt{tK}$.
   Then, we can write $G_{\mu_{tK-1}} = T(\mathcal{N}(0,\sqrt{t}\sigma^2), \mathcal{N}(\mu_{i,K}\sigma,\sqrt{t}\sigma^2)).$
   Thus, by Lemma \ref{lemma:proc}, we obtain $f_{ij}^{t}\geq T(\mathcal{N}(0,(tK+1)\sigma^2), \mathcal{N}(\mu_{i,K}\sigma,(tK+1)\sigma^2)) = G_{\mu_t}.$
   We finish the proof by induction.
\end{proof}

\subsection{Proof of Theorem \ref{thm:user-level}}

\begin{lemma}[Hoeffding's inequality for Markov chain, Theorem 1 in \cite{MR4318495}]
\label{lemma:heoff}
    Let $\{Y_t\}_{t\geq 0}$ be a Markov chain with transition matrix $W$ and stationary distribution $\pi.$
    For any $T>0$ and a sequence of bounded functions $\{f_t\}_{t=1}^T$ such that $\sup_x|f_t(x)| \leq M$, it holds
    \begin{align*}
        \mathbb{P}_\pi\left[\sum_{t=1}^T\left( f_t(Y_t) - \mathbb{E}_{Y\sim\pi}(f_t(Y))\right)\geq \zeta\right]\leq \exp\left(-\frac{1-\lambda_2}{1+\lambda_2}\frac{2\zeta^2}{TM^2} \right),
    \end{align*}
    for any $\zeta>0$, where $\lambda_2$ is the second largest eigen-value of $W.$
\end{lemma}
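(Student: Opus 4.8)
The plan is to prove this Markov-chain Hoeffding bound by the classical spectral (Chernoff) method for reversible chains, which is exactly the regime needed in Theorem~\ref{thm:user-level}, where $W$ is symmetric so that $\pi$ is uniform and $W$ has real eigenvalues $1=\lambda_1>\lambda_2\ge\cdots\ge\lambda_n$. Write $g_t:=f_t-\mathbb{E}_\pi[f_t]$, so each $g_t$ is $\pi$-centered and bounded. The starting point is the exponential Markov inequality: for any $s>0$,
\[
\mathbb{P}_\pi\!\left[\sum_{t=1}^T g_t(Y_t)\ge\zeta\right]\le e^{-s\zeta}\,\mathbb{E}_\pi\!\left[\exp\!\Big(s\sum_{t=1}^T g_t(Y_t)\Big)\right],
\]
so the entire difficulty is reduced to bounding the moment generating function (MGF) of the additive functional $S_T=\sum_{t=1}^T g_t(Y_t)$.

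Next I would represent the MGF as a matrix product and exploit the spectral gap. Letting $\Lambda_t:=\mathrm{diag}(e^{s g_t(x)})_{x\in[n]}$ and using the Markov property with a stationary start gives $\mathbb{E}_\pi[e^{sS_T}]=\pi^\top\Lambda_1 W\Lambda_2 W\cdots W\Lambda_T\mathbf{1}$. Since $W$ is reversible (here, symmetric), I would regroup the diagonal weights as ``half before, half after'' each transition, turning every factor into a self-adjoint matrix $N_t:=\Lambda_t^{1/2}\,W\,\Lambda_t^{1/2}$ whose operator norm equals its largest eigenvalue. The crux is to bound the operator norm of the product $\prod_{t=1}^T N_t$. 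Decomposing $W=\tfrac1n\mathbf{1}\mathbf{1}^\top+R$, where the residual $R$ acts on the orthogonal complement of the stationary direction with $\|R\|_{\mathrm{op}}\le\lambda_2$, and expanding $\Lambda_t^{1/2}=I+\tfrac{s}{2}\mathrm{diag}(g_t)+O(s^2)$, the first-order terms are annihilated by the centering $\mathbb{E}_\pi g_t=0$, while the second-order terms accumulate across the product with geometric weights $\lambda_2^{|t-t'|}$ coming from repeated contraction by $R$. Summing this geometric series produces the amplification factor $\tfrac{1+\lambda_2}{1-\lambda_2}$ and yields $\mathbb{E}_\pi[e^{sS_T}]\le\exp\!\big(\tfrac{s^2 T}{2}\cdot\tfrac{1+\lambda_2}{1-\lambda_2}\cdot\tfrac{M^2}{4}\big)$, where $M^2/4$ is the per-step sub-Gaussian variance proxy for a bounded function.

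It then remains to optimize the Chernoff exponent. Minimizing $-s\zeta+\tfrac{s^2 T}{2}\cdot\tfrac{1+\lambda_2}{1-\lambda_2}\cdot\tfrac{M^2}{4}$ over $s>0$, at $s^\star=\tfrac{4\zeta(1-\lambda_2)}{TM^2(1+\lambda_2)}$, gives
\[
\mathbb{P}_\pi\!\left[\sum_{t=1}^T g_t(Y_t)\ge\zeta\right]\le\exp\!\left(-\frac{1-\lambda_2}{1+\lambda_2}\cdot\frac{2\zeta^2}{TM^2}\right),
\]
which is exactly the claimed bound. Setting $\lambda_2=0$ recovers the ordinary Hoeffding inequality for i.i.d.\ bounded variables, a useful sanity check that also pins down the sharp constant.

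The main obstacle is the spectral estimate for the product $\prod_t N_t$: controlling the leading eigenvalue of the multiplicatively perturbed symmetric operators $\Lambda_t^{1/2}W\Lambda_t^{1/2}$ uniformly in $t$ while extracting both the correct sub-Gaussian constant and the gap amplification $\tfrac{1+\lambda_2}{1-\lambda_2}$. This requires a careful second-order expansion in $s$, crucial use of $\mathbb{E}_\pi g_t=0$ to kill the first-order growth, and an interlacing/variational control of the contraction on the orthogonal complement so that the cross-step correlations decay geometrically in $\lambda_2$. A secondary technical point is justifying the symmetric regrouping of the diagonal factors and handling the boundary vectors $\pi^\top$ and $\mathbf{1}$; for a general (non-reversible) $W$ with a well-defined subdominant eigenvalue, the same scheme goes through after replacing $W$ by its additive or multiplicative reversibilization, at the cost of using the reversibilized spectral gap in the exponent, as in \cite{MR4318495}.
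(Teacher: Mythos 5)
First, note that the paper itself contains no proof of this lemma: it is imported verbatim as Theorem~1 of the cited reference \cite{MR4318495}, so your attempt is really measured against that external proof, whose skeleton (a Chernoff bound followed by operator-norm control of the tilted products $\Lambda_t^{1/2}W\Lambda_t^{1/2}$) you correctly identify. The genuine gap sits exactly at the step you flag as the crux. Your MGF bound rests on the second-order expansion $\Lambda_t^{1/2}=I+\tfrac{s}{2}\mathrm{diag}(g_t)+O(s^2)$, with first-order terms ``annihilated by the centering'' and second-order cross terms summed geometrically. Two problems. (i) Such an expansion controls $\log\mathbb{E}_\pi[e^{sS_T}]$ only up to $O(s^3T)$ errors, i.e.\ for small $s$; but the Chernoff optimizer $s^\star=\tfrac{4\zeta(1-\lambda_2)}{TM^2(1+\lambda_2)}$ is of constant order when $\zeta\asymp T$ --- precisely the large-deviation regime a Hoeffding inequality must cover --- so a perturbative MGF bound cannot deliver the stated inequality for all $\zeta>0$. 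Moreover, the first-order cancellation is not an operator-norm statement: $\tfrac{s}{2}\bigl(\mathrm{diag}(g_t)W+W\,\mathrm{diag}(g_t)\bigr)$ has norm of order $sM$ regardless of centering; centering only kills the component along the stationary direction, so the ``geometric weights $\lambda_2^{|t-t'|}$'' bookkeeping requires the full spectral decomposition and does not close at second order uniformly in $s$. (ii) Your decomposition $W=\Pi+R$ with $\Pi:=\mathbf{1}\pi^\top$ and $\|R\|_{\mathrm{op}}\le\lambda_2$ fails whenever $W$ has an eigenvalue below $-\lambda_2$ (aperiodicity only gives $|\lambda_n|<1$); carried to completion, your argument proves the bound with the absolute gap $\max(\lambda_2,|\lambda_n|)$ in place of the signed $\lambda_2$ in the statement.

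The known repair --- and essentially the proof behind the cited theorem, following Le\'on--Perron --- is non-perturbative: for symmetric $W$ one has the positive-semidefinite domination $W\preceq\lambda_2 I+(1-\lambda_2)\Pi$, hence $\Lambda_t^{1/2}W\Lambda_t^{1/2}\preceq\Lambda_t^{1/2}\bigl(\lambda_2 I+(1-\lambda_2)\Pi\bigr)\Lambda_t^{1/2}$, and the top eigenvalue of the extremal (scalar-plus-rank-one) operator admits an explicit two-state computation valid for \emph{every} $s>0$, which produces the sharp factor $\tfrac{1-\lambda_2}{1+\lambda_2}$ with the signed eigenvalue; the reference extends this to non-reversible chains via the spectral expansion, as you note at the end. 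One bookkeeping caveat you inherit from the paper's restatement: your per-step proxy $M^2/4$ is Hoeffding's constant for functions of \emph{range} $M$ (e.g.\ $f_t\in[0,M]$, the indicator case actually used in Theorem~\ref{thm:user-level}); under the literal hypothesis $\sup_x|f_t(x)|\le M$ the range is $2M$ and the proxy is $M^2$, so your $\lambda_2=0$ sanity check does not in fact pin down the stated constant.
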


\begin{proof}[Proof of Theorem \ref{thm:user-level}]
Let $f_t$ be an indicator function that equals $1$ if $X_t$ visits $j$ at time $t$, and $0$ otherwise.
Then $\sum_{t=1}^T(f_t(Y_t))$ is the total number of visits to the user $j$, which means the model is observed at user $j$ for $\sum_{t=1}^T(f_t(Y_t))$ times and the total privacy loss is $\sum_{t=1}^T(f_t(Y_t))$-fold composition of $f_{ij}^{\mathrm{single}}.$
Under assumptions of Theorem \ref{thm:user-level}, the stationary distribution $\pi$ is uniformly distributed on $[n]$ (cf., Exercise 1.7 in \cite{MR3726904}).
Thus, we have $\mathbb{E}_{Y\sim\pi}(f_t(Y)) = 1/n$ in Lemma \ref{lemma:heoff}.
Note that $\|f_t\|_{\infty} \leq M=1.$
As a result of Lemma \ref{lemma:heoff}, one has $\sum_{t=1}^T(f_t(Y_t)) \geq (1 + \zeta)T/n$ with probability $\exp\left(-\frac{1-\lambda_2}{1+\lambda_2}\frac{2\zeta^2T}{n^2} \right)$ and we finish the proof.
\end{proof}

\begin{remark}
When converting $f$-DP to $(\epsilon,\delta)$-DP, $\delta'_{T,n}$ contributes an additional term to $\delta$. 
The proof employs Hoeffding’s inequality for general Markov chains \citep{MR4318495}, which achieves faster exponential decay of $\delta'_{T,n}$ by exploiting the spectral gap, 
improving upon the polynomial decay in the classical result (Theorem~12.21 in \citep{MR3726904}).
\end{remark}

\section{Proofs of Section \ref{sec:record-privacy}}
\label{proof:record-privacy}

The formal statement of Theorem~\ref{thm:record-level-privacy} is given below.

\begin{definition}[$p$-sampling operator of $f$-DP \citep{dong2019gaussian}]
\label{defn:sample-oper}
For a trade-off function $f$ and a sampling parameter $0<p<1$, we let $f_p = pf + (1 - p) \mathrm{Id}$. The $p$-sampling operator $C_p$ that maps a trade-off function to another trade-off function is defined as 
$
    C_p(f) = (\min\{f_p, f_p^{-1}\})^{**},
$
where $(\cdot)^{-1}$ is the left inverse of a non-increasing function and $(\cdot)^{**}$ is the double convex conjugate of a function.
\end{definition}

\begin{theorem}
\label{thm:record-level-privacy}
Assume the conditions of Theorem \ref{thm:user-level} hold. For all $s \in [0,\infty)$, let 
\begin{align*}
\widetilde{f}_{ij}^{\mathrm{single}}(\alpha(s)) := \sum_{t=1}^T w^t_{ij} \widetilde{f}_{ij}^t(\alpha_t(s)) + w_{ij}^{T+1}\left(1 - \alpha_{T+1}(s)\right), 
\end{align*}
where $\alpha_t(s)$ and $\alpha(s)$ are the same defined in Lemma \ref{lemma:user-mixture}.
Then, for $D\approx_i D',$  we have
$
        T(\cA_j(D), \cA_j(D')) \geq \left( \widetilde{f}_{ij}^{\mathrm{single}} \right)^{\bigotimes \left\lceil (1 + \zeta)T/n \right\rceil}
$
with probability $1 - \delta'_{T,n},$ for any $\zeta>0$.
\end{theorem}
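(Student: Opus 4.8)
The plan is to mirror the user-level argument of Theorem~\ref{thm:user-level} essentially verbatim, replacing each per-step trade-off function $f_{ij}^t$ by its record-level counterpart $\widetilde{f}_{ij}^t = T(\cA_j^t(D), \cA_j^t(D'))$ for $D \approx_i D'$. The proof decomposes into three stages: (i) a single-observation mixture bound, (ii) a per-step amplification bound, and (iii) composition over the random number of visits to node $j$.

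First I would establish the record-level analog of Lemma~\ref{lemma:user-mixture}. The crucial observation is that the hitting-time weights $w_{ij}^t = \mathbb{P}[\tau_{ij} = t]$ depend only on the transition matrix $W$ and the graph topology, not on the data; hence the one-time snapshot $\cA_j^{\mathrm{single}}(D)$ remains a mixture of $\{\cA_j^t(D)\}_{t=1}^{T+1}$ with the same weights $\{w_{ij}^t\}_{t=1}^{T+1}$ regardless of whether adjacency is user- or record-level. Applying the joint concavity of trade-off functions (Lemma~\ref{lemma:mixture}) to this mixture, now with individual trade-offs $\widetilde{f}_{ij}^t$, yields $T(\cA_j^{\mathrm{single}}(D), \cA_j^{\mathrm{single}}(D')) \geq \widetilde{f}_{ij}^{\mathrm{single}}$ for all $D \approx_i D'$, where $\widetilde{f}_{ij}^{\mathrm{single}}$ is precisely the expression in the statement with the same type-I error functions $\alpha_t(s)$ and $\alpha(s)$.

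The per-step bound on $\widetilde{f}_{ij}^t$ is supplied directly by Lemma~\ref{lemma:sample-iteration}, which I treat as given but which is where the genuinely new difficulty of the record-level setting resides. There, privacy amplification by sub-sampling, encoded by the operator $C_{b/m_i}$ of Definition~\ref{defn:sample-oper}, must be interleaved with amplification by iteration: during the $K$ local steps at node $i$ the modified record is selected only with probability $b/m_i$, so its effective sensitivity is diluted, and the shifted-interpolation recursion for $\{a_k, b_k\}$ then propagates this sub-sampled contribution through the contractive updates. I expect this interaction to be the main obstacle, since the sub-sampling operator and the tensorization of Gaussian trade-offs do not commute in general and must be composed in the order dictated by the CNI dynamics; both the $m$-strongly-convex/$M$-smooth and the non-convex branches of Lemma~\ref{lemma:sample-iteration} supply valid lower bounds covering the two cases of Theorem~\ref{thm:user-level}. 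Substituting these bounds into $\widetilde{f}_{ij}^{\mathrm{single}}$ gives a valid, numerically computable per-visit trade-off function.

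Finally I would bound the total privacy loss by composition across visits, exactly as in the proof of Theorem~\ref{thm:user-level}. Defining the indicator $f_t$ of a visit to $j$ at time $t$ and applying the Hoeffding inequality for Markov chains (Lemma~\ref{lemma:heoff}) with stationary distribution uniform on $[n]$, the number of visits is at most $\lceil (1+\zeta)T/n \rceil$ with probability at least $1 - \delta'_{T,n}$. Composing $\widetilde{f}_{ij}^{\mathrm{single}}$ over these visits via the tensor product (Definition~\ref{def:product}) yields $T(\cA_j(D), \cA_j(D')) \geq (\widetilde{f}_{ij}^{\mathrm{single}})^{\bigotimes \lceil (1+\zeta)T/n \rceil}$ on the same high-probability event. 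The only point requiring care beyond the user-level case is verifying that multiple passes of the random walk through node $i$ do not introduce additional sensitivity within a single snapshot; this is handled because the sub-sampling in $\widetilde{f}_{ij}^t$ already accounts for selection of the modified record only during the local phase at $i$, and all subsequent passes are absorbed into the composition count.
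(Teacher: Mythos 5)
Your proposal is correct and takes essentially the same route as the paper: the paper likewise obtains Theorem~\ref{thm:record-level-privacy} by replacing $f_{ij}^t$ with $\widetilde{f}_{ij}^t$ in Lemma~\ref{lemma:user-mixture} (the hitting-time weights $w_{ij}^t$ being data-independent, so the joint-concavity mixture bound carries over unchanged) and in Theorem~\ref{thm:user-level} (Markov-chain Hoeffding for the visit count, then tensor composition). You also correctly locate the genuinely new difficulty---the interaction of the sub-sampling operator $C_{b/m_i}$ with amplification by iteration during the first $K$ local steps, with $p_k = 0$ afterward---in Lemma~\ref{lemma:sample-iteration}, which is exactly where the paper isolates it via the shifted-interpolation analysis of \cite{Bok2024shifted}.
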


The proof of record-level privacy in Theorem \ref{thm:record-level-privacy} necessitates iteration analysis with subsampling. The proof closely follows the approach in \cite{Bok2024shifted}. However, in federated learning, subsampling affects only the first $K$ iterations, as the subsequent $(t-1)K$ iterations do not use the dataset from user $i$, which differs from \cite{Bok2024shifted}. Due to subsampling, the Gaussian mechanisms might be influenced by a Bernoulli distribution. Therefore, we begin by introducing the following lemma from \cite{Bok2024shifted}.

\begin{lemma}[Lemma C.12 in \cite{Bok2024shifted}] For $s \geq 0$ and $0 \leq p \leq 1$, let
    \begin{align*} R(s, \sigma, p) = \inf \{T(VW + Z, VW' + Z): &\ V \sim \rm{Bern}(p), \norm{W}, \norm{W'} \leq s, Z \sim \cN(0, \sigma^2 I_d)\}\, ,
    \end{align*}
    where the infimum is taken pointwisely and is over independent $V, W, W', Z$. Then, one has $R(s, \sigma, p) \geq C_p(G(\frac{2s}{\sigma}))$.
\end{lemma}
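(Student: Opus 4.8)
The plan is to recognize $VW+Z$ and $VW'+Z$ as two Gaussian \emph{mixtures} generated by a single subsampling coin $V$, and then to invoke the subsampling‑amplification operator $C_p$ together with monotonicity to reduce the infimum to the worst admissible shift $2s$. First I would condition on $V$. Since $V\sim\mathrm{Bern}(p)$ is independent of everything else, one has
\begin{align*}
VW+Z \sim (1-p)\,\cN(0,\sigma^2 I_d) + p\,\cN(W,\sigma^2 I_d),\qquad
VW'+Z \sim (1-p)\,\cN(0,\sigma^2 I_d) + p\,\cN(W',\sigma^2 I_d).
\end{align*}
The essential observation is that both mixtures share the common component $\cN(0,\sigma^2 I_d)$ coming from the event $\{V=0\}$ (probability $1-p$), and they differ only in the component coming from $\{V=1\}$ (probability $p$). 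This is exactly the structure of a subsampled mechanism: $V$ plays the role of the inclusion indicator, the ``unsampled'' branch makes the two outputs coincide, and the ``sampled'' branch carries the entire discriminating signal, reducing to the base comparison of $\cN(W,\sigma^2 I_d)$ against $\cN(W',\sigma^2 I_d)$.

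Second, I would identify the base trade-off function of the sampled branch. By rotational invariance of the isotropic Gaussian, $T(\cN(W,\sigma^2 I_d),\cN(W',\sigma^2 I_d)) = T(\cN(0,\sigma^2),\cN(\norm{W-W'},\sigma^2)) = G\!\left(\tfrac{\norm{W-W'}}{\sigma}\right)$, which is exact and requires no dimension reduction. With the base trade-off in hand, the subsampling‑amplification result for $f$-DP (\cite{dong2019gaussian}, built on the advanced joint convexity of trade-off functions in Lemma~\ref{lemma:mixture} and the coupling technique of \cite{DBLP:conf/nips/BalleBG18,wang2023unified}) applied with the operator $C_p$ of Definition~\ref{defn:sample-oper} yields, for every fixed admissible pair $(W,W')$,
\begin{align*}
T(VW+Z,\,VW'+Z)\ \geq\ C_p\!\left(G\!\left(\tfrac{\norm{W-W'}}{\sigma}\right)\right).
\end{align*}
Here the symmetrizing ``$\min$ with the inverse'' inside $C_p$ accounts for the replace‑type (rather than add/remove) nature of the configuration, consistent with the paper's standing symmetry convention on trade-off functions, and the post-processing invariance (Proposition~\ref{prop:post_processing}) lets me freely translate all three centers by a common vector when setting up the coupling.

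Third, I would close with monotonicity and the infimum. The triangle inequality gives $\norm{W-W'}\leq \norm{W}+\norm{W'}\leq 2s$. Since $\mu\mapsto G(\mu)$ is pointwise nonincreasing and $C_p$ is order‑preserving on trade-off functions—each constituent operation $f\mapsto f_p = pf+(1-p)\mathrm{Id}$, the left inverse of a nonincreasing function, the pointwise $\min$, and the double convex conjugate all preserve the pointwise order—I obtain $C_p(G(\norm{W-W'}/\sigma))\geq C_p(G(2s/\sigma))$. As this bound holds pointwise for every admissible $(W,W')$, the pointwise infimum defining $R(s,\sigma,p)$ inherits it, giving $R(s,\sigma,p)\geq C_p(G(\tfrac{2s}{\sigma}))$.

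The step I expect to be the main obstacle is the subsampling bound in the second paragraph: establishing $T\geq C_p(G(\norm{W-W'}/\sigma))$ rigorously for the \emph{two-sided} mixture (where neither output is the bare Gaussian $\cN(0,\sigma^2 I_d)$, in contrast to the textbook add/remove subsampled Gaussian) and verifying that the $C_p$ operator indeed lower bounds this replace-type mixture trade-off. The factor of $2$ in $G(2s/\sigma)$ arises precisely from bounding the replace sensitivity $\norm{W-W'}$ by $2s$, and I would sanity‑check the extreme cases $p\in\{0,1\}$ and confirm the worst case $\norm{W-W'}=2s$ is admissible (e.g.\ $W=su$, $W'=-su$ for a unit vector $u$) so that the infimum is actually attained at the stated bound; I would also isolate the monotonicity of $C_p$ as a short auxiliary claim since it relies on the order‑preservation of the conjugation and inversion steps.
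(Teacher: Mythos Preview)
The paper does not actually prove this statement: it is quoted verbatim as Lemma~C.12 from \cite{Bok2024shifted} and used as a black box in the proof of Proposition~\ref{thm:sgd-sc-gen} and Lemma~\ref{lemma:sample-iteration}. There is therefore no ``paper's own proof'' to compare against.

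That said, your proposal is a sound reconstruction of how such a result is obtained. The three-step structure---(i) condition on the Bernoulli coin to expose a replace-type subsampled Gaussian, (ii) reduce the base trade-off to $G(\|W-W'\|/\sigma)$ by rotational invariance, (iii) apply monotonicity of $C_p$ and the triangle inequality $\|W-W'\|\le 2s$---is exactly the skeleton of the argument in \cite{Bok2024shifted}. You have also correctly flagged the one genuinely nontrivial step: the two-sided mixture in (ii), where \emph{both} distributions are nondegenerate mixtures sharing the $\cN(0,\sigma^2 I_d)$ component, is not the textbook add/remove subsampling of \cite{dong2019gaussian}. The way \cite{Bok2024shifted} handles this is essentially your suggestion---a coupling/translation argument plus the symmetrization built into $C_p$---so your identification of the obstacle and its resolution are both on target. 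The auxiliary monotonicity of $C_p$ you propose to isolate is straightforward and is treated as routine in that reference as well.
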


The proof of Lemma \ref{lemma:sample-iteration} requires the following proposition. 
We adopt the notation from \cite{Bok2024shifted} and divide the minibatch subsample into two subsets, $R_k$ and $C_k$, as follows. For $x^*$, a data record that might be changed by an attacker under the setting of record-level DP, we use the following definition of $R_k$ and $C_k$ from \cite{Bok2024shifted}:
\begin{enumerate}
         \item Sample a set $A_1$ of size $b$ in $D_i \setminus \{x^*\}$ uniformly at random.
        \item Sample an element $A_2$ from $A_1$ uniformly at random. This element will serve as a candidate to be (potentially) replaced by $x^*$.
        \item Let $R_k = A_1 \setminus \{A_2\}, C_k = A_2$.
\end{enumerate}
Following \cite{Bok2024shifted}, we also define the following stochastic version of the CNI. In the $k$-th iteration, $D$ and $D'$ differ by a single data record held by the user $i$. Denote the record as $C_k$ when used in the $k$-th iteration, and as $C_k'$ when not used in the mini-batch of the $k$-th iteration. Let the index of the mini-batch of size $b$ be $S_k = (R_k, C_k) \in [m_i]$ or $S_k' = (R_k, C_k')$.
Let $\ell_{z}$ be the loss function corresponding to a single datapoint $z$.
    Denote
    \begin{align*}
        \phi_{S_k}(\theta) &= \theta - \frac{\eta}{b}(\nabla \ell_{C_k} + \sum_{z \in R_k} \nabla \ell_z)(\theta), \\
        \phi'_{S_k}(\theta) &= \theta - \frac{\eta}{b}(\nabla \ell'_{C_k} + \sum_{z \in R_k} \nabla \ell_z)(\theta), \\
        \psi_{S_k}(\theta) &= \theta - \frac{\eta}{b}\sum_{i \in R_k \cup C_k'} \nabla \ell_z(\theta)\,.
    \end{align*}

\begin{proposition}\label{thm:sgd-sc-gen} Assume that the loss functions are $m$-strongly convex, $M$-smooth loss gradient sensitivity $\Delta$. Then, for any $\eta \in (0, 2/M)$, $t > 1$, Algorithm \ref{alg:Decen-SGD} is $f$-DP, where 
\begin{align*}
    f =\ & G \left( \frac{2\sqrt{2}c b_{tK-1}}{\eta \sigma} \right) \otimes \bigotimes_{k=1}^{tK-1}C_{p_k}\left( G \left(\frac{2a_k}{\eta \sigma} \right) \right) 
\end{align*}
for any sequence $\{b_{k}, a_{k}, s_{k}\}$ given by 
    \begin{align*}
        b_{k+1} =\ & \max\{ cb_k , (1-\gamma_{k+1})(cb_k + s_{k})\},
        \\
        a_{k+1} =\ & \gamma_{k+1} \left( c b_{k} + s_{k} \right),
        \\
        s_{k} =\ & \max\{ \|\phi_{S_{k}} - \psi_{S_{k}}\|_{\infty}, \|\phi'_{S'_{k}} - \psi_{S'_{k}}\|_{\infty} \},
    \end{align*}
and $b_0 = 0$, $0 \leq \gamma_{k} \leq 1$, $c = \max\{|1 - \eta m|, |1 - \eta M|\}$, $s_{k} = 0$, $p_k = b/m_i$ for $1\leq k \leq K$ and $p_{k} = 0$ for any $k > K$.
\end{proposition}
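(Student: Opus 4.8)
The plan is to follow the shifted interpolated process of \cite{Bok2024shifted}, adapting it to the federated structure of Algorithm~\ref{alg:Decen-SGD}, where the subsampled record only enters the gradient maps during the first $K$ local iterations at user $i$. Concretely, I would view the trade-off after $tK$ iterations, $T(\theta_{tK},\theta_{tK}')$, as arising from two contractive noisy iterations driven by the stochastic maps $\phi_{S_k}$ and $\phi'_{S_k'}$, each $c$-Lipschitz since the losses are $m$-strongly convex and $M$-smooth with $\eta\in(0,2/M)$. The crucial device is the common reference map $\psi_{S_k}$, which uses neither the original record $C_k$ nor its replacement $C_k'$: when the differing record $x^*$ is not drawn into the minibatch (probability $1-p_k$ with $p_k=b/m_i$), both trajectories agree with $\psi_{S_k}$ up to post-processing, and only when $x^*$ is sampled do they diverge, by at most $s_k=\max\{\norm{\phi_{S_k}-\psi_{S_k}}_\infty,\norm{\phi'_{S_k'}-\psi_{S_k'}}_\infty\}$.

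Next I would run the shift-reduction argument of Lemma~\ref{lemma:c3_bok} with the added Bernoulli randomness. At step $k$ the divergence between the two processes is a shift of random magnitude: with probability $p_k$ it grows by $s_k$, otherwise it merely contracts. I would split the accumulated shift into a part $a_k$ that is paid for immediately through a (subsampled) Gaussian mechanism and a residual part $b_k$ carried forward under the contraction factor $c$, producing $a_{k+1}=\gamma_{k+1}(cb_k+s_k)$ and $b_{k+1}=\max\{cb_k,(1-\gamma_{k+1})(cb_k+s_k)\}$ with $b_0=0$; the $\max$ reflects that the carried shift must dominate the purely contracted shift of the not-sampled branch as well as the $(1-\gamma_{k+1})(cb_k+s_k)$ term of the sampled branch. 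The per-step cost of paying a shift $a_k$ through a Gaussian masked by $\mathrm{Bern}(p_k)$ is exactly the object bounded by the subsampling lemma (Lemma~C.12 in \cite{Bok2024shifted}), giving $R(a_k,\eta\sigma,p_k)\geq C_{p_k}(G(2a_k/(\eta\sigma)))$, where the factor $2$ and the $\eta\sigma$ come respectively from the replacement (swap-one) sensitivity and the step-size scaling of the noise. Composing these per-step bounds via the tensor product of Definition~\ref{def:product} yields the factor $\bigotimes_{k=1}^{tK-1}C_{p_k}(G(2a_k/(\eta\sigma)))$.

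The federated cutoff is then immediate: once the model is forwarded away from user $i$ after $K$ steps, changing $x^*$ no longer alters any gradient, so $s_k=0$ and $p_k=0$ for $k>K$. Since $C_0(\cdot)=\mathrm{Id}$, these later factors contribute nothing, and the shift merely contracts, $b_{k+1}=cb_k$, which is what drives the $c^{(t-1)K}$ decay of the residual shift $b_{tK-1}$ (and ultimately the $c^{(t-1)K}$ prefactor appearing in Lemma~\ref{lemma:sample-iteration}). Finally I would account for the residual carried shift at the end of the process by a single Gaussian term $G(2\sqrt{2}\,c\,b_{tK-1}/(\eta\sigma))$ and tensor it in to obtain the claimed $f$; the post-processing property (Proposition~\ref{prop:post_processing}) is used throughout to reduce each actual map to the canonical mechanism of the subsampling lemma.

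I expect the main obstacle to be the careful bookkeeping of the coupling between subsampling and the shifted interpolation. In the non-subsampled case (Lemma~\ref{lemma:iteration-D-DP-SGD}) the shift is deterministic and a single Gaussian suffices, whereas here it is a Bernoulli-masked quantity, so one must justify---via the reference map $\psi_{S_k}$ and post-processing---that each step reduces exactly to the mechanism in the subsampling lemma and that the sampled and not-sampled branches can be merged without loss, which is precisely what forces the $\max$ in the recursion for $b_{k+1}$. Pinning down the constants (the factor $2$ inside each $C_{p_k}(G(\cdot))$ and the $\sqrt{2}$ in the terminal Gaussian, both stemming from replacement sensitivity and the splitting of the terminal shift across the two coupled trajectories) is the delicate part; once the per-step reduction and the branch merger are established, the recursions and the tensor-product composition follow mechanically.
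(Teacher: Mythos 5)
Your proposal is correct and follows essentially the same route as the paper's proof: both run the shifted interpolated process of \cite{Bok2024shifted} with the reference map $\psi_{S_k}$ and Bernoulli-masked shifts, obtain the per-step factors $C_{p_k}(G(2a_k/(\eta\sigma)))$ from the subsampling bound (Lemma~C.12 there, packaged in the paper via Lemma~C.14), enforce $s_k=0$, $p_k=0$ once the model leaves user $i$ after $K$ steps, and tensor in the terminal Gaussian $G(2\sqrt{2}\,c\,b_{tK-1}/(\eta\sigma))$ for the residual shift via the original-iteration-to-interpolation relation. The only difference is presentational: you reconstruct the shift-reduction recursion from first principles where the paper cites the corresponding lemmas of \cite{Bok2024shifted} wholesale.
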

\begin{proof}
    The iterates of Noisy SGD with respect to $\{\ell_z\}_{z \in D}$ and $\{\ell'_z\}_{z \in D'}$ are given by
    \begin{align*}
        \theta_{k+1} &= \Pi_{\cK}(\psi_{S_k}(\theta_k) + V_k(\phi_{S_k} - \psi_{S_k})(\theta_k) + Z_{k+1}), \\
        \theta'_{k+1} &= \Pi_{\cK}(\psi_{S'_k}(\theta'_k) + V'_k(\phi'_{S'_k} - \psi_{S'_k})(\theta'_k) + Z'_{k+1}),
    \end{align*}
    where $Z_{k+1}, Z'_{k+1} \sim \cN(0, \eta^2 \sigma^2 I_d)$ and $V_{k}, V'_{K} \sim {\rm{Bern}}(p_k)$. For any $0 \leq \gamma_{k} \leq 1$, we consider shifted interpolated processes introduced by \cite{Bok2024shifted} which is defined as
    \begin{align*}
        \widetilde{\theta}_{k+1} &= \Pi_{\cK}(\psi_{S_k}(\widetilde{\theta}_k) + \gamma_{k+1}V_k(\phi_{S_k}(\theta_k) - \psi_{S_k}(\widetilde{\theta}_k)) + Z_{k+1}), \\
        \widetilde{\theta}'_{k+1} &= \Pi_{\cK}(\psi_{S'_k}(\widetilde{\theta}'_k) + \gamma_{k+1}V'_k(\phi'_{S'_k}(\theta'_k) - \psi_{S'_k}(\widetilde{\theta}'_k)) + Z'_{k+1}),
    \end{align*}
    with $\widetilde{\theta}_0 = \widetilde{\theta}'_0 = \theta_0$.
    Using Lemma C.14 in \cite{Bok2024shifted}, we have 
    the trade-off function between $\widetilde{\theta}_{tK-1}$ and  $\widetilde{\theta}'_{tK-1}$ is given by
        \begin{align*}
        \begin{split}
            T(\widetilde{\theta}_{tK-1}, \widetilde{\theta}'_{tK-1}) \geq \bigotimes_{k=1}^{tK-1} C_{p_k} \left( G\left( \frac{2 a_{k}}{\eta \sigma} \right) \right)
        \end{split}
        \end{align*}
    for any sequence $\{b_{k}, a_{k}, s_{k}\}$
    \begin{align} \label{eqn:recursion}
    \begin{split}
        b_{k+1} =\ & \max\{ cb_k , (1-\gamma_{k+1})(cb_k + s_{k})\},
        \\
        a_{k+1} =\ & \gamma_{k+1} \left( c b_{k} + s_{k} \right),
        \\
        s_{k} =\ & \max\{ \|\phi_{S_{k}} - \psi_{S_{k}}\|_{\infty}, \|\phi'_{S'_{k}} - \psi_{S'_{k}}\|_{\infty} \},
    \end{split}
    \end{align}
with $b_0 = 0$ and $0 \leq \gamma_{k} \leq 1$. 
By the ralationship between the original iteration and the interpolation (Section C.3 in \cite{Bok2024shifted}), one has 
\begin{align*}
    T(\theta_{tK}, \theta'_{tK}) \geq T(\widetilde{\theta}_{tK-1}, \widetilde{\theta}'_{tK-1}) \otimes G \left(\frac{2\sqrt{2}cb_{tK-1}}{\eta \sigma} \right).
\end{align*}
    This completes the proof of Proposition \ref{thm:sgd-sc-gen}.
\end{proof}

\begin{proof}[Proof of Lemma \ref{lemma:sample-iteration}]
Similar to the proof of Lemma \ref{lemma:iteration-D-DP-SGD}, for Algorithm \ref{alg:Decen-SGD}, we have $s_{k} = 0$ for any $K+1 \leq k \leq Kt$ and $s_k = \eta \Delta$ for any $k \leq K$. 
In the first $K$ steps ($1\leq k \leq K$) the iteration is running with two datasets, $D_i$ and $D_i'$, which differ by a a single data record belonging to user $i$.
Thus, $p_k = b/m_i$ for $k \leq K$ is the probability of selecting different data in the mini-batch of size $b$ from all $m_i$ data belonging to user $i$.
After $K$ iterations, for $K+1 \leq k \leq tK,$ the model has been sent to other users and changing a single data record in user $i$ will not change the gradient. Thus, $p_k$ becomes $0$. 

Therefore, Lemma \ref{lemma:sample-iteration} follows directly by letting $s_{k} = 0$, $p_{k} = 0$ for any $k > K$ and $s_k = \eta\Delta$, $p_k = b/m_i$ for any $k \leq K$. Overall, we obtain
\begin{align*}
    f =\ & G \left( \frac{2\sqrt{2}c b_{tK-1}}{\eta \sigma} \right) \otimes \bigotimes_{k=1}^{tK-1}C_{p_k}\left( G \left(\frac{2a_k}{\eta \sigma} \right) \right) 
    \\
    =\ & G \left( \frac{2\sqrt{2}c b_{tK-1}}{\eta \sigma} \right) \otimes \bigotimes_{k=1}^{K}C_{p_k}\left( G \left(\frac{2a_k}{\eta \sigma} \right) \right) \otimes \bigotimes_{k=K+1}^{Kt-1}G \left(\frac{2}{\eta \sigma} \sqrt{\sum_{k=K}^{tK-1} a_{k}^2} \right)
    \\
    =\ & G \left( \frac{2\sqrt{2}c^{(t-1)K} b_{K}}{\eta \sigma} \right) \otimes \bigotimes_{k=1}^{K}C_{p_k}\left( G \left(\frac{2a_k}{\eta \sigma} \right) \right).
\end{align*}
This completes the proof of Lemma \ref{lemma:sample-iteration}.
\end{proof}

\begin{lemma}
\label{lemma:non-convex_record}
    For non-convex loss functions, we have
    \begin{align*}            \widetilde{f}_{ij}^t\geq T(\mathcal{A}_j^0(D)+\mathcal{N}(0,Kt\sigma^2),\mathcal{A}_j^0(D') + \mathcal{N}(0,Kt\sigma^2)).
    \end{align*}
\end{lemma}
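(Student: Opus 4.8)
The plan is to mirror the proof of Lemma~\ref{lemma:non-convex} (the user-level non-convex case), but with the Gaussian base replaced by the abstract first-observation trade-off $T(\cA_j^0(D),\cA_j^0(D'))$, which in the record-level setting already carries the sub-sampling amplification over the initial $K$ local steps at user $i$. Here $\cA_j^0(D)$ denotes the model as it leaves user $i$ after its $K$ noisy, sub-sampled SGD iterations; since $D\approx_i D'$ differ only in a single record held by $i$, this is the earliest state at which the two runs can become distinguishable, and the sub-sampling enters exactly as in the non-convex clause of Lemma~\ref{lemma:sample-iteration}.

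The structural observation that drives the argument is that non-convexity forbids any contraction: the gradient maps are at best non-expansive ($c=1$), so there is no privacy amplification by iteration, only noise accumulation. Crucially, once the model leaves user $i$ (iterations $K+1$ through $tK$), every update uses data held by \emph{other} users and is therefore the identical, data-$i$-independent map for both runs. I would treat each such map as a common randomized post-processing: by Proposition~\ref{prop:post_processing} it can only increase the trade-off function, so it never weakens the bound, while the fresh $\mathcal{N}(0,\sigma^2)$ injected at that step adds protection.

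First I would set up the induction over iterations. Writing the iterate lower bound at step $k$ as $T(\theta_k,\theta_k')\geq T(\cA_j^0(D)+\mathcal{N}(0,v_k),\,\cA_j^0(D')+\mathcal{N}(0,v_k))$, one step of the dynamics applies a non-expansive map (handled by post-processing, which preserves the bound) and then convolves both arguments with an independent $\mathcal{N}(0,\sigma^2)$. Invoking Lemma~\ref{lemma:proc} at this step lets the Gaussian variance accumulate additively, $v_{k+1}=v_k+\sigma^2$, while the signal is untouched. Iterating across all $tK$ noise injections yields total variance $Kt\sigma^2$ and hence the claimed bound $\widetilde f_{ij}^t\geq T(\cA_j^0(D)+\mathcal{N}(0,Kt\sigma^2),\,\cA_j^0(D')+\mathcal{N}(0,Kt\sigma^2))$, exactly paralleling the accumulation $(tK+1)\sigma^2$ in Lemma~\ref{lemma:non-convex} up to the base normalization.

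The main obstacle is that Lemma~\ref{lemma:proc} is stated for a Gaussian base $T(\zeta,\zeta')$, whereas here the base $T(\cA_j^0(D),\cA_j^0(D'))$ is a sub-sampled composition that need not be Gaussian. The fix is a mild generalization: convolving both hypotheses with a common independent Gaussian is monotone on trade-off functions, i.e.\ if $T(A,A')\geq T(B,B')$ then $T(A+\xi,A'+\xi)\geq T(B+\xi,B'+\xi)$ for independent $\xi$. I would prove this via the same post-processing representation used in Lemma~\ref{lemma:proc}, extending the post-processing channel to act after the convolution. The other delicate point is that the post-$i$ maps are themselves stochastic (their randomness flows through the iterates); this is controlled precisely because those maps do not depend on user $i$'s data, so they can be absorbed into a single channel applied identically to both arguments, leaving the $D$-versus-$D'$ discrepancy confined to $\cA_j^0$.
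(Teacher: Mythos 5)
Your route is essentially the paper's own: the paper's proof is two lines---first the base bound $\widetilde{f}_{ij}^{0} = T(\mathcal{A}_j^0(D),\mathcal{A}_j^0(D')) \geq \otimes_{k=1}^{K} C_{b/m_i}(G_{\Delta/\eta\sigma})$ after the $K$ subsampled local steps at user $i$ (the non-convex clause of Lemma~\ref{lemma:sample-iteration}), then a single appeal to Lemma~\ref{lemma:proc} to convolve both arguments with $\mathcal{N}(0,Kt\sigma^2)$---and your induction $v_{k+1}=v_k+\sigma^2$ is just that appeal unrolled, with the common post-$i$ maps absorbed via Proposition~\ref{prop:post_processing}. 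You also correctly spot the hypothesis mismatch that the paper passes over silently: Lemma~\ref{lemma:proc} is stated for a Gaussian base, whereas here the base pair is non-Gaussian.

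The gap is in the patch. The ``mild generalization'' you invoke---if $T(A,A')\geq T(B,B')$ then $T(A+\xi,A'+\xi)\geq T(B+\xi,B'+\xi)$ for common independent Gaussian $\xi$---is false. Take $B=\mathcal{N}(0,1)$, $B'=\mathcal{N}(1,1)$, $A=1000B$, $A'=1000B'$, so $T(A,A')=T(B,B')=G_1$; with $\xi\sim\mathcal{N}(0,1)$ one gets $T(A+\xi,A'+\xi)=G_{1000/\sqrt{10^6+1}}\approx G_1$, which lies strictly below $T(B+\xi,B'+\xi)=G_{1/\sqrt{2}}$. The Blackwell-based proof you sketch fails at exactly the needed point: after writing $A=\mathrm{proc}(B)$, $A'=\mathrm{proc}(B')$, the map ``$B+\xi\mapsto \mathrm{proc}(B)+\xi$'' is not a channel acting on the observed sum, since recovering $B$ from $B+\xi$ requires the conditional law of $B$ given the sum, which differs under the two hypotheses. (This is in fact the same defect as in the paper's own one-line proof of Lemma~\ref{lemma:proc}, which the dilation example above also contradicts when applied with an expansive channel; the paper's record-level proof inherits the issue rather than resolving it.) Relatedly, your structural premise that non-convex gradient maps are ``at best non-expansive'' is unjustified: non-expansiveness of $x\mapsto x-\eta\nabla\ell(x)$ requires convexity, and for merely $M$-smooth non-convex losses the map is only $(1+\eta M)$-Lipschitz. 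What actually makes noise accumulate additively is not trade-off domination but non-expansiveness of the interleaved common maps, under which the shifted-coupling argument (Lemma~\ref{lemma:c3_bok} with $c=1$ and $s_k=0$ for $k>K$) delivers the claimed Gaussian accumulation; so the fix is to replace the false monotonicity lemma by that argument and to state the non-expansiveness assumption explicitly---an assumption both your write-up and the paper leave implicit.
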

\begin{proof}[Proof of Lemma \ref{lemma:non-convex_record}]
    After $K$ iterations in user $i$, we have $\widetilde{f}_{ij}^0 = T(\mathcal{A}_j^0(D),\mathcal{A}_j^0(D'))\geq \otimes_{k=1}^KC_{b/m_i}(G_{\Delta/\eta\sigma}).$
    According to Lemma \ref{lemma:proc}, we have 
    $$
    \widetilde{f}_{ij}^t\geq T(\mathcal{A}_j^0(D)+\mathcal{N}(0,Kt\sigma^2),\mathcal{A}_j^0(D') + \mathcal{N}(0,Kt\sigma^2)).
    $$
\end{proof}

\section{Details of Secrect-based DP}

\subsection{Details of Secret-Based Local Differential Privacy}
\label{sec:details-sec-DP}

In the framework of secret-based differential privacy (DP) \cite{DBLP:conf/icml/AllouahKFJG24}, the objective is to safeguard user data privacy against an adversary capable of eavesdropping on all communications.
Each connected pair of users ${i, j} \in \mathcal{V}$ shares a sequence of secrets $S_{ij}$, which are realizations of random variables known exclusively to the two corresponding nodes.
In practice, these secrets can be locally generated from shared random seeds exchanged during an initial round of encrypted communication \cite{DBLP:conf/ccs/BonawitzIKMMPRS17}.
Conceptually, one may view the secrets themselves as these shared randomness seeds.
We denote by $\mathcal{S} := \{ S_{ij} : {i, j} \in \mathcal{V} \}$ the collection of all such secrets.
For a subset of nodes $\mathcal{I}\subset \mathcal{V},$ we denote by $\mathcal{S}_{\mathcal{I}} = \{S_{ij}, i,j \in\mathcal{V}, i,j\notin \mathcal{I}\}$ the set of secrets hidden from all users in $\mathcal{I}.$
Based on $\mathcal{S}_{\mathcal{I}}$, we can formally introduce Definition~\ref{defn:secfDP}.
We consider an honest-but-curious setting, where any subset of users of size $q < n$ may collude by sharing all the secrets to which they have access.

\begin{definition}[Sec-$f$-LDP against honest-but-curious users colluding at level $q$.]
We say an algorithm $\mathcal{A}$ satisfy Sec-$f$-LDP against honest-but-curious users colluding at level $q$ if it satisfies $\mathcal{S}_{\mathcal{I}}$-Sec-$f$-LDP for any $\mathcal{I}$ with $|\mathcal{I}|\leq q.$
\end{definition}

\subsection{Proof of Section \ref{sec:related-noises}}
\label{appe:related noise}
\begin{proof}[Proof of Theorem \ref{thm:decor}]
Suppose there are $q$ out of $n$ users that are colluding.
We assume that each round of the algorithm satisfies $f$-DP between two Gaussian distributions. This is justified by the structure of the Sec-$f$-LDP mechanism defined in Algorithm \ref{alg:decor}, where the injected noise terms $Z_{ij, t}$ and $Z_{ji, t}$ are sampled from Gaussian distributions.
As a result, the output at each iteration is a Gaussian distribution due to the linearity of gradient updates and the additive Gaussian noise. This setup is consistent with prior analyses---e.g., \cite{DBLP:conf/icml/AllouahKFJG24} adopt the same assumption in their RDP-based analysis of SecDP.

Therefore, by definition, Algorithm \ref{alg:decor} is Sec-$f$-LDP with 
    \begin{align*}
        f = T(\mathcal{N}(0,\boldsymbol{\Sigma}),\mathcal{N}(\boldsymbol{\mu},\boldsymbol{\Sigma})),
    \end{align*}
for some vector $\boldsymbol{\mu}$ with $\|\boldsymbol{\mu}\| = \Delta$.
Here, the covariance $\boldsymbol{\Sigma}$ has the form
$$\boldsymbol{\Sigma} = \sigma^2_{\mathrm{cor}} \mathbf{L} + \sigma^2_{\mathrm{DP}}\mathbf{I}_{n-q},$$
where $\sigma^2_{\mathrm{cor}}$ is the correlated noise, $\sigma^2_{\mathrm{DP}}$ is the independent gaussian noise, and $\mathbf{L}$ is the Laplician matrix of the graph. 

It suffices to figure out the expression of $\boldsymbol{\mu}$ in order to lower bound the above $f$ by the trade-off function between Gaussian distributions.
A straightforward calculation shows that
$$
T(\mathcal{N}(0,\boldsymbol{\Sigma}),\mathcal{N}(\boldsymbol{\mu},\boldsymbol{\Sigma})) = T(\mathcal{N}(0,\mathbf{I}_{n-q}),\mathcal{N}(\widetilde{\boldsymbol{\mu}}:=\boldsymbol{\Sigma}^{-1/2}\boldsymbol{\mu},\mathbf{I}_{n-q})) = G_{\|\boldsymbol{\widetilde{\mu}}\|_2},
$$
where $G_{\|\boldsymbol{\widetilde{\mu}}\|_2}=T(\mathcal{N}(0,1),\mathcal{N}(\|\boldsymbol{\widetilde{\mu}}\|_2,1))$ is the trade-off function for the Guassian case (i.e., $\|\boldsymbol{\widetilde{\mu}}\|$-GDP).

Note that the the eigenvalues of $\boldsymbol{\Sigma}^{-1}$ can be sorted as $\{\frac{1}{\sigma_{\mathrm{DP}}^2 + \sigma_{\mathrm{cor}}^2\lambda_{n-1-i+1}\mathbf{L}}\}$ with each eigenvalue smaller than $\frac{1}{\sigma_{\mathrm{DP}}^2}.$
Moreover, the largest eigenvalue $\frac{1}{\sigma_{\mathrm{DP}}^2}$ can be achieved since $\boldsymbol{\Sigma}^{-1}\mathbf{1} = \frac{1}{\sigma_{\mathrm{DP}}^2}\mathbf{1},$
where $\mathbf{1}$ is a vector of ones.
As a result, using the Spectral decomposition, we have
\begin{align*}    \|\widetilde{\boldsymbol{\mu}}\|_2^2 =  \boldsymbol{\mu}^T\boldsymbol{\Sigma}^{-1}\boldsymbol{\mu} = \frac{\Delta^2}{(n-q)\sigma_{\mathrm{DP}}^2} + \Delta^2 \sup_{\|\mathbf{x}\|=1,\mathbf{x}^T\mathbf{1}=0} \mathbf{x}^T\boldsymbol{\Sigma}^{-1}\mathbf{x}.
\end{align*}
Since
$$\sup_{\|\mathbf{x}\|=1,\mathbf{x}^T\mathbf{1}=0} \mathbf{x}^T\boldsymbol{\Sigma}^{-1}\mathbf{x}\leq \frac{1 - \frac{1}{n-q}}{\sigma^2_{\mathrm{DP}}+ \lambda_2(\mathbf{L})\sigma^2_{\mathrm{cor}}},$$
we obtain
$$
\|\widetilde{\boldsymbol{\mu}}\|_2 \leq \Delta\sqrt{\frac{1}{(n-q)\sigma^2_{\mathrm{DP}}} + \frac{1 - \frac{1}{n-q}}{\sigma^2_{\mathrm{DP}}+ \lambda_2(\mathbf{L})\sigma^2_{\mathrm{cor}}}}.
$$
\end{proof}

\section{Technical Details for Converting \texorpdfstring{$f$}{}-DP to  \texorpdfstring{$(\epsilon,\delta)$}{}-DP}
\label{sec:prv}

When converting $f$-DP to $(\epsilon,\delta)$-DP, particularly for addressing composition in Theorem \ref{thm:user-level}, we employ the Privacy Loss Random Variable (PRV) and leverage numerical composition techniques as detailed in \cite{DBLP:journals/popets/SommerMM19,DBLP:conf/aistats/KoskelaJH20,Gopi2021numerical}.
Recall the hockey-stick divergence $H_{e^{\epsilon}}(\cdot\|\cdot)$ that corresponds to the $(\epsilon,\delta)$-DP.

\begin{theorem}[Privacy loss random variable for mixture distributions]
\label{thm:prv}
    For mixtures distributions $P = \sum_{i=1}^mw_i P_i$ and $Q = \sum_{i=1}^m w_i Q_i$, it holds $H_{e^{\epsilon}}(P\|Q) \leq H_{e^{\epsilon}}(X\|Y)$ where
    \begin{align*}
        X| I = i \sim \log \frac{q_i(w)}{p_i(w)}, \qquad  w\sim p_i,
        \\
        Y| I = i \sim \log\frac{q_i(w)}{p_i(w)}, \qquad w \sim q_i,
    \end{align*}
with $I$ being the indices such that $\mathbb{P}[I = i] = w_i.$
\end{theorem}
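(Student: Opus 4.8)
The plan is to prove the inequality by a \emph{lift-and-reduce} argument: first embed the two mixtures into a joint space that retains the component index, then collapse back down along the likelihood ratio. Concretely, I would introduce augmented distributions $\bar P$ and $\bar Q$ on the product space $\{1,\dots,m\}\times\mathcal{W}$ with densities $\bar P(i,w)=w_i\,p_i(w)$ and $\bar Q(i,w)=w_i\,q_i(w)$; that is, draw the index $I=i$ with probability $w_i$ and then draw $w$ from $p_i$ (resp.\ $q_i$). By construction the $\mathcal{W}$-marginal of $\bar P$ is exactly $P=\sum_i w_i P_i$, and that of $\bar Q$ is $Q$, while $X$ and $Y$ are the pushforwards of $\bar P$ and $\bar Q$ under the map $T(i,w):=\log\frac{q_i(w)}{p_i(w)}$.

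First I would apply the data-processing inequality for the hockey-stick divergence to the marginalization channel $(i,w)\mapsto w$, which is a data-independent post-processing. Since forgetting the index can only reduce distinguishability, this yields $H_{e^\epsilon}(P\|Q)\le H_{e^\epsilon}(\bar P\|\bar Q)$. This step is routine and uses only that $H_{e^\epsilon}$ is an $f$-divergence (hence monotone under channels), mirroring the post-processing property in Proposition~\ref{prop:post_processing}.

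The crux is the second step: showing $H_{e^\epsilon}(\bar P\|\bar Q)=H_{e^\epsilon}(X\|Y)$. Here I would compute the joint log-likelihood ratio $\log\frac{d\bar P}{d\bar Q}(i,w)=\log\frac{w_i p_i(w)}{w_i q_i(w)}=\log\frac{p_i(w)}{q_i(w)}$, where the mixing weights cancel precisely because $\bar P$ and $\bar Q$ share the \emph{same} weights $\{w_i\}$; this is why the per-component ratio, rather than the intractable full-mixture ratio, governs the reduction. Thus $T$ equals the negated log-likelihood ratio, so $X=T_\#\bar P$ and $Y=T_\#\bar Q$. Because $T$ is a bijective function of the likelihood ratio, it is a sufficient statistic for $\{\bar P,\bar Q\}$, and every $f$-divergence---in particular $H_{e^\epsilon}$---is preserved under pushforward by a sufficient statistic. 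This is the standard privacy-loss-random-variable invariance underlying PRV accounting \citep{DBLP:journals/popets/SommerMM19,Gopi2021numerical}, and it gives the desired equality. Chaining the two steps yields $H_{e^\epsilon}(P\|Q)\le H_{e^\epsilon}(\bar P\|\bar Q)=H_{e^\epsilon}(X\|Y)$.

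I expect the main obstacle to be the measure-theoretic justification of the sufficiency step, rather than any hard inequality. One must handle the change of variables onto the real PRV axis carefully---verifying the Radon--Nikodym identity $\tilde q_i(x)=e^{x}\tilde p_i(x)$ between the conditional densities of $Y$ and $X$ given $I=i$, and dealing with atoms or ties in the likelihood ratio. As a self-contained alternative that sidesteps abstract sufficiency, I would instead combine joint convexity of the hockey-stick divergence, $H_{e^\epsilon}(P\|Q)\le\sum_i w_i H_{e^\epsilon}(P_i\|Q_i)$, with the observation that on the PRV axis the integrand $\tilde p_i(x)-e^\epsilon\tilde q_i(x)=\tilde p_i(x)\,(1-e^{\epsilon+x})$ has sign $\mathrm{sign}(1-e^{\epsilon+x})$ uniform in $i$; this makes the subadditivity bound $\bigl(\sum_i w_i a_i(x)\bigr)_+\le\sum_i w_i (a_i(x))_+$ tight and recovers $\sum_i w_i H_{e^\epsilon}(P_i\|Q_i)=H_{e^\epsilon}(X\|Y)$, closing the argument.
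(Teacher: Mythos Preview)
Your argument is correct and lands in the same place as the paper, but the route differs in the first step. The paper establishes the inequality at the level of trade-off functions: it identifies your augmented pair $(\bar P,\bar Q)$ with the ``labeled'' random variables $(\xi_I,\zeta_I)$, invokes the joint-concavity bound (Lemma~\ref{lemma:mixture}) to obtain $T(P,Q)\ge T(\xi_I,\zeta_I)$, and then appeals to Blackwell's theorem to transfer that ordering to the hockey-stick divergence. You instead bypass the trade-off machinery entirely and use the data-processing inequality for $f$-divergences on the marginalization channel $(i,w)\mapsto w$. Both are valid; your approach is more self-contained and does not require the joint-concavity lemma or Blackwell, while the paper's version keeps everything phrased in the $f$-DP/trade-off language used elsewhere in the manuscript, so the result slots directly into Lemma~\ref{lemma:user-mixture}. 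The second step---identifying the PRV of $(\bar P,\bar Q)$ with the mixture PRV $(X,Y)$---is essentially identical in both proofs: the paper computes $\frac{\widetilde q}{\widetilde p}\big|_{I=i}=\frac{q_i}{p_i}$ exactly as you do, and implicitly uses the same likelihood-ratio sufficiency you spell out. Your alternative argument via joint convexity of $H_{e^\epsilon}$ together with the sign-alignment observation $\tilde p_i(x)-e^\epsilon\tilde q_i(x)=\tilde p_i(x)(1-e^{\epsilon+x})$ is a nice bonus that makes the equality step completely explicit and avoids any abstract sufficiency appeal.
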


\paragraph{Remark.} In Theorem \ref{thm:prv}, $X$ and $Y$ are called privacy loss random variables in \cite{Gopi2021numerical}.
Thus, we can use the numerical composition method in \cite{Gopi2021numerical} to compute the composition in Theorem \ref{thm:user-level}.

To apply the numerical composition in \cite{Gopi2021numerical}, it is enough to clarify the CDF of $Y.$
Note that $Y$ is a mixture of $\log \frac{q_i(\zeta_i)}{p_i(\zeta_i)}, \zeta_i\sim Q_i$ with weights $w_i.$
As each $f_{ij}^t$ is a Gaussian trade-off function, both $X$ and $Y$ are mixture of Gaussian distributions.

\begin{corollary}
Let $P_i = \mathcal{N}(0,1)$ and $Q_i = \mathcal{N}(\mu_i,1)$ (thus, $T(P_i,Q_i)$ is $\mu$-GDP).
    For the privacy loss random variable $Y$ defined in Theorem \ref{thm:prv}, the cdf of $Y$ is $F_Y(y) = \sum_{i=1}^n w_i F_i(y)$ with
    $
        F_i(y) = \Phi\left( \frac{y}{\mu_i} - \frac{\mu_i}{2}\right).
    $
    Here $\Phi$ is the cdf of a standard normal distribution.
\end{corollary}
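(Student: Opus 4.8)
The plan is to reduce the mixture CDF to its per-component CDFs via the law of total probability, and then compute each component's CDF by explicitly identifying the distribution of the Gaussian log-likelihood ratio. By construction in Theorem~\ref{thm:prv}, conditioning on the mixing index $I = i$ (which occurs with probability $w_i$), the variable $Y$ equals the log-likelihood ratio $\log\frac{q_i(w)}{p_i(w)}$ evaluated at $w \sim Q_i$. Hence $F_Y(y) = \mathbb{P}[Y \leq y] = \sum_{i=1}^n w_i\, \mathbb{P}[Y \leq y \mid I = i] = \sum_{i=1}^n w_i F_i(y)$, where $F_i$ denotes the CDF of the $i$-th conditional law. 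This immediately yields the mixture form, so it remains only to verify the claimed expression for each $F_i$.

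First I would write out the Gaussian densities $p_i(w) = (2\pi)^{-1/2} e^{-w^2/2}$ and $q_i(w) = (2\pi)^{-1/2} e^{-(w-\mu_i)^2/2}$, and expand the log-ratio to obtain the affine form $L_i(w) := \log\frac{q_i(w)}{p_i(w)} = \mu_i w - \frac{\mu_i^2}{2}$, where the quadratic terms in $w$ cancel. Next, since $w \sim Q_i = \mathcal{N}(\mu_i, 1)$ in the definition of $Y$, the affine map $w \mapsto \mu_i w - \mu_i^2/2$ pushes this forward to a Gaussian; a direct mean--variance computation gives $L_i \sim \mathcal{N}(\mu_i^2/2,\, \mu_i^2)$. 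Standardizing then yields $F_i(y) = \mathbb{P}[L_i \leq y] = \Phi\!\left(\frac{y - \mu_i^2/2}{\mu_i}\right) = \Phi\!\left(\frac{y}{\mu_i} - \frac{\mu_i}{2}\right)$, which is exactly the claimed component CDF. Combining this with the mixture decomposition from the first step completes the argument.

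There is essentially no genuine obstacle here; the result is a direct computation once the definition of the privacy loss random variable from Theorem~\ref{thm:prv} is unwound. The only point requiring care is the bookkeeping of the likelihood-ratio direction and the sampling measure: $Y$ is defined with $w$ drawn from $Q_i$ rather than $P_i$, which is precisely what produces the mean $+\mu_i^2/2$ (instead of $-\mu_i^2/2$) and hence the $-\mu_i/2$ shift inside $\Phi$. Under $P_i$ one would instead obtain $\Phi\!\left(\frac{y}{\mu_i} + \frac{\mu_i}{2}\right)$, so getting the sign right is the one place where attention is needed.
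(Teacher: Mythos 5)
Your proof is correct and is exactly the computation the paper intends: the corollary is stated without an explicit proof because it follows immediately from Theorem~\ref{thm:prv} by the law of total probability plus the standard fact that the Gaussian log-likelihood ratio $\log\frac{q_i(w)}{p_i(w)} = \mu_i w - \frac{\mu_i^2}{2}$ is affine, hence Gaussian under $w \sim Q_i$ with law $\mathcal{N}(\mu_i^2/2, \mu_i^2)$. You also correctly flag the one genuine pitfall---that $Y$ samples $w$ from $Q_i$ rather than $P_i$, which fixes the sign of the $\mu_i/2$ shift---so nothing is missing.
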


\begin{corollary}
    The PRV for $f_{ij}^{\mathrm{single}}$ is $Y_{ij} \sim P_{ij}$ where $P_{ij}$ has cdf $ \sum_{t=1}^T w^t_{ij} \Phi\left( \frac{y}{\mu_t} - \frac{\mu_t}{2}\right)$ with $\mu_t$ being defined in Lemma \ref{lemma:iteration}.
\end{corollary}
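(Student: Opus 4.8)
The plan is to characterize the PRV attached to $f_{ij}^{\mathrm{single}}$ by combining the mixture representation underlying Lemma~\ref{lemma:user-mixture} with the mixture PRV bound of Theorem~\ref{thm:prv} and the single-Gaussian PRV computation of the preceding corollary. First I would recall that $f_{ij}^{\mathrm{single}}$ is built, via joint concavity, from the mixture pair $P_{\mathbf{w}} = \sum_{t=1}^{T+1} w_{ij}^t P_j^t$ and $Q_{\mathbf{w}} = \sum_{t=1}^{T+1} w_{ij}^t Q_j^t$, where $P_j^t, Q_j^t$ are the laws of the first-hitting snapshots $\mathcal{A}_j^t(D), \mathcal{A}_j^t(D')$ and $\{w_{ij}^t\}$ are the hitting-time weights. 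Its associated $(\epsilon,\delta)$ profile is governed by the hockey-stick divergence $H_{e^\epsilon}(P_{\mathbf{w}}\|Q_{\mathbf{w}})$, so it suffices to produce a dominating PRV for this mixture. Since Lemma~\ref{lemma:iteration} gives $f_{ij}^t = T(P_j^t,Q_j^t)\ge G_{\mu_t}$, I would replace each component pair by its Gaussian worst case $P_j^t=\mathcal{N}(0,1)$, $Q_j^t=\mathcal{N}(\mu_t,1)$, which only enlarges the divergence and hence keeps the resulting PRV a valid, conservative dominator.

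Next I would invoke Theorem~\ref{thm:prv}, which represents the dominating PRV $Y_{ij}$ of the mixture as itself a mixture: conditionally on an index $I$ with $\mathbb{P}[I=t]=w_{ij}^t$, one has $Y_{ij}\mid I=t \sim \log\frac{q_t}{p_t}(\zeta)$ for $\zeta\sim Q_j^t$. Applying the preceding corollary to each Gaussian component $(\mathcal{N}(0,1),\mathcal{N}(\mu_t,1))$ identifies this conditional law as the Gaussian PRV with cdf $\Phi(y/\mu_t-\mu_t/2)$. Marginalizing over $I$ then yields $F_{Y_{ij}}(y)=\sum_t w_{ij}^t\,\Phi(y/\mu_t-\mu_t/2)$, which matches the claimed expression once the terminal index is resolved.

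The step I expect to be the main obstacle is the treatment of the terminal component $t=T+1$, which encodes the event that the model never reaches $j$ within $T$ steps. There $P_j^{T+1}=Q_j^{T+1}$, so $\log(q_{T+1}/p_{T+1})\equiv 0$ and this component contributes only a point mass at the origin to $Y_{ij}$; since $[\,1-e^\epsilon\,]_+=0$ for every $\epsilon>0$, it adds nothing to the hockey-stick divergence and may be dropped, reducing the summation to $t=1,\dots,T$ as stated. Making this fully rigorous also requires checking that the trade-off-order bound $f_{ij}^t\ge G_{\mu_t}$ genuinely transfers to a domination of hockey-stick divergences that is preserved under the convex combination in Theorem~\ref{thm:prv}, so that the Gaussian substitution remains valid at the level of the mixture rather than only componentwise; this monotonicity-under-mixing verification is where I would concentrate the effort.
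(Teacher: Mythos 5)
Your proposal is correct and follows essentially the same route the paper takes (the paper states the corollary without a separate proof, as an immediate consequence of Theorem~\ref{thm:prv} with index revealed, the Gaussian lower bounds $f_{ij}^t \geq G_{\mu_t}$ from Lemma~\ref{lemma:iteration}, and the preceding corollary's component cdf $\Phi\bigl(y/\mu_t - \mu_t/2\bigr)$); the monotonicity-under-mixing check you flag is in fact immediate, because with the index revealed the hockey-stick divergence equals the $w_{ij}^t$-weighted sum of componentwise divergences, and pointwise trade-off domination converts to componentwise hockey-stick domination via $\delta(\epsilon) = 1 + f^*(-e^{\epsilon})$. Your explicit treatment of the $t = T+1$ component is a small refinement over the paper, whose stated expression is only a sub-probability cdf since it silently omits the atom of mass $w_{ij}^{T+1}$ at the origin---harmless, exactly as you argue, because that atom contributes nothing to $\delta(\epsilon)$ for any $\epsilon \geq 0$.
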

\begin{proof}[Proof of Theorem \ref{thm:prv}]
    Let $\xi_I$ and $\zeta_I$ be the random variable such that $\xi_I| I = i \sim P_i$ and $\zeta_I| I = i\sim Q_i.$
    Then, according to \cite{wang2023unified}, the right-hand-side of Lemma \ref{lemma:mixture} is $T(\xi_I,\zeta_I)$ and $T(P,Q) \geq T(\xi_I,\zeta_I).$
    By the Blackwell Theorem, the privacy loss of $P$ and $Q$ is bounded by that of $\xi_I$ and $\zeta_I$ and it is enough to specify the PRVs of $\xi_I$ and $\zeta_I.$
    We denote by $\widetilde{P}$ and $\widetilde{Q}$ the distribution of $\xi_I$ and $\zeta_I$ with pdfs $\widetilde{p}$ and $\widetilde{q},$ correspondingly.
   Then, according to \cite{Gopi2021numerical}, the PRV $X$ for $\xi_I$ and $\zeta_I$ is
    \begin{align*}
    X_I = \log \frac{\widetilde{q}(\xi_I)}{\widetilde{p}(\xi_I)}, \qquad \xi_I \sim \widetilde{P}.
    \end{align*}
Now we clarify the distribution of $X.$
By the law of total probability, we have
\begin{align*}
    \mathbb{P}[X\leq t] =\sum_{i=1}^n w_i \mathbb{P}[X\leq t|I=i].
\end{align*}
We end the proof by noting that
\begin{align*}
    &\frac{\widetilde{q}(\xi_I)}{\widetilde{p}(\xi_I)} \Bigg| I=i \\
    =& \frac{q_i}{p_i}(\xi_i) \qquad \hbox{with } \xi_i = \xi_I\Bigg|(I=i)\sim P_i.
\end{align*}
\end{proof}

\section{Additional Experiments}
\label{appe:experiments}

\subsection{Comparison with RDP-based Methods}
\label{appe:RDP}

\paragraph{Comparison with RDP on synthetic graphs.}
To compare the performance of our $f$-DP-based account with the previous RDP-based one, we consider several synthetic (and real-life) graphs with their details in Table \ref{table:synthetic-graph}.
For completeness, we visualize the synthetic graphs in Figure \ref{fig:graph}.

Interestingly, the structure of the privacy budget matrices from both methods aligns with the communicability matrix, originally introduced by \cite{Cyffers2024differentially}. Defined as $e^W$, this matrix quantifies how well-connected any two nodes are and is commonly used to detect local structures in complex networks \citep{estrada2015first}. While our PN-$f$-DP analysis does not yield a closed-form expression in terms of $e^W$, the empirical patterns suggest that the underlying graph topology plays a similar role in shaping the privacy guarantees.

\begin{table}[ht]
\centering
\begin{tabular}{c c c c c c} 
 \hline
 Graph name & $\#$ nodes & $\#$ edges & $\lambda_2$  & $T$  & Results\\ [0.5ex] 
 \hline\hline
 Hypercube & 32 & 80 & $0.33333$ & 275 & Figure \ref{fig:DP}\\
 Cliques & 18 & 48  & $0.05634$ & 1300 & Top row in Figure \ref{fig:DP-cliques}\\
 Regular & 24 & 36 &  $0.10124$ & 320& Bottom row in Figure \ref{fig:DP-cliques}\\
 Expander(8) & $2^8$ & 1280 & $0.22222$ &  $2\times 10^4$ & Top row in Figure \ref{fig:private-classification}\\
 South & 32 & 89 & $0.08209$ & 110 & Figure \ref{fig:woman} \\
 Expander(11) & $2^{11}$ & 13312 & $0.16666$ &  $2\times 10^4$ & Figure \ref{fig:logistic-large-n}\\ 
 \hline
\end{tabular}
\caption{Details of the considered graphs.}
\label{table:synthetic-graph}
\end{table}

\begin{figure}[ht]
\centering
 \includegraphics[width=\linewidth]{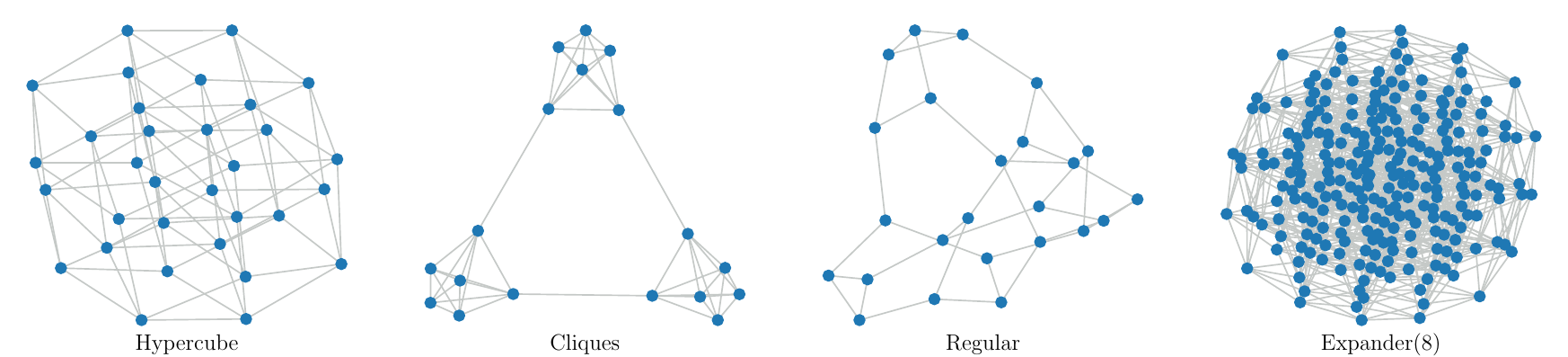}
\caption{Visualization of synthetic graphs.}
\label{fig:graph}
\end{figure}

\begin{figure}[ht]
  \centering
  \subfigure[Pairwise privacy budget $\epsilon$ for $\delta = 10^{-5}$.]{\includegraphics[width=0.43\textwidth]{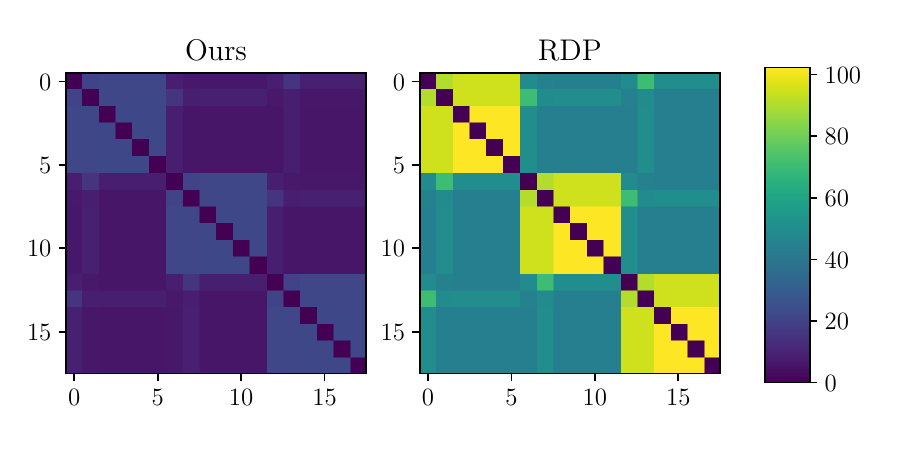} 
  }
  \subfigure[Communicability $e^W$.]{\includegraphics[width=0.25\textwidth]{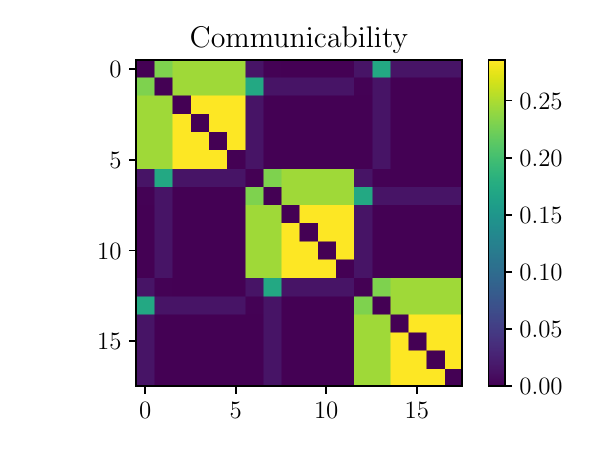}}
  \subfigure[$\epsilon$ for different $\delta$'s.]{\includegraphics[width=0.25\textwidth]{  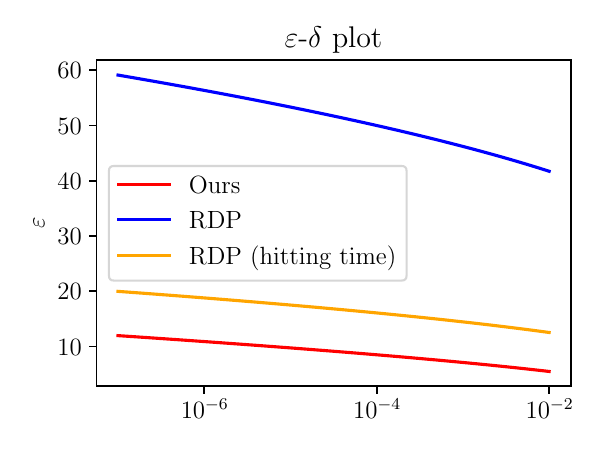}} \\

\subfigure[Pairwise privacy budget $\epsilon$ for $\delta = 10^{-5}$.]{\includegraphics[width=0.43\textwidth]{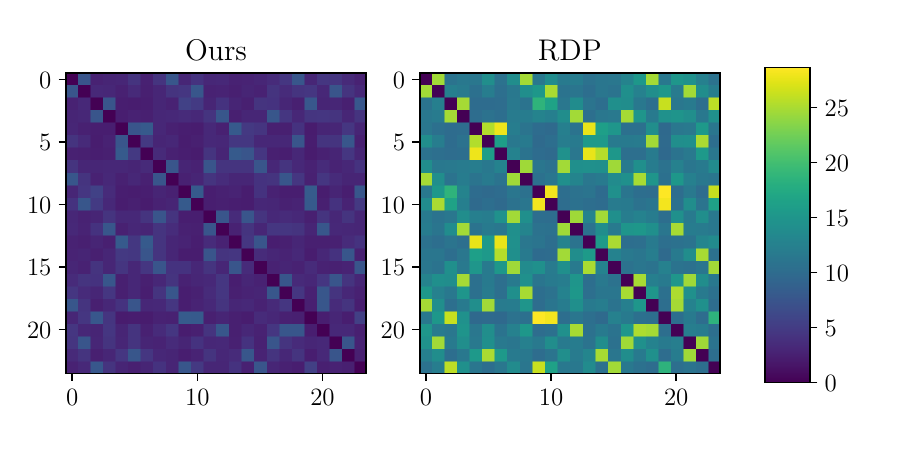} 
  }
  \subfigure[Communicability $e^W$.]{\includegraphics[width=0.25\textwidth]{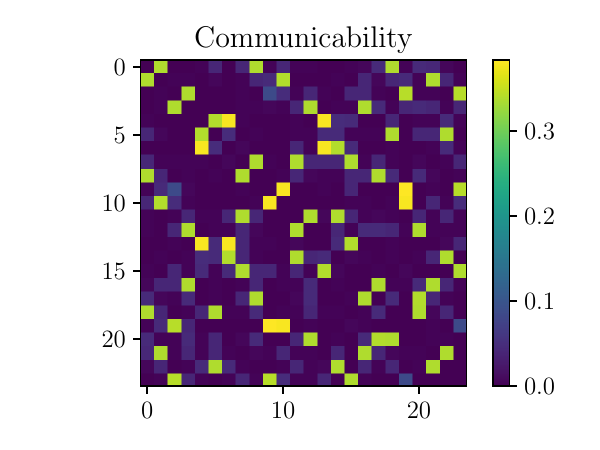}}
  \subfigure[$\epsilon$ for different $\delta$'s.]{\includegraphics[width=0.25\textwidth]{  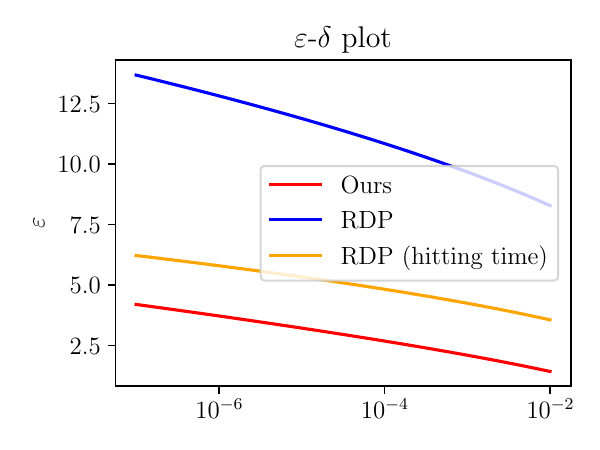}}

\caption{Comparison of the numerical conversion to $(\epsilon, \delta)$-DP on the clique (\textbf{top}) and regular (\textbf{bottom}) graphs.}
\label{fig:DP-cliques}\vspace{-0.1in}
\end{figure}

\paragraph{Privacy budget on a real graph.}
Finally, we consider a real-world graph well-suited for community detection: the Davis Southern women's social network \citep{roberts2000correspondence}.
It has 32 nodes which corresponds to a bipartite graph of social event attendance by women and has been previously used by~\cite{pasquini2023security,koloskova2019decentralized,Cyffers2024differentially}.
We present the matrix of pairwise privacy budgets \(\epsilon\) in Figure \ref{fig:compare-budget-real-graph} and the corresponding mean privacy budget (the average of \(\epsilon_{i,j}\) across all nodes \(i\) linked to node \(j\)) in Figure \ref{fig:mean-eps}. our PN-$f$-DP privacy accounting method consistently produces a smaller privacy budget, as indicated by the lighter colors compared to the RDP results.

\begin{figure*}[t!]
  \centering
  \subfigure[Pairwise privacy budget $\epsilon$ for $\delta = 10^{-5}$.
    \label{fig:compare-budget-real-graph}]{\includegraphics[height=4cm]{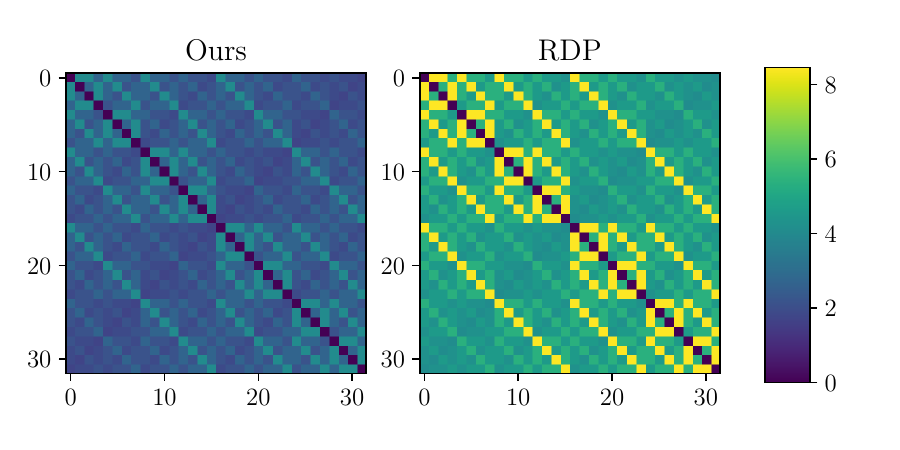} 
  }
  \subfigure[Mean privacy budget for $\delta=10^{-5}$.\label{fig:mean-eps}]{\includegraphics[height=4cm]{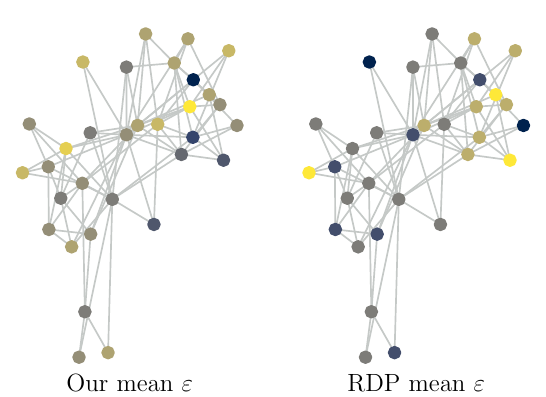}}
  \vspace{-0.1in}
\caption{Enhanced privacy budget by our PN-$f$-DP methods on Davis Southern women's social network.}
\label{fig:woman}
\end{figure*}

\subsection{Details and Results on Private Classification}
\label{appe:classification}

\paragraph{Baselines and setups.}
Our work is primarily theoretical, with the main goal of introducing a general and tighter privacy accounting framework for decentralized learning. The experimental evaluation serves two purposes: (i) to validate the theoretical privacy bounds under various conditions (graph structures, noise levels, and user settings), and (ii) to ensure fair comparisons by following prior setups—particularly PN-RDP \cite{Cyffers2024differentially} and DecoR \cite{DBLP:conf/icml/AllouahKFJG24}.

To exclude the effect of the slack parameter $\delta'_{T,n}$, we adopt the standard implementation strategy suggested in \cite[Remark 5]{Cyffers2024differentially}. Specifically, once a node reaches its maximum allowed number of contributions, it stops participating in updates and only adds noise if revisited. This mechanism guarantees that the number of compositions per user never exceeds the analytical bound. In particular, when a cryptographically small $\delta'_{T,n}$ requires a very large upper bound on the number of contributions, this approach limits the actual number of communications while still preserving privacy by adding noise as needed. We apply this mechanism consistently in all PN-RDP experiments (largely because our implementations build upon their released code).

\paragraph{More results on private logistic regression.}

The results for a smaller network with $n = 2^{8}$ and $\epsilon = 10$ are presented in Figure~\ref{fig:private-classification}.
Figure~\ref{fig:logistic-n28-more-eps} shows the corresponding results for $n = 2^{8}$ under other privacy levels $\epsilon \in \{8, 5, 3\}$, where we observe a similar pattern.
The counterparts for a larger network with $n = 2^{11}$ are shown in Figure~\ref{fig:logistic-large-n}.
All experiments are conducted on a CPU cluster with 200~GB of memory. Computing the privacy budget for all node pairs takes approximately 1–4 hours, depending on the graph structure, while the logistic regression itself completes within 2–3 minutes.

\begin{figure}[ht]
  \centering
  \subfigure[Objective function.]{\includegraphics[width=0.48\textwidth]{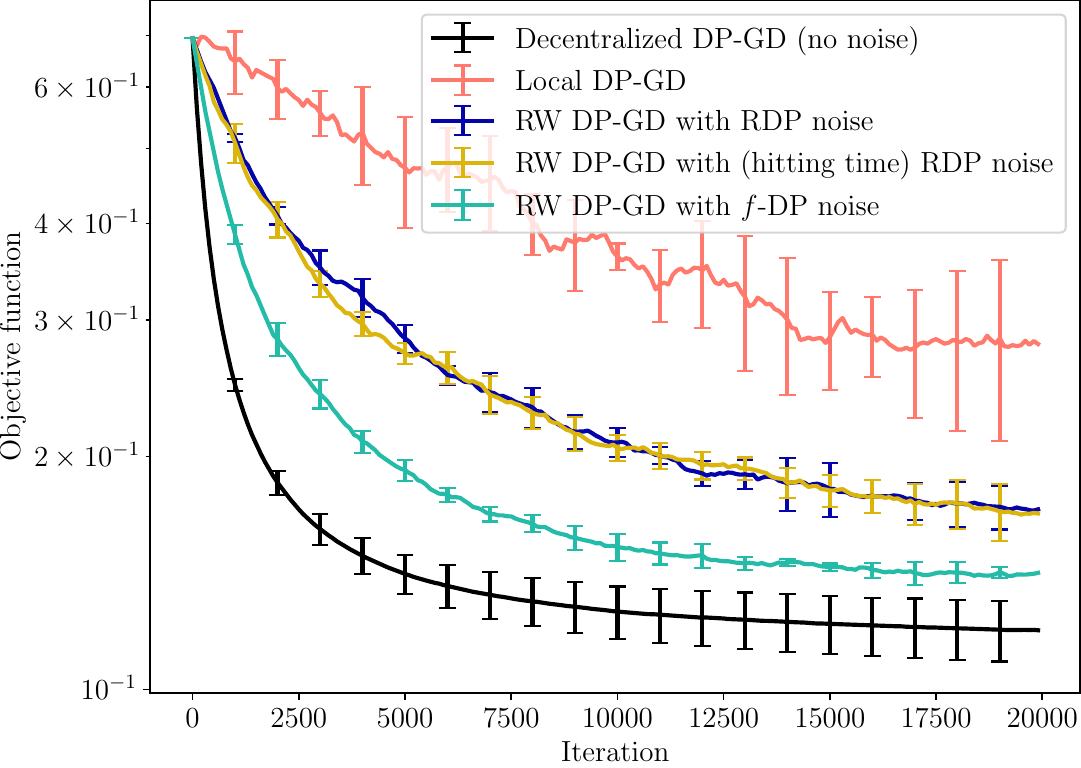} 
  }
  \subfigure[Testing accuracy.]{\includegraphics[width=0.45\textwidth]{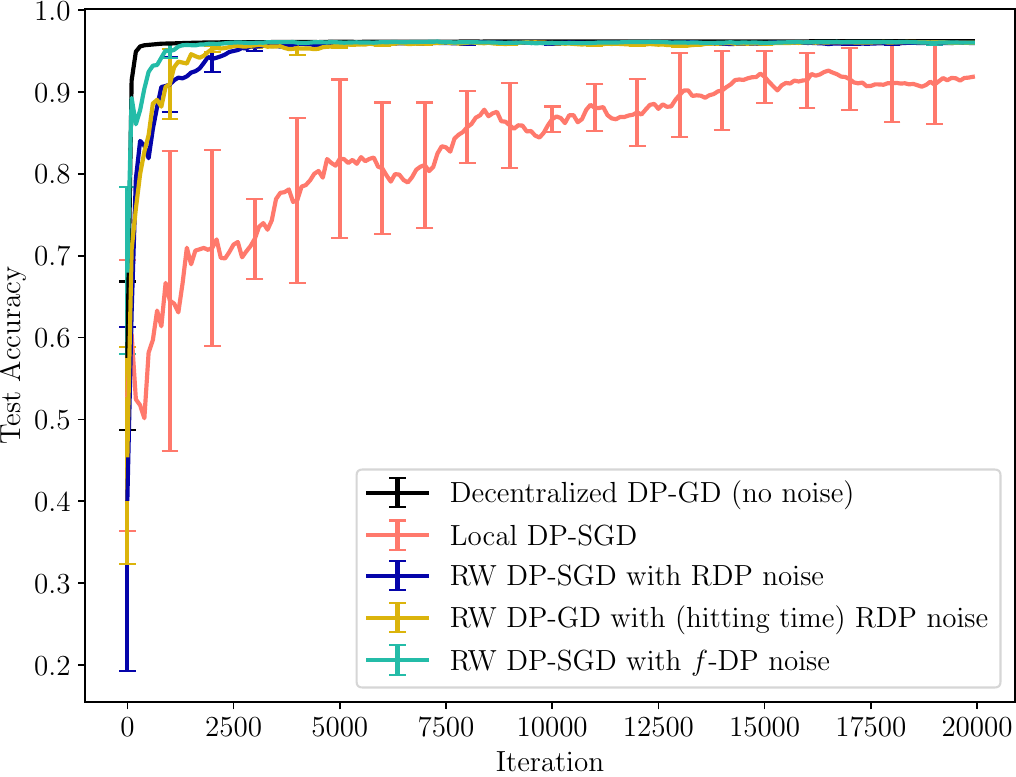}}
\vspace{-0.1in}
\caption{Private logistic regression on the Houses dataset with $n=2^{11}$, $L=0.4$ and $\epsilon=8$.}
\label{fig:logistic-large-n}
\vspace{-0.1in}
\end{figure}

\begin{figure}[ht]
  \centering
  \subfigure[Objective function.]{\includegraphics[width=0.48\textwidth]{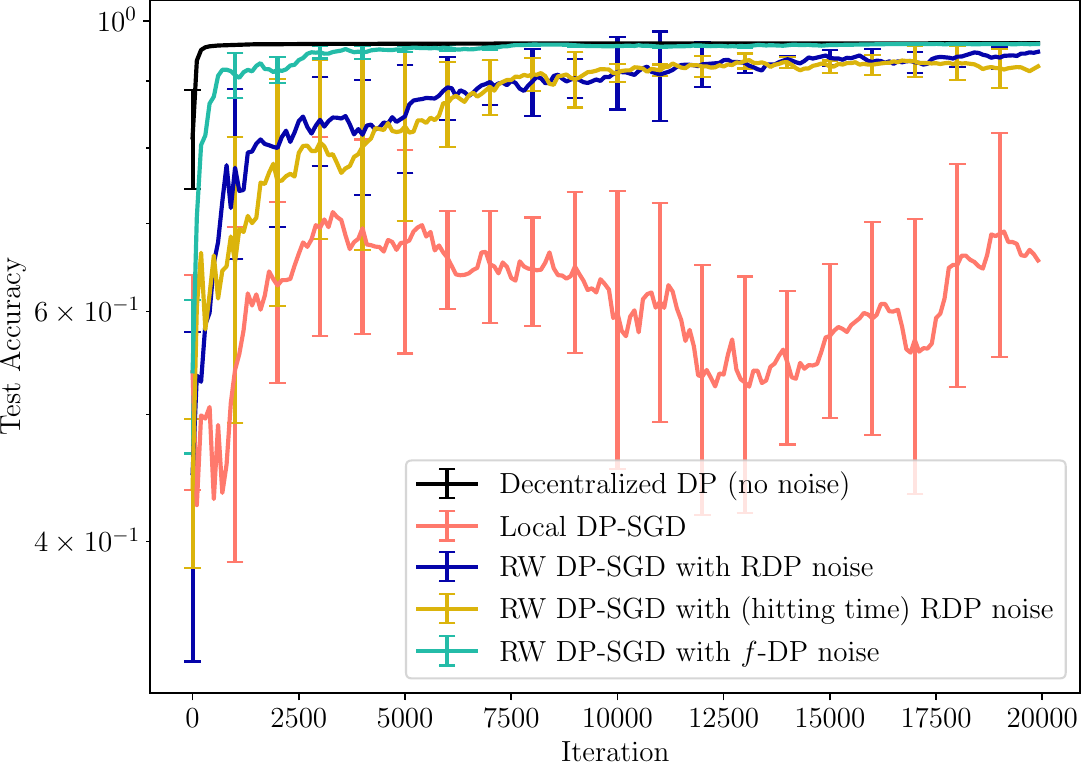} 
  }
  \subfigure[Testing accuracy.]{\includegraphics[width=0.45\textwidth]{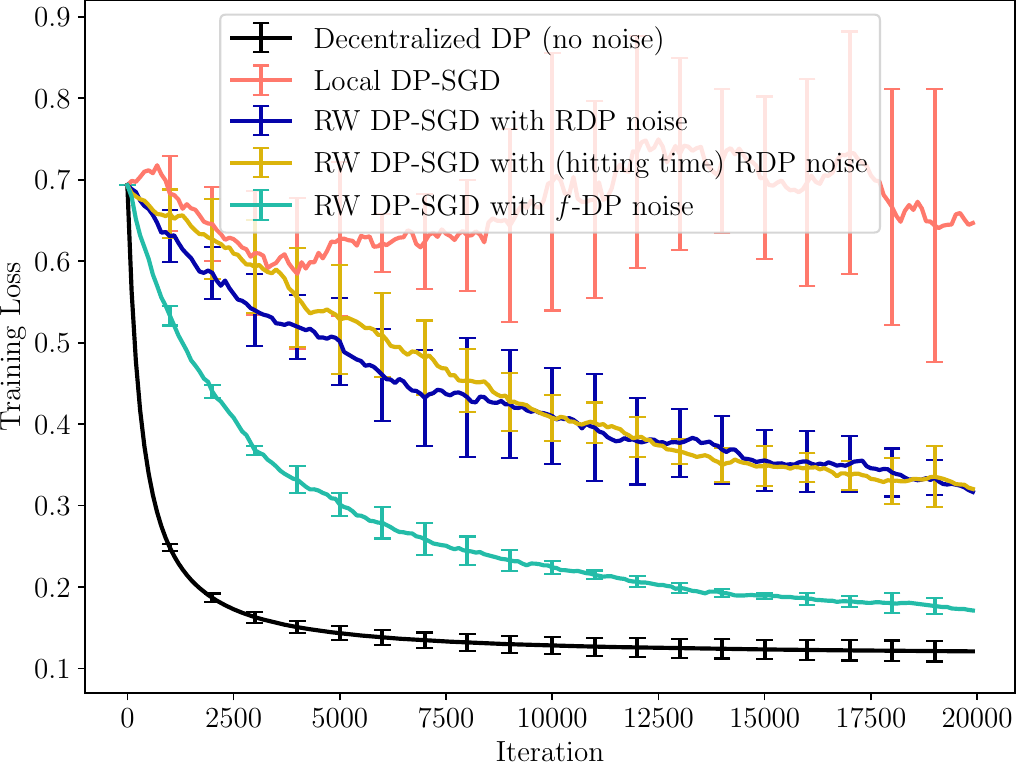}} \\
  \vspace{-0.1in}
    \subfigure[Objective function.]{\includegraphics[width=0.48\textwidth]{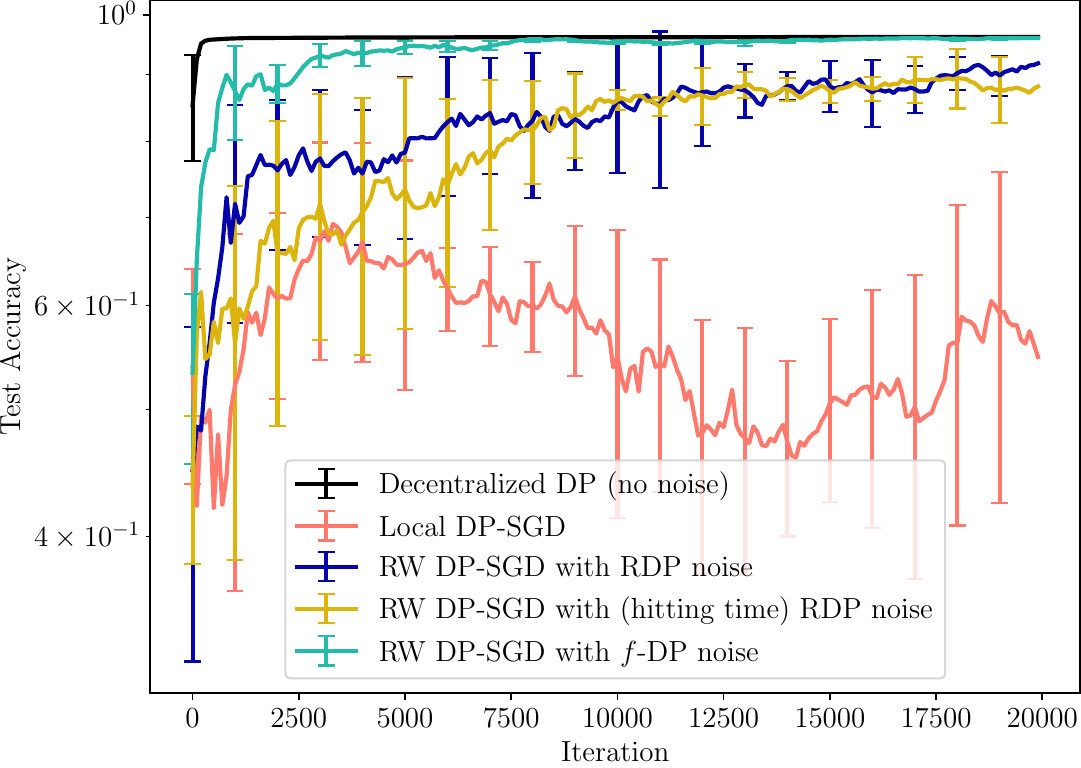} 
  }
  \subfigure[Testing accuracy.]{\includegraphics[width=0.45\textwidth]{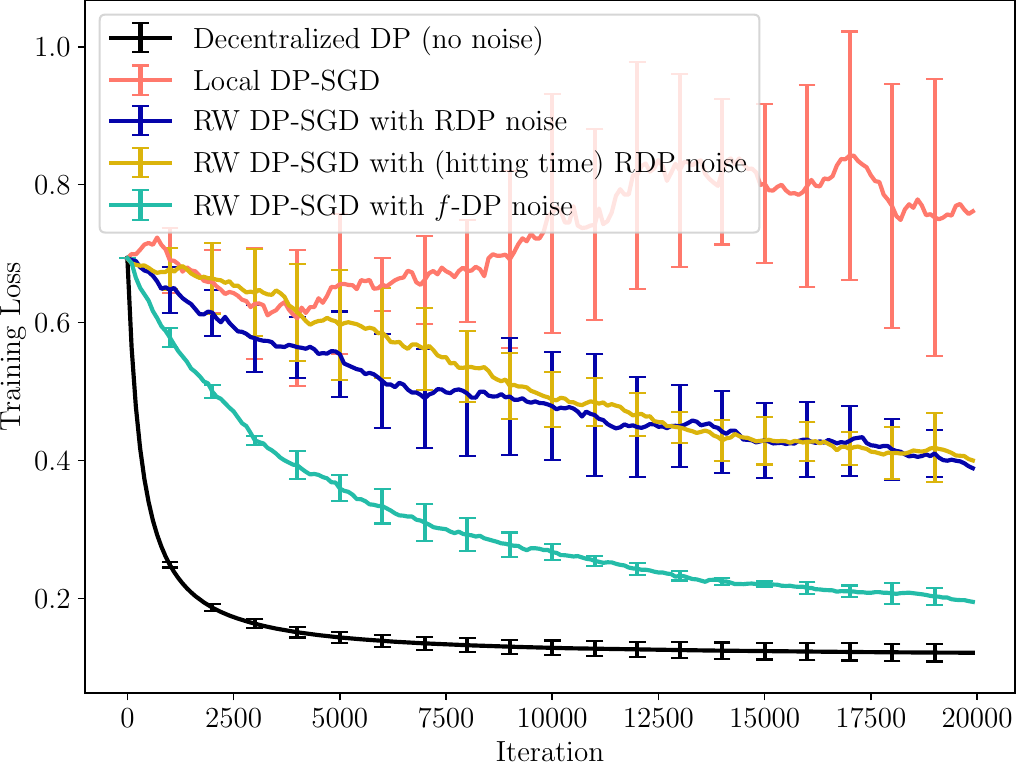}} \\
    \vspace{-0.1in}
    \subfigure[Objective function.]{\includegraphics[width=0.48\textwidth]{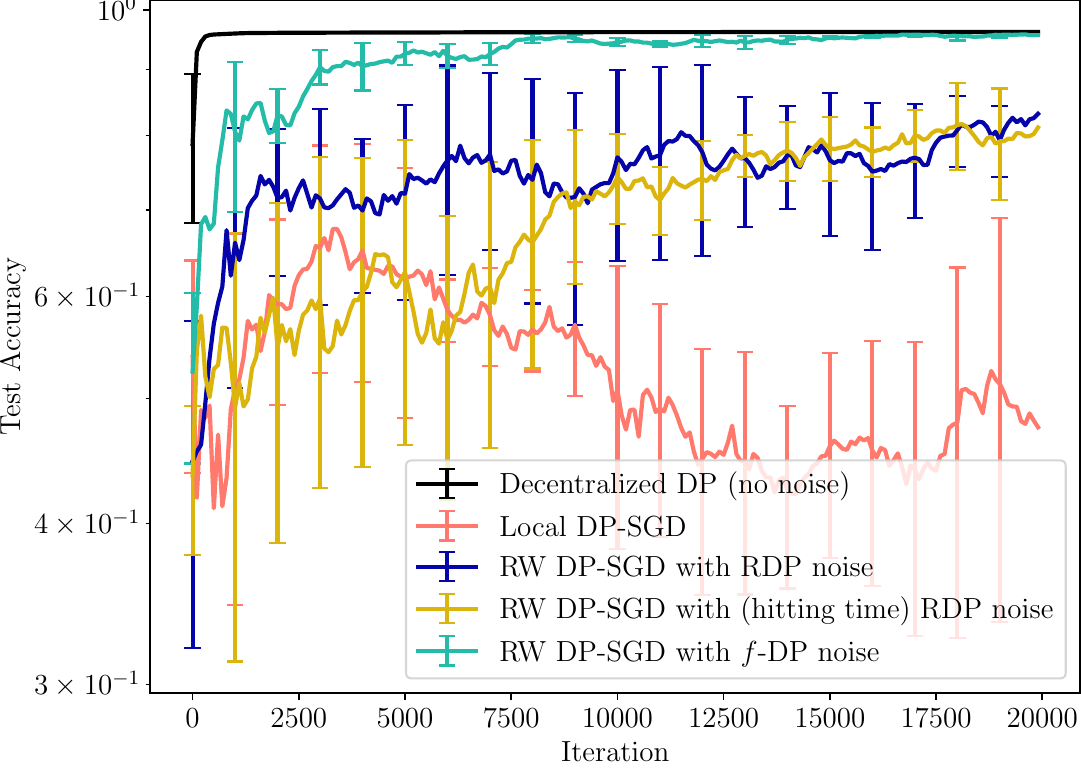} 
  }
  \subfigure[Testing accuracy.]{\includegraphics[width=0.45\textwidth]{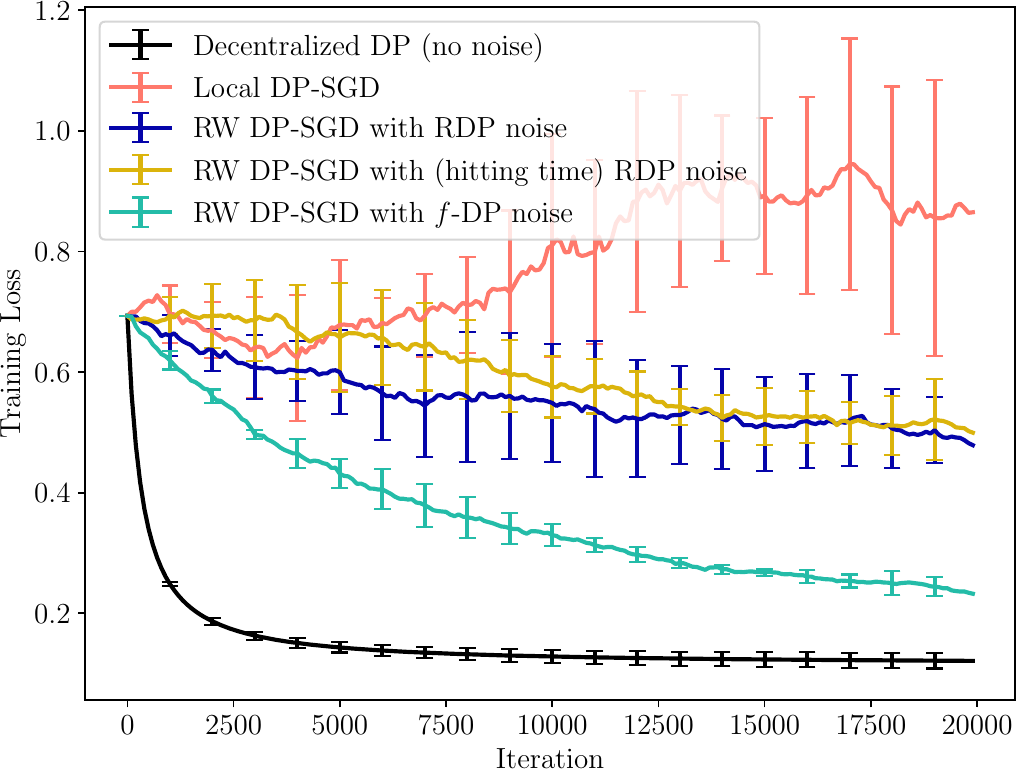}}
\caption{Private logistic regression on the Houses dataset with $n = 2^{8}$ and $L = 0.4$. Results are shown for $\epsilon = 8$ (\textbf{top}), $\epsilon = 5$ (\textbf{middle}), and $\epsilon = 3$ (\textbf{bottom}).}
\label{fig:logistic-n28-more-eps}
\vspace{-0.1in}
\end{figure}

\paragraph{More results on private image classification.}

In addition to the previous private logistic regression experiment, we further evaluate our methods on an image classification task using the MNIST dataset \citep{lecun2010mnist}. The noise variance used for private training are listed in Table~\ref{table:varaince}, and the evaluation follows the same privacy setting and plotting conventions.

The model we use is a simple convolutional neural network (CNN) with two convolutional layers followed by two fully connected layers. Specifically, the first convolutional layer applies 16 filters of size $8 \times 8$ with stride 2 and padding 3. The output is passed through a ReLU activation and a $2 \times 2$ max pooling layer with stride 1. The second convolutional layer applies 32 filters of size $4 \times 4$ with stride 2, again followed by a ReLU activation and another $2 \times 2$ max pooling layer with stride 1. The resulting feature maps are flattened and passed through a fully connected layer with 32 hidden units and ReLU activation, and finally mapped to 10 output logits corresponding to the digit classes.

The PyTorch implementation of the model is shown below.

\begin{lstlisting}[style=pytorch, label={lst:mnist_cnn}]
class MNIST_CNN(nn.Module):
    def __init__(self):
        super().__init__()
        self.conv1 = nn.Conv2d(1, 16, 8, 2, padding=3)
        self.conv2 = nn.Conv2d(16, 32, 4, 2)
        self.fc1 = nn.Linear(32 * 4 * 4, 32)
        self.fc2 = nn.Linear(32, 10)
    def forward(self, x):
        x = F.relu(self.conv1(x))         # [B, 16, 14, 14]
        x = F.max_pool2d(x, 2, 1)         # [B, 16, 13, 13]
        x = F.relu(self.conv2(x))         # [B, 32, 5, 5]
        x = F.max_pool2d(x, 2, 1)         # [B, 32, 4, 4]
        x = x.view(-1, 32 * 4 * 4)        # [B, 512]
        x = F.relu(self.fc1(x))           # [B, 32]
        return self.fc2(x)                # [B, 10]
\end{lstlisting}

\begin{figure}[!th]
  \centering
  \subfigure[Objective function.]{\includegraphics[width=0.45\textwidth]{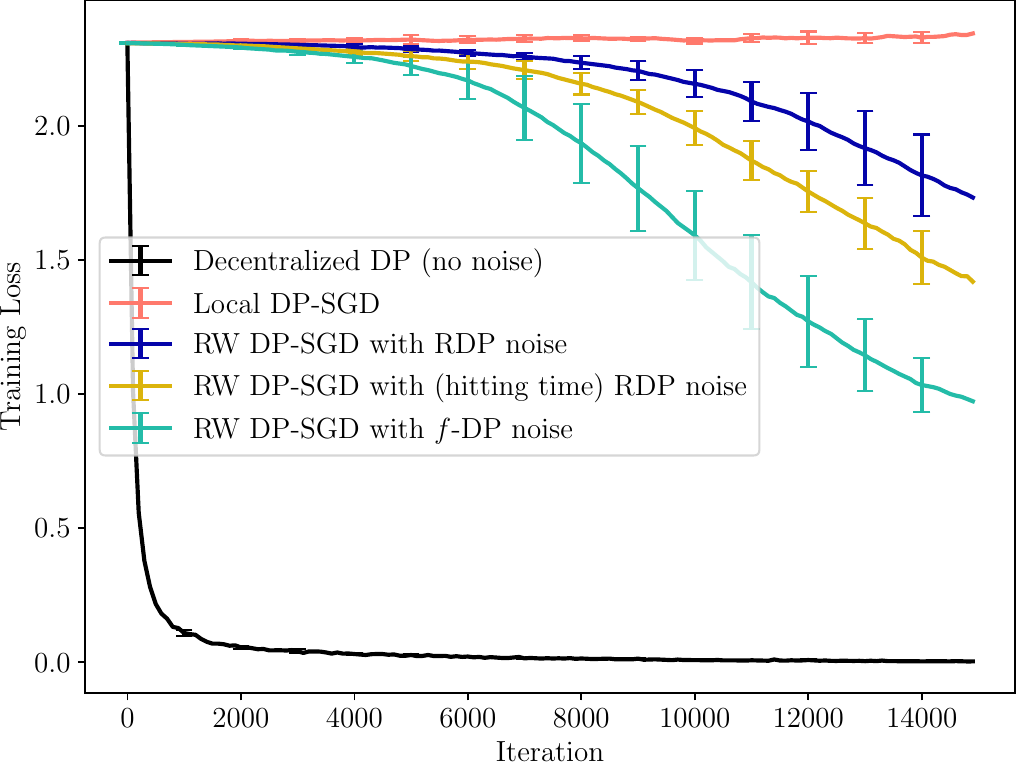} 
  }
  \subfigure[Testing accuracy.]{\includegraphics[width=0.45\textwidth]{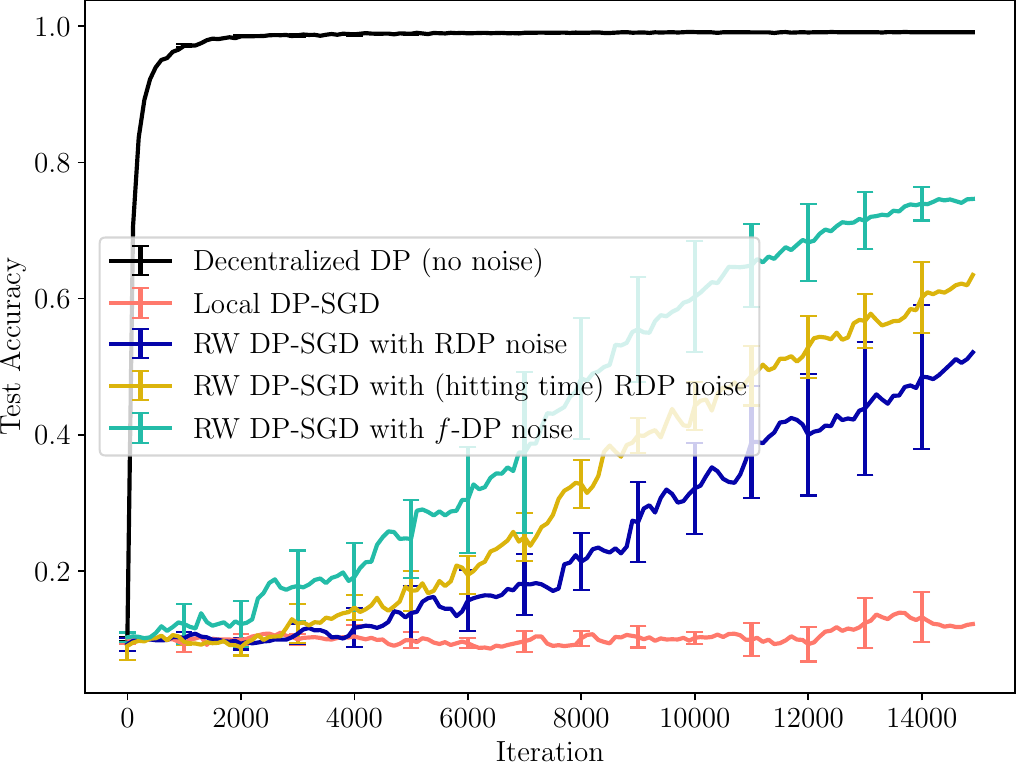}} \\
  \subfigure[Objective function.]{\includegraphics[width=0.45\textwidth]{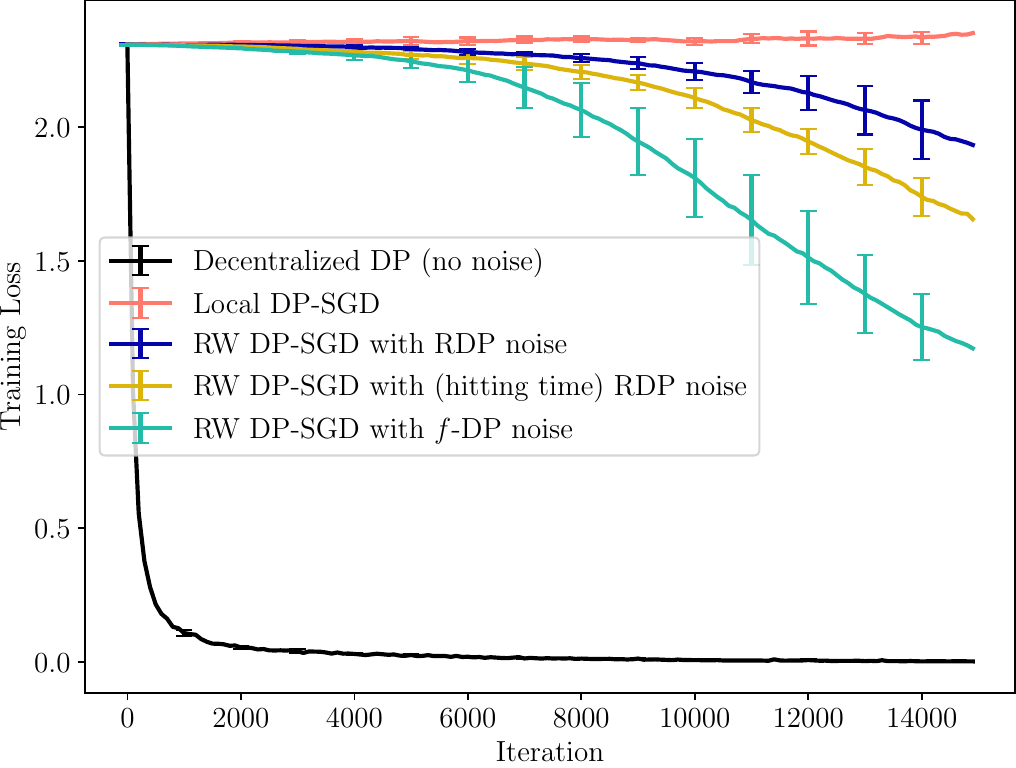} 
  }
  \subfigure[Testing accuracy.]{\includegraphics[width=0.45\textwidth]{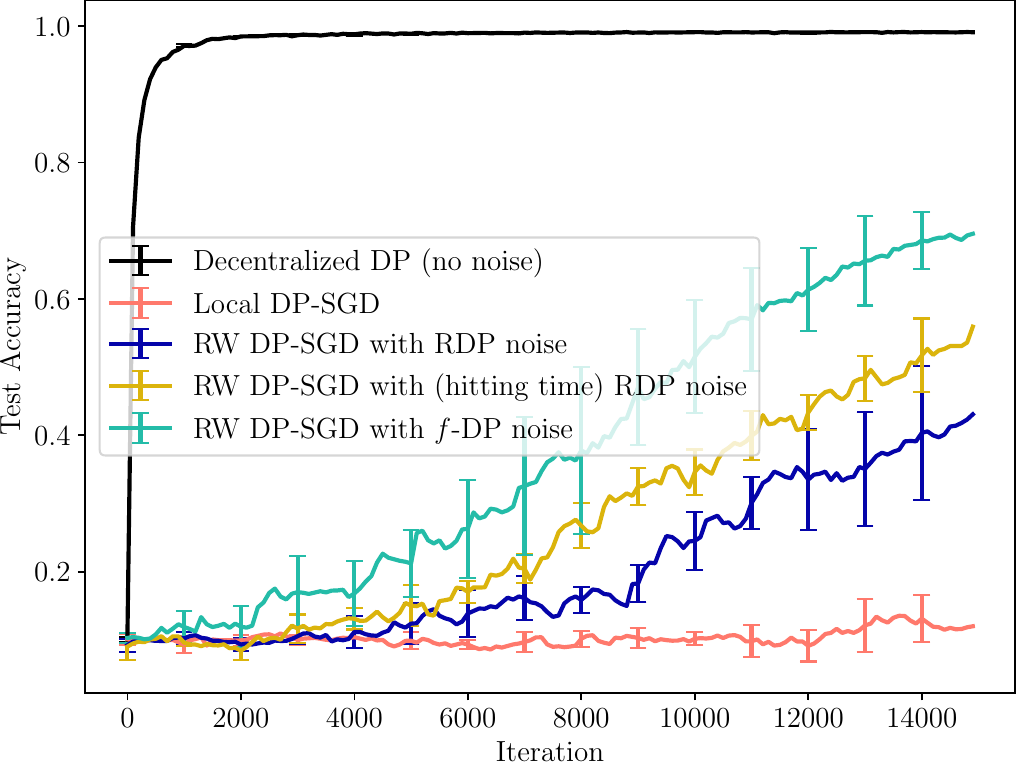}} \\
  \subfigure[Objective function.]{\includegraphics[width=0.45\textwidth]{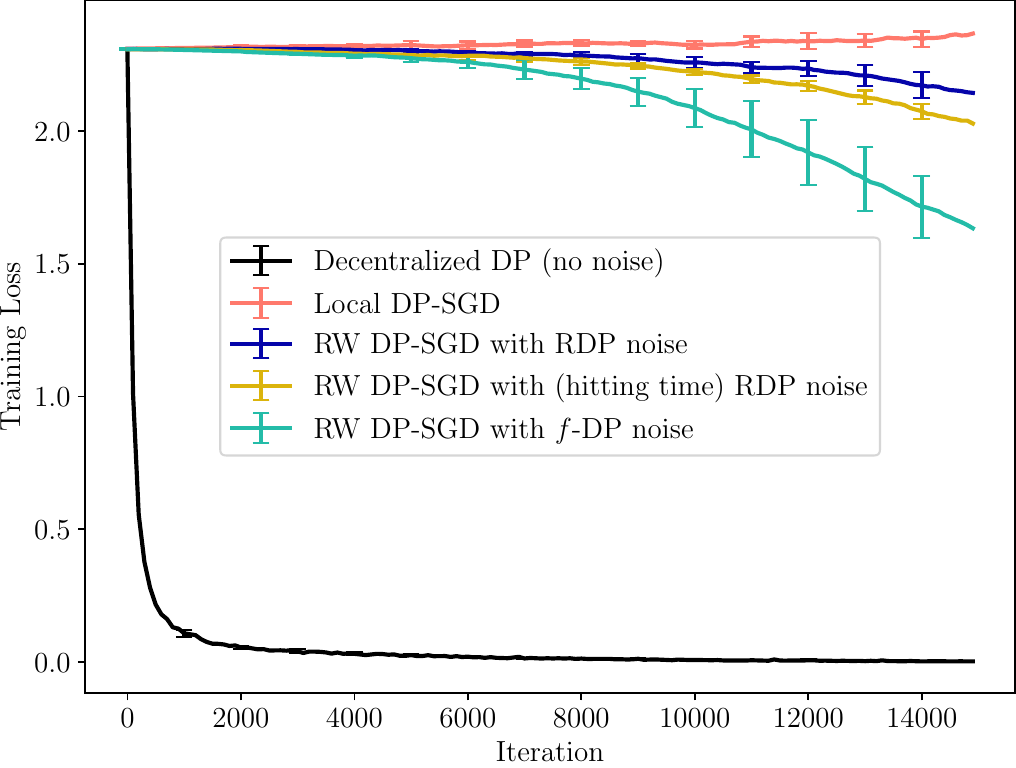} 
  }
  \subfigure[Testing accuracy.]{\includegraphics[width=0.45\textwidth]{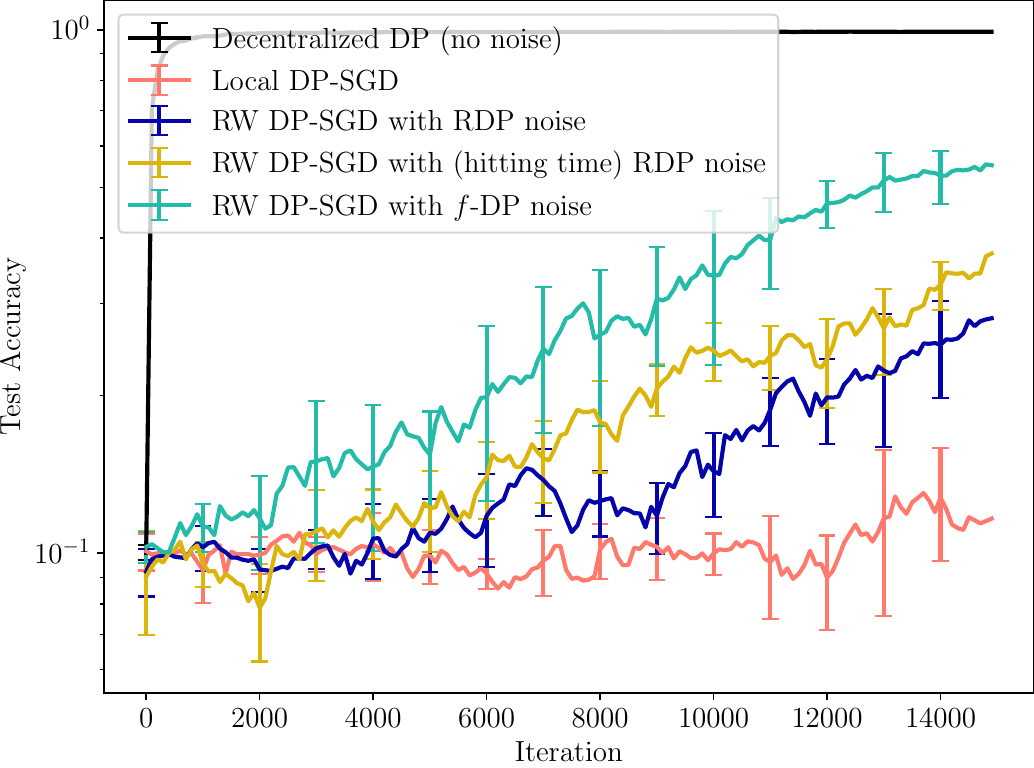}}
\caption{Private logistic regression on the MNIST dataset with $n = 2^{8}$ and $L = 1$. Results are shown for $\epsilon = 8$ (\textbf{top}), $\epsilon = 5$ (\textbf{middle}), and $\epsilon = 3$ (\textbf{bottom}).}
\label{fig:MNIST-all-eps}
\vspace{-0.1in}
\end{figure}

\paragraph{More results on related noises $\sigma^2$.}

For a detailed comparison, the computed noise variances are reported in Table~\ref{table:varaince}. As shown, our $f$-DP–based privacy accounting method consistently requires a smaller noise variance to achieve the same differential privacy guarantee. This improvement arises from the tighter nature of $f$-DP accounting, which provides a more accurate characterization of privacy loss than RDP-based methods. Consequently, the added noise causes less degradation to the non-private algorithm, leading to better overall utility.

\begin{table}[th]
\centering
\resizebox{\linewidth}{!}{%
\begin{tabular}{c c c c c c c c c} 
 \hline
 $n$&$\epsilon$ &$L$ & Data &Decent. DP-GD & Local DP-GD & RDP noise &  RDP noise (hitting time)  & $f$-DP noises\\ [0.5ex] 
 \hline\hline
  $2^{11}$ & 10 & 0.4& House  &  0  & 5.20637 & 0.81593  &0.74999  & \textbf{0.32468} \\
 $2^8$ & 10  &  0.4 & House & 0  &15.8605  & 1.21531 & 0.77421 & \textbf{0.74468} \\
  $2^8$ & 8  &  0.4 & House & 0  &19.0823  & 3.62771 & 3.20279 & \textbf{0.88906} \\
  $2^8$ & 5  &  0.4 & House & 0  &28.6394  & 5.54329 & 4.89283 & \textbf{1.30494} \\
  $2^8$ & 3  &  0.4 & House & 0  &45.4803  & 8.92327 & 7.87766 & \textbf{2.01376} \\
$2^8$ & 10  &  1 & MNIST & 0  &39.6513  & 2.98459 & 2.63431 & \textbf{1.86179} \\
$2^8$ & 8  &  1 & MNIST & 0  &47.7058  &  3.62772& 3.20279 & \textbf{2.22045} \\
$2^8$ & 5  &  1 & MNIST & 0  &71.5985  & 5.54275 & 4.89298 & \textbf{3.26196} \\
 \hline
\end{tabular}}
\caption{Computed $\sigma$ for all involved algorithms.}
\label{table:varaince}
\end{table}

\paragraph{Additional studies on $\delta$.}
In the main text, we set $\delta = 10^{-5}$ to align with standard DP practice, where $\delta$ is typically chosen to be smaller than the inverse of the total number of data points. For example, in the MNIST dataset, which contains 70,000 samples, our choice satisfies $\delta = 10^{-5} < 1/70{,}000$. This convention is widely adopted in prior work.

One potential concern is that this choice of $\delta$ corresponds to instance-level privacy, where the replacement of a single data sample is considered. In contrast, our main analysis focuses on user-level differential privacy, a common setting in FL, where the entire local dataset of a user may be changed. In such cases, $\delta$ should instead be defined relative to the number of users rather than the total number of data samples.

To address this concern, we conducted additional experiments using $\delta = 1/\text{(number of users)}$ instead of $1/\text{(number of total samples)}$. This adjustment results in a larger $\delta$, effectively relaxing the privacy constraint. As expected, this leads to improved utility. The updated results, summarized in the table below (to be included in the revised manuscript), show that our method continues to perform strongly under this revised setting. Notably, RW DP-SGD with $f$-DP noise consistently outperforms all baselines, confirming its effectiveness in achieving a favorable privacy–utility trade-off.

\begin{table}[h!]
\centering
\begin{tabular}{cc|c|cccc}
\toprule
\#Nodes & $\epsilon$ &
\makecell{No Noise\\SGD} &
\makecell{Local\\SGD} &
\makecell{RW DP-SGD\\(RDP)} &
\makecell{RW DP-SGD\\(Hitting RDP)} &
\makecell{RW DP-SGD\\($f$-DP)} \\
\midrule
$2^{11}$ & 3 & 0.9614 & 0.7407 & 0.8242 & 0.7829 & \textbf{0.9534} \\
$2^{11}$ & 5 & 0.9618 & 0.8196 & 0.8912 & 0.8673 & \textbf{0.9574} \\
$2^{8}$  & 3 & 0.9615 & 0.5445 & 0.8328 & 0.8298 & \textbf{0.9551} \\
$2^{8}$  & 5 & 0.9615 & 0.5969 & 0.9004 & 0.8994 & \textbf{0.9577} \\
\bottomrule
\end{tabular}
\caption{Comparison of test accuracies on MNIST dataset under varying $\epsilon$ and number of nodes.}
\label{tab:dp_sgd_comparison}
\end{table}

\subsection{Details and Results for Correlated Noises}
\label{appen:correlated}

\begin{figure}[h]
  \centering
    \includegraphics[height=3.4cm]{arxiv/fig/libsvm_model=libsvm_model_momentum=0_alpha=10.0_eps=3.pdf}
      \hspace{-0.02in}
  \includegraphics[height=3.4cm]{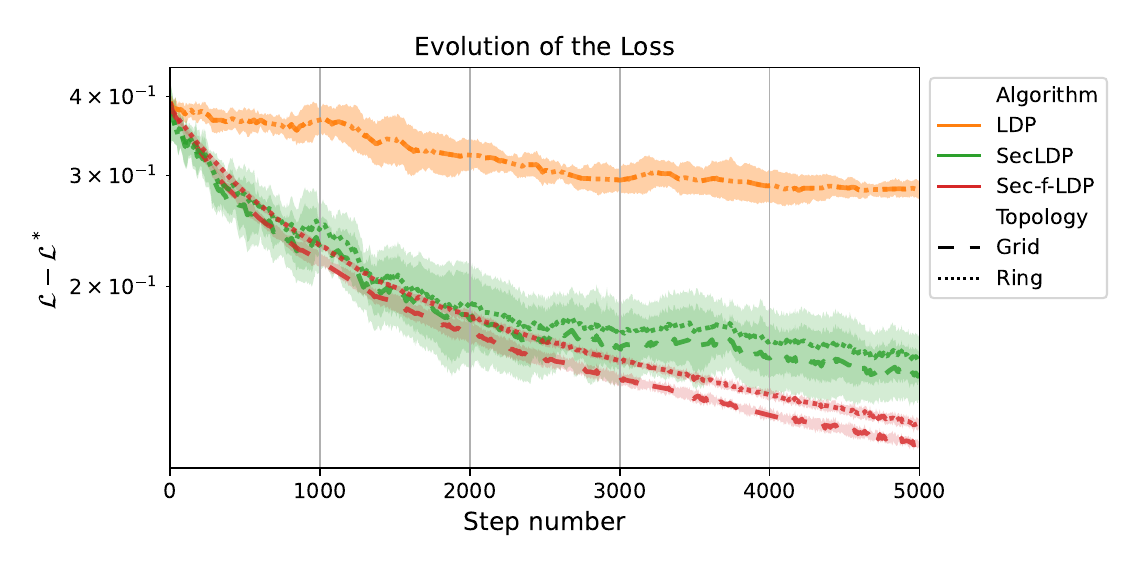}\\
  \includegraphics[height=3.4cm]{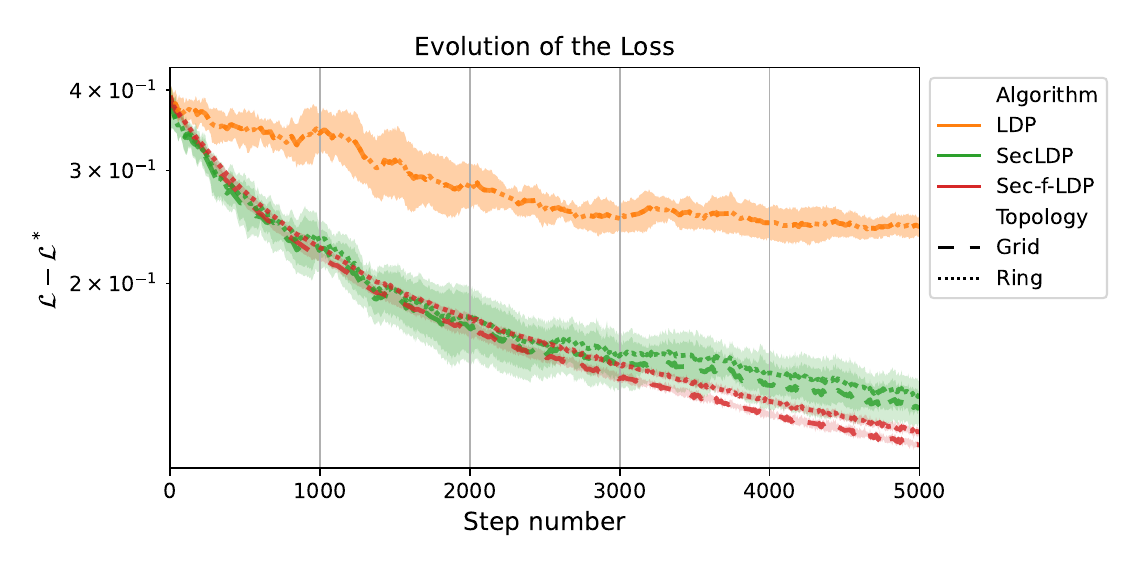}
      \hspace{-0.02in}
  \includegraphics[height=3.4cm]{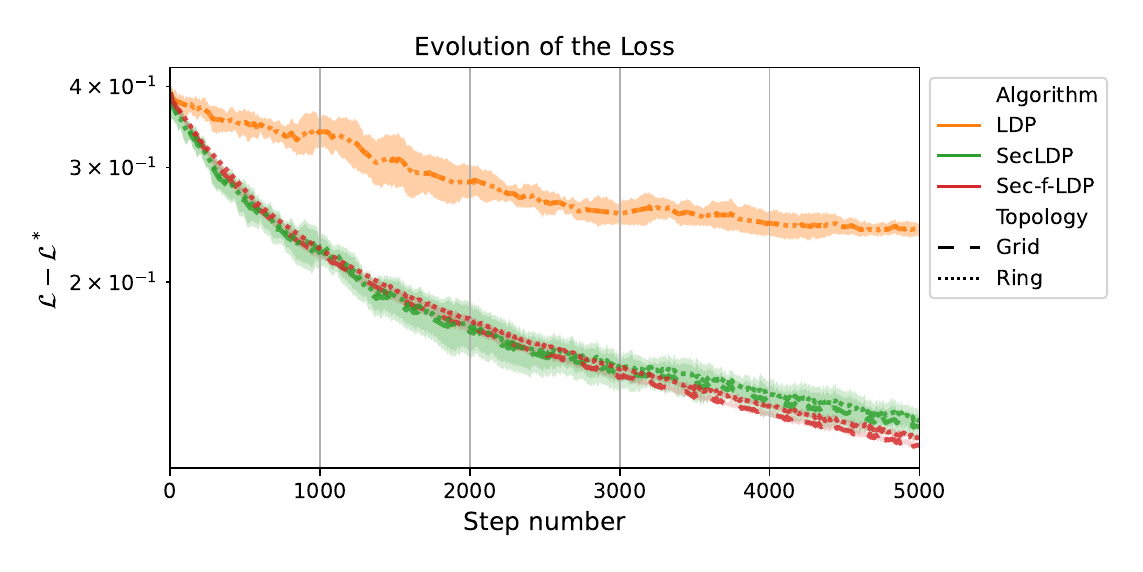}
  \caption{Optimality gap vs. iteration steps for Alg.~\ref{alg:decor} on logistic regression using different privacy accounting methods with $\epsilon = $3 (\textbf{left upper}), 5 (\textbf{right upper}), 7 (\textbf{left bottom}), and 10 (\textbf{right bottom}). Our methods perform well when the budget $\epsilon$ is small.
  }
    \vspace{-0.1in}
    \label{fig:related-noise-logistic-reg-more}
\end{figure}


Our experimental setup for correlated noise follows \cite{DBLP:conf/icml/AllouahKFJG24}. We evaluate our method on the MNIST dataset \citep{lecun2010mnist}, partitioned among 16 users to simulate a decentralized learning environment. Two network topologies with different levels of connectivity are considered: a ring and a 2D torus (grid). The Metropolis-Hastings mixing matrix is utilized for model averaging across these topologies. All experiments are conducted over 5000 communication rounds for logistic regression, and 1000 communication rounds for MNIST classification.
Each user updates their model parameters and exchanges information with neighbors as defined by the network topology. We repeat each experiment with four different random seeds to ensure reproducibility. We use their code in our evaluation: \url{https://github.com/elfirdoussilab1/DECOR}.
Figure~\ref{fig:related-noise-logistic-reg-more} shows the optimality gap versus iteration steps for Algorithm~\ref{alg:decor} on logistic regression with privacy budgets $\epsilon \in \{3, 5, 6, 7\}$.  As evident from the plots, our method performs particularly well in the low-privacy regime, achieving smaller optimality gaps when the privacy budget $\epsilon$ is small.

\section{Broader Impacts}
\label{appen:impact}

This paper proposes improved privacy accounting techniques for decentralized federated learning (FL), which can positively impact user data protection in sensitive applications such as healthcare, finance, and mobile systems. By reducing the amount of noise needed to achieve rigorous privacy guarantees, the proposed framework can help improve the utility of privacy-preserving models in real-world deployments. On the other hand, as with any privacy technology, there is a potential risk that stronger privacy accounting tools could be misused to justify under-protective practices or give a false sense of security if applied incorrectly. We encourage practitioners to use these methods with a clear understanding of their assumptions and limitations and recommend combining them with practical audits and monitoring tools in deployment.

\section{Limitations}
\label{appen:limitation}

   First, the privacy analysis under $f$-DP assumes access to precise or tightly approximated trade-off functions, which can be computationally intensive to evaluate, especially for large-scale systems or long training horizons. Second, our results rely on assumptions such as fixed communication graphs and Markovian random walks, which may not hold in dynamic or adversarial network settings. Third, while we provide both user-level and record-level analyses, we do not account for adaptive adversaries or scenarios with heterogeneous trust levels beyond pairwise secret sharing. Fourth, our experiments focus on synthetic graphs and mid-scale real datasets, and results may not fully generalize to highly non-i.i.d. or large-scale production environments. Finally, our approach does not currently support other privacy mechanisms beyond additive Gaussian noise, which limits its applicability to alternative designs such as clipping-free or post-processing–based schemes.

\clearpage
\newpage
\section*{NeurIPS Paper Checklist}

\begin{enumerate}

\item {\bf Claims}
    \item[] Question: Do the main claims made in the abstract and introduction accurately reflect the paper's contributions and scope?
    \item[] Answer: \answerYes{} 
    \item[] Justification: Contributions are summarized in the introduction section.
    \item[] Guidelines:
    \begin{itemize}
        \item The answer NA means that the abstract and introduction do not include the claims made in the paper.
        \item The abstract and/or introduction should clearly state the claims made, including the contributions made in the paper and important assumptions and limitations. A No or NA answer to this question will not be perceived well by the reviewers. 
        \item The claims made should match theoretical and experimental results, and reflect how much the results can be expected to generalize to other settings. 
        \item It is fine to include aspirational goals as motivation as long as it is clear that these goals are not attained by the paper. 
    \end{itemize}

\item {\bf Limitations}
    \item[] Question: Does the paper discuss the limitations of the work performed by the authors?
    \item[] Answer: \answerYes{} 
    \item[] Justification: Provided in Appendix \ref{appen:limitation}.
    \item[] Guidelines:
    \begin{itemize}
        \item The answer NA means that the paper has no limitation while the answer No means that the paper has limitations, but those are not discussed in the paper. 
        \item The authors are encouraged to create a separate "Limitations" section in their paper.
        \item The paper should point out any strong assumptions and how robust the results are to violations of these assumptions (e.g., independence assumptions, noiseless settings, model well-specification, asymptotic approximations only holding locally). The authors should reflect on how these assumptions might be violated in practice and what the implications would be.
        \item The authors should reflect on the scope of the claims made, e.g., if the approach was only tested on a few datasets or with a few runs. In general, empirical results often depend on implicit assumptions, which should be articulated.
        \item The authors should reflect on the factors that influence the performance of the approach. For example, a facial recognition algorithm may perform poorly when image resolution is low or images are taken in low lighting. Or a speech-to-text system might not be used reliably to provide closed captions for online lectures because it fails to handle technical jargon.
        \item The authors should discuss the computational efficiency of the proposed algorithms and how they scale with dataset size.
        \item If applicable, the authors should discuss possible limitations of their approach to address problems of privacy and fairness.
        \item While the authors might fear that complete honesty about limitations might be used by reviewers as grounds for rejection, a worse outcome might be that reviewers discover limitations that aren't acknowledged in the paper. The authors should use their best judgment and recognize that individual actions in favor of transparency play an important role in developing norms that preserve the integrity of the community. Reviewers will be specifically instructed to not penalize honesty concerning limitations.
    \end{itemize}

\item {\bf Theory assumptions and proofs}
    \item[] Question: For each theoretical result, does the paper provide the full set of assumptions and a complete (and correct) proof?
    \item[] Answer: \answerYes{} 
    \item[] Justification: Each theorem is accompanied by a complete set of assumptions. Full formal proofs are included in the appendix. All referenced lemmas and external results are properly cited.

    \item[] Guidelines:
    \begin{itemize}
        \item The answer NA means that the paper does not include theoretical results. 
        \item All the theorems, formulas, and proofs in the paper should be numbered and cross-referenced.
        \item All assumptions should be clearly stated or referenced in the statement of any theorems.
        \item The proofs can either appear in the main paper or the supplemental material, but if they appear in the supplemental material, the authors are encouraged to provide a short proof sketch to provide intuition. 
        \item Inversely, any informal proof provided in the core of the paper should be complemented by formal proofs provided in appendix or supplemental material.
        \item Theorems and Lemmas that the proof relies upon should be properly referenced. 
    \end{itemize}

    \item {\bf Experimental result reproducibility}
    \item[] Question: Does the paper fully disclose all the information needed to reproduce the main experimental results of the paper to the extent that it affects the main claims and/or conclusions of the paper (regardless of whether the code and data are provided or not)?
    \item[] Answer: \answerYes{} 
    \item[] Justification: Experimental details are in the appendix.
    \item[] Guidelines:
    \begin{itemize}
        \item The answer NA means that the paper does not include experiments.
        \item If the paper includes experiments, a No answer to this question will not be perceived well by the reviewers: Making the paper reproducible is important, regardless of whether the code and data are provided or not.
        \item If the contribution is a dataset and/or model, the authors should describe the steps taken to make their results reproducible or verifiable. 
        \item Depending on the contribution, reproducibility can be accomplished in various ways. For example, if the contribution is a novel architecture, describing the architecture fully might suffice, or if the contribution is a specific model and empirical evaluation, it may be necessary to either make it possible for others to replicate the model with the same dataset, or provide access to the model. In general. releasing code and data is often one good way to accomplish this, but reproducibility can also be provided via detailed instructions for how to replicate the results, access to a hosted model (e.g., in the case of a large language model), releasing of a model checkpoint, or other means that are appropriate to the research performed.
        \item While NeurIPS does not require releasing code, the conference does require all submissions to provide some reasonable avenue for reproducibility, which may depend on the nature of the contribution. For example
        \begin{enumerate}
            \item If the contribution is primarily a new algorithm, the paper should make it clear how to reproduce that algorithm.
            \item If the contribution is primarily a new model architecture, the paper should describe the architecture clearly and fully.
            \item If the contribution is a new model (e.g., a large language model), then there should either be a way to access this model for reproducing the results or a way to reproduce the model (e.g., with an open-source dataset or instructions for how to construct the dataset).
            \item We recognize that reproducibility may be tricky in some cases, in which case authors are welcome to describe the particular way they provide for reproducibility. In the case of closed-source models, it may be that access to the model is limited in some way (e.g., to registered users), but it should be possible for other researchers to have some path to reproducing or verifying the results.
        \end{enumerate}
    \end{itemize}

\item {\bf Open access to data and code}
    \item[] Question: Does the paper provide open access to the data and code, with sufficient instructions to faithfully reproduce the main experimental results, as described in supplemental material?
    \item[] Answer: \answerYes{} 
    \item[] Justification: We include our anonymized code as part of the supplementary material in this submission. 

    \item[] Guidelines:
    \begin{itemize}
        \item The answer NA means that paper does not include experiments requiring code.
        \item Please see the NeurIPS code and data submission guidelines (\url{https://nips.cc/public/guides/CodeSubmissionPolicy}) for more details.
        \item While we encourage the release of code and data, we understand that this might not be possible, so “No” is an acceptable answer. Papers cannot be rejected simply for not including code, unless this is central to the contribution (e.g., for a new open-source benchmark).
        \item The instructions should contain the exact command and environment needed to run to reproduce the results. See the NeurIPS code and data submission guidelines (\url{https://nips.cc/public/guides/CodeSubmissionPolicy}) for more details.
        \item The authors should provide instructions on data access and preparation, including how to access the raw data, preprocessed data, intermediate data, and generated data, etc.
        \item The authors should provide scripts to reproduce all experimental results for the new proposed method and baselines. If only a subset of experiments are reproducible, they should state which ones are omitted from the script and why.
        \item At submission time, to preserve anonymity, the authors should release anonymized versions (if applicable).
        \item Providing as much information as possible in supplemental material (appended to the paper) is recommended, but including URLs to data and code is permitted.
    \end{itemize}

\item {\bf Experimental setting/details}
    \item[] Question: Does the paper specify all the training and test details (e.g., data splits, hyperparameters, how they were chosen, type of optimizer, etc.) necessary to understand the results?
    \item[] Answer: \answerYes{} 
    \item[] Justification: The paper includes a detailed description of the experimental setup, including dataset information, data splits, and the decentralized communication graph structures. Full implementation details are provided in the supplementary material and code.
    \item[] Guidelines:
    \begin{itemize}
        \item The answer NA means that the paper does not include experiments.
        \item The experimental setting should be presented in the core of the paper to a level of detail that is necessary to appreciate the results and make sense of them.
        \item The full details can be provided either with the code, in appendix, or as supplemental material.
    \end{itemize}

\item {\bf Experiment statistical significance}
    \item[] Question: Does the paper report error bars suitably and correctly defined or other appropriate information about the statistical significance of the experiments?
    \item[] Answer: \answerYes{} 
    \item[] Justification: The main experimental results are reported with standard deviation error bars across five or four random seeds.

    \item[] Guidelines:
    \begin{itemize}
        \item The answer NA means that the paper does not include experiments.
        \item The authors should answer "Yes" if the results are accompanied by error bars, confidence intervals, or statistical significance tests, at least for the experiments that support the main claims of the paper.
        \item The factors of variability that the error bars are capturing should be clearly stated (for example, train/test split, initialization, random drawing of some parameter, or overall run with given experimental conditions).
        \item The method for calculating the error bars should be explained (closed form formula, call to a library function, bootstrap, etc.)
        \item The assumptions made should be given (e.g., Normally distributed errors).
        \item It should be clear whether the error bar is the standard deviation or the standard error of the mean.
        \item It is OK to report 1-sigma error bars, but one should state it. The authors should preferably report a 2-sigma error bar than state that they have a 96\% CI, if the hypothesis of Normality of errors is not verified.
        \item For asymmetric distributions, the authors should be careful not to show in tables or figures symmetric error bars that would yield results that are out of range (e.g. negative error rates).
        \item If error bars are reported in tables or plots, The authors should explain in the text how they were calculated and reference the corresponding figures or tables in the text.
    \end{itemize}

\item {\bf Experiments compute resources}
    \item[] Question: For each experiment, does the paper provide sufficient information on the computer resources (type of compute workers, memory, time of execution) needed to reproduce the experiments?
    \item[] Answer: \answerYes{} 
    \item[] Justification: Detailed runtime and computational resource information is provided in Appendix H. 

    \item[] Guidelines:
    \begin{itemize}
        \item The answer NA means that the paper does not include experiments.
        \item The paper should indicate the type of compute workers CPU or GPU, internal cluster, or cloud provider, including relevant memory and storage.
        \item The paper should provide the amount of compute required for each of the individual experimental runs as well as estimate the total compute. 
        \item The paper should disclose whether the full research project required more compute than the experiments reported in the paper (e.g., preliminary or failed experiments that didn't make it into the paper). 
    \end{itemize}
    
\item {\bf Code of ethics}
    \item[] Question: Does the research conducted in the paper conform, in every respect, with the NeurIPS Code of Ethics \url{https://neurips.cc/public/EthicsGuidelines}?
    \item[] Answer: \answerYes{} 
    \item[] Justification: The research adheres to the NeurIPS Code of Ethics. It promotes privacy-preserving machine learning through rigorous theoretical and empirical analysis without collecting or releasing sensitive data. All datasets used are publicly available and commonly used in the literature, and no human subjects or personally identifiable information are involved. The work contributes positively to the field by improving transparency, reproducibility, and safety in decentralized federated learning.

    \item[] Guidelines:
    \begin{itemize}
        \item The answer NA means that the authors have not reviewed the NeurIPS Code of Ethics.
        \item If the authors answer No, they should explain the special circumstances that require a deviation from the Code of Ethics.
        \item The authors should make sure to preserve anonymity (e.g., if there is a special consideration due to laws or regulations in their jurisdiction).
    \end{itemize}

\item {\bf Broader impacts}
    \item[] Question: Does the paper discuss both potential positive societal impacts and negative societal impacts of the work performed?
    \item[] Answer: \answerYes{} 
    \item[] Justification: Provided in Appendix \ref{appen:impact}.

    \item[] Guidelines:
    \begin{itemize}
        \item The answer NA means that there is no societal impact of the work performed.
        \item If the authors answer NA or No, they should explain why their work has no societal impact or why the paper does not address societal impact.
        \item Examples of negative societal impacts include potential malicious or unintended uses (e.g., disinformation, generating fake profiles, surveillance), fairness considerations (e.g., deployment of technologies that could make decisions that unfairly impact specific groups), privacy considerations, and security considerations.
        \item The conference expects that many papers will be foundational research and not tied to particular applications, let alone deployments. However, if there is a direct path to any negative applications, the authors should point it out. For example, it is legitimate to point out that an improvement in the quality of generative models could be used to generate deepfakes for disinformation. On the other hand, it is not needed to point out that a generic algorithm for optimizing neural networks could enable people to train models that generate Deepfakes faster.
        \item The authors should consider possible harms that could arise when the technology is being used as intended and functioning correctly, harms that could arise when the technology is being used as intended but gives incorrect results, and harms following from (intentional or unintentional) misuse of the technology.
        \item If there are negative societal impacts, the authors could also discuss possible mitigation strategies (e.g., gated release of models, providing defenses in addition to attacks, mechanisms for monitoring misuse, mechanisms to monitor how a system learns from feedback over time, improving the efficiency and accessibility of ML).
    \end{itemize}
    
\item {\bf Safeguards}
    \item[] Question: Does the paper describe safeguards that have been put in place for responsible release of data or models that have a high risk for misuse (e.g., pretrained language models, image generators, or scraped datasets)?
    \item[] Answer: \answerNA{} 
    \item[] Justification: The paper does not involve the release of pretrained models, generative systems, or web-scraped datasets. It focuses on theoretical analysis and empirical evaluation of privacy accounting techniques in decentralized federated learning, which poses no foreseeable risk of misuse or dual-use.
    \item[] Guidelines:
    \begin{itemize}
        \item The answer NA means that the paper poses no such risks.
        \item Released models that have a high risk for misuse or dual-use should be released with necessary safeguards to allow for controlled use of the model, for example by requiring that users adhere to usage guidelines or restrictions to access the model or implementing safety filters. 
        \item Datasets that have been scraped from the Internet could pose safety risks. The authors should describe how they avoided releasing unsafe images.
        \item We recognize that providing effective safeguards is challenging, and many papers do not require this, but we encourage authors to take this into account and make a best faith effort.
    \end{itemize}

\item {\bf Licenses for existing assets}
    \item[] Question: Are the creators or original owners of assets (e.g., code, data, models), used in the paper, properly credited and are the license and terms of use explicitly mentioned and properly respected?
    \item[] Answer: \answerYes{} 
    \item[] Justification: All the code packages or datasets are well cited.
    \item[] Guidelines:
    \begin{itemize}
        \item The answer NA means that the paper does not use existing assets.
        \item The authors should cite the original paper that produced the code package or dataset.
        \item The authors should state which version of the asset is used and, if possible, include a URL.
        \item The name of the license (e.g., CC-BY 4.0) should be included for each asset.
        \item For scraped data from a particular source (e.g., website), the copyright and terms of service of that source should be provided.
        \item If assets are released, the license, copyright information, and terms of use in the package should be provided. For popular datasets, \url{paperswithcode.com/datasets} has curated licenses for some datasets. Their licensing guide can help determine the license of a dataset.
        \item For existing datasets that are re-packaged, both the original license and the license of the derived asset (if it has changed) should be provided.
        \item If this information is not available online, the authors are encouraged to reach out to the asset's creators.
    \end{itemize}

\item {\bf New assets}
    \item[] Question: Are new assets introduced in the paper well documented and is the documentation provided alongside the assets?
    \item[] Answer: \answerNA{} 
    \item[] Justification: The paper does not release new assets.
    \item[] Guidelines:
    \begin{itemize}
        \item The answer NA means that the paper does not release new assets.
        \item Researchers should communicate the details of the dataset/code/model as part of their submissions via structured templates. This includes details about training, license, limitations, etc. 
        \item The paper should discuss whether and how consent was obtained from people whose asset is used.
        \item At submission time, remember to anonymize your assets (if applicable). You can either create an anonymized URL or include an anonymized zip file.
    \end{itemize}

\item {\bf Crowdsourcing and research with human subjects}
    \item[] Question: For crowdsourcing experiments and research with human subjects, does the paper include the full text of instructions given to participants and screenshots, if applicable, as well as details about compensation (if any)? 
    \item[] Answer: \answerNA{} 
    \item[] Justification: The paper does not involve crowdsourcing nor research with human subjects.
    \item[] Guidelines:
    \begin{itemize}
        \item The answer NA means that the paper does not involve crowdsourcing nor research with human subjects.
        \item Including this information in the supplemental material is fine, but if the main contribution of the paper involves human subjects, then as much detail as possible should be included in the main paper. 
        \item According to the NeurIPS Code of Ethics, workers involved in data collection, curation, or other labor should be paid at least the minimum wage in the country of the data collector. 
    \end{itemize}

\item {\bf Institutional review board (IRB) approvals or equivalent for research with human subjects}
    \item[] Question: Does the paper describe potential risks incurred by study participants, whether such risks were disclosed to the subjects, and whether Institutional Review Board (IRB) approvals (or an equivalent approval/review based on the requirements of your country or institution) were obtained?
    \item[] Answer: \answerNA{} 
    \item[] Justification: The paper does not involve crowdsourcing nor research with human subjects.
    \item[] Guidelines:
    \begin{itemize}
        \item The answer NA means that the paper does not involve crowdsourcing nor research with human subjects.
        \item Depending on the country in which research is conducted, IRB approval (or equivalent) may be required for any human subjects research. If you obtained IRB approval, you should clearly state this in the paper. 
        \item We recognize that the procedures for this may vary significantly between institutions and locations, and we expect authors to adhere to the NeurIPS Code of Ethics and the guidelines for their institution. 
        \item For initial submissions, do not include any information that would break anonymity (if applicable), such as the institution conducting the review.
    \end{itemize}

\item {\bf Declaration of LLM usage}
    \item[] Question: Does the paper describe the usage of LLMs if it is an important, original, or non-standard component of the core methods in this research? Note that if the LLM is used only for writing, editing, or formatting purposes and does not impact the core methodology, scientific rigorousness, or originality of the research, declaration is not required.
    \item[] Answer: \answerNA{} 
    \item[] Justification: The core method development in this research does not involve LLMs as any important, original, or non-standard components
    \item[] Guidelines:
    \begin{itemize}
        \item The answer NA means that the core method development in this research does not involve LLMs as any important, original, or non-standard components.
        \item Please refer to our LLM policy (\url{https://neurips.cc/Conferences/2025/LLM}) for what should or should not be described.
    \end{itemize}

\end{enumerate}

\end{document}